\newcommand{\mathbbm}[1]{\text{\usefont{U}{bbm}{m}{n}#1}} 
\DeclareMathOperator{\Tr}{tr}
\newtheorem{lemma}{Lemma}
\newtheorem{theorem}{Theorem}
\newtheorem{remark}{Remark}
\newtheorem{assumption}{Assumption}
\newtheorem{corollary}{Corollary}
\newtheorem{proposition}{Proposition}
\newcommand{\ip}[2]{\left\langle #1, #2\right\rangle }
\newcommand{\sumij}{\sum_{(i,j)\ne (i^*,j^*)}}
\newcommand{\lem}{\bar{\mathcal{L}}_{em}}
\newcommand{\KL}{\text{KL}}
\newcommand{\backt}{t^{\leftarrow}}
\newcommand{\F}{F}
\newcommand{\per}{{\rm per}}
\title{Evaluating the design space of diffusion-based generative models}
\author{%
  Yuqing Wang \\ Simons Institute\\ University of California, Berkeley\\ \texttt{yq.wang@berkeley.edu} \And Ye He \\ School of Mathematics\\ Georgia Institute of Technology\\ \texttt{yhe367@gatech.edu} \And Molei Tao \\School of Mathematics\\ Georgia Institute of Technology\\ \texttt{mtao@gatech.edu}
}
\begin{document}

\maketitle

\begin{abstract}
Most existing theoretical investigations of the accuracy of diffusion models, albeit significant, assume the score function has been approximated to a certain accuracy, and then use this a priori bound to control the error of generation. This article instead provides a first quantitative understanding of the whole generation process, i.e., both training and sampling. More precisely, it conducts a non-asymptotic convergence analysis of denoising score matching under gradient descent. In addition, a refined sampling error analysis for variance exploding models is also provided. The combination of these two results yields a full error analysis, which elucidates (again, but this time theoretically) how to design the training and sampling processes for effective generation. For instance, our theory implies a preference toward noise distribution and loss weighting in training that qualitatively agree with the ones used in \citet{karras2022elucidating}. It also provides perspectives on the choices of time and variance schedules in sampling: when the score is well trained, the design in \citet{song2020score} is more preferable, but when it is less trained, the design in \citet{karras2022elucidating} becomes more preferable.
\end{abstract}

\vskip -0.2in
\section{Introduction}
\vskip -0.02in

Diffusion models became a very popular generative modeling approach in various domains, including computer vision \citep{dhariwal2021diffusion,baranchuk2022labelefficient,ho2022cascaded,ho2022video,meng2022sdedit,wu2023medsegdiff}, natural language processing \citep{austin2021structured,li2022diffusion,lou2023discrete}, various modeling tasks \citep{chen2021wavegrad,ramesh2022hierarchical,yoon2021adversarial}, and medical, biological, chemical and physical applications \citep{anand2022protein,chung2022score,schneuing2022structure,watson2023novo,duan2023accurate,zhu2024quantum} (see more surveys in \citep{yang2023diffusion,cao2024survey,chen2024overview}). 
\citet{karras2022elucidating} provided a unified empirical understanding of the derivations of model parameters, leading to 
new state-of-the-art performance. \citet{karras2023analyzing} further upgraded the model design by revamping the network architectures and replacing the weights of the network with an exponential moving average. As diffusion models gain wider usage, efforts to understand and enhance their generation capability 
become increasingly meaningful.

In fact, a rapidly increasing number of theoretical works already analyzed various aspects of diffusion models \citep{lee2022convergence, de2022convergence,yang2022convergence, chen2022sampling,chen2023improved,benton2024nearly,conforti2023score,block2020generative,chen2023score,oko2023diffusion,shah2023learning,han2024neural}.  
Among them, a majority \citep{lee2022convergence, de2022convergence,yang2022convergence, chen2022sampling,chen2023improved,benton2024nearly,conforti2023score} focus on sampling/inference; more precisely, they assume the score error is within a certain accuracy threshold (i.e. the score function is well trained in some sense), and analyze the discrepancy between the distribution of the generated samples and the true one. 
Meanwhile, there are a handful of results \citep{shah2023learning,han2024neural} that aim at understanding different facets of the training process.  
See more detailed discussions of existing theoretical works in Section~\ref{subsec:related_works}.

However, as indicated in \citet{karras2022elucidating}, the performance of diffusion models also relies on the interaction between design components in both training and sampling, such as the noise distribution, weighting, time and variance schedules, etc. 
While focusing individually on either the training or generation process provides valuable insights, a holistic quantification of the actual generation capability can only be obtained when both processes are considered altogether. 
Therefore, motivated by obtaining {\it deeper theoretical understanding of how to maximize the performance of diffusion models}, this paper aims at establishing a full generation error analysis, combining both the optimization and sampling processes, to partially investigate the design space of diffusion models.

More precisely, we focus on the variance exploding setting~\citep{song2020score}, which is also the foundation of continuous forward dynamics in~\citet{karras2022elucidating}. Our main contributions are summarized as follows:
\begin{itemize}[topsep=0pt,parsep=0pt,partopsep=0pt, align=left,leftmargin=*,widest={3}]
\vspace{-0.05in}
    \item  For denoising score matching objective, we establish the exponential convergence of its gradient descent training dynamics (Theorem~\ref{thm:convergence_GD_no_interpolation}). We  develop a new method for proving a key lower bound of gradient under the semi-smoothness framework~\citep{allen2019convergence,li2018learning,zou2019improved,zou2020gradient}. 
    \item We extend the sampling error analysis in \citep{benton2024nearly} to the variance exploding case (Theorem~\ref{thm:VESDE_generation}), under only the finite second moment assumption (Assumption~\ref{assump:finite_second_moment}) of the data distribution. Our result applies to various variance and time schedules, and implies a sharp almost linear complexity in terms of data dimension under optimal time schedule. 
    \item We conduct a full error analysis of diffusion models, combining training and sampling (Theorem~\ref{cor:full_error_analysis}).
    \item We qualitatively derive the theory for choosing the noise distribution and weighting in the training objective, which coincides with \citet{karras2022elucidating} (Section~\ref{subsec:weighting}). More precisely, our theory implies that the optimal rate is obtained when the total weighting exhibits a similar ``bell-shaped'' pattern used in~\citet{karras2022elucidating}.
    \item We develop a theory of choosing time and variance schedules based on both training and sampling  (Section~\ref{subsec:time variance schedule}). Indeed, when the score error dominates, i.e., the neural network is less trained and not very close to the true score, polynomial schedule~\citep{karras2022elucidating} ensures smaller error; when sampling error dominates, i.e., the score function is well approximated, exponential schedule~\citep{song2020score} is preferred.                                             
\end{itemize}
\begin{wrapfigure}{r}{0.5\textwidth}
\vskip -0.15in
    \centering
    \includegraphics[width=0.55\textwidth]{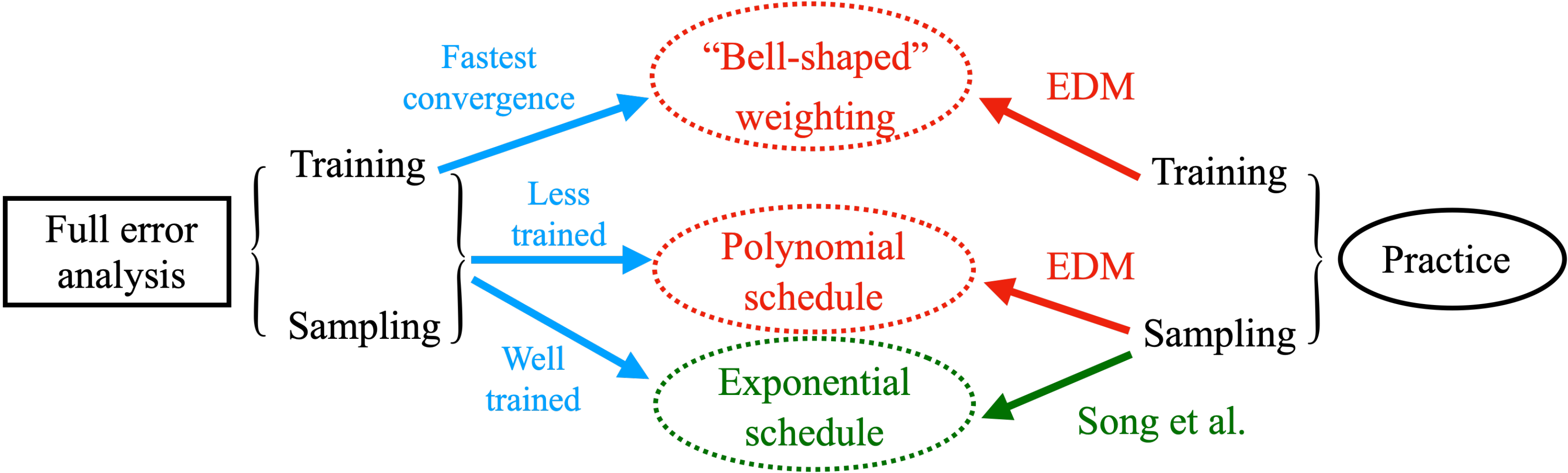}    
    \vspace{-0.25in}
    \caption{Structure of this paper.}
    \label{fig:paper_structure}
    \vspace{-0.15in}
\end{wrapfigure}

Conclusions and limitations are in Appendix~\ref{app:conclusions_limitations}.

\vspace{-0.1in}

\subsection{Related works}
\label{subsec:related_works}
\vspace{-0.1in}
\textbf{Sampling.} There has been significant progress in quantifying the sampling error of the generation process of diffusion models, assuming the score function is already approximated within certain accuracy. Most existing works \citep[e.g.,][]{chen2022sampling,chen2023improved,benton2024nearly} focused on the variance preserving (VP) SDEs, whose discretizations correspond to DDPM. For example, \citet{benton2024nearly} is one of the latest results for the VPSDE-based diffusion models, and it only needs a very mild assumption: the data distribution has finite second moment. The iteration complexity is shown to be almost linear in the data dimension and polynomial in the inverse accuracy, under exponential time schedule. However, a limited amount of works~\citep{lee2022convergence,gao2024convergence,yangconvergence}  analyzed the variance exploding (VE) SDEs, whose discretizations correspond to Score Matching with Langevin dynamics (SMLD)~\citep{song2019generative,song2020score}. To our best knowledge, \citet{yangconvergence} obtained the best result so far for VE assuming the data distribution has bounded support: the iteration complexity is polynomial in the data dimension and the inverse accuracy, under the uniform time schedule. In contrast, our work only assumed that the data distribution has finite second moment, and by extending the stochastic localization approach in \citep{benton2024nearly} to VESDE, we obtain an iteration complexity that is polynomial in the data dimension and the inverse accuracy, under more general time schedules as well. Note the improved complexity in terms of the inverse accuracy and the data dimension dependencies; in fact, under the exponential time schedule, our complexity is almost linear in the data dimension, which recovers the state-of-the-art result for VPSDE-based diffusion models. 

\textbf{Training.} To our best knowledge, the only works that quantify the training process of the diffusion models are \citet{shah2023learning} and~\citet{han2024neural}. \citet{shah2023learning} employed the DDPM formulation and considered data distributions as mixtures of two spherical Gaussians with various scales of separation, together with $K$ spherical Gaussians with a warm start. 
Then the score function can be analytically solved, and they modeled it in a teacher-student framework solved by gradient descent. They also provided the sample complexity bound under these specific settings. In contrast, our results work for general data distributions for which the true score is unknown, and training analysis is combined with sampling analysis. \citet{han2024neural} considered the GD training of a two-layer ReLU neural network with the last layer fixed, and used the neural tangent kernel (NTK) approach to establish a first result on generalization error. 
They uniformly sampled the time points in the training objective, assumed that the Gram matrix of the kernel is away from 0 (implying a lower bound on the gradient), and lacked a detailed non-asymptotic characterization of the training process. In contrast, we use the deep ReLU network with $L$ layers  trained by GD and prove instead of assuming that the gradient is lower bounded by the objective function.
Moreover, we obtain a non-asymptotic 
bound for the optimization error, and our bound is valid for general time and variance schedules, which allows us to obtain a full error analysis.

\vskip -0.05in

\textbf{Convergence of neural networks training.} The convergence analysis of neural networks under gradient descent has been a longstanding challenge and has been developed into an extensive field. Here we will only focus on results mostly related to the techniques used in this paper. 
One line of them is approaches directly based on neural tangent kernel (NTK) \citep{du2018gradient,du2019gradient,arora2019fine,song2019quadratic,liu2022provable}. 
However, existing works in this direction focus more on either scalar output, or vector output but with only one layer trained under two-layer networks, 
which is insufficient for diffusion models. Another line of research also considers overparameterized models in a regime analogous to NTK, though not necessarily explicitly resorting to kernels. Instead, it directly quantifies the lower bound of the gradient~\citep{allen2019convergence,li2018learning,allen2019convergencerecurr,zou2019improved,zou2020gradient} and uses a semi-smoothness property to prove exponential convergence. 
Our results align with the latter line, but we develop a new method for proving the lower bound of the gradient and adopt assumptions that are closer to the setting of diffusion models. See more discussions in Section~\ref{subsec:training}.

\vspace{-0.1in}

\subsection{Notations} 
\vspace{-0.1in}


We denote $\|\cdot\|$ to be the $\ell^2$ norm for both vectors and matrices, and $\|\cdot\|_F$ to be the Frobenius norm. For the discrete time points, we use $t_i$ to denote the time point for forward dynamics and $\backt_i$ for backward dynamics. For the order of terms, we follow the theoretical computer science convention to use $\mathcal{O}(\cdot),\Theta(\cdot),\Omega(\cdot)$. We also denote $f \lesssim g$ if $f\le C g$ for some universal constant $C$.

\vspace{-0.1in}

\section{Basics of diffusion-based generative models}
\label{sec:basics}
\vspace{-0.1in}

In this section, we will introduce the basic forward and backward dynamics of diffusion models, as well as the denoising score matching setting under which a model is trained.

\vspace{-0.1in}

\subsection{Forward and backward processes}
Consider a forward diffusion process that pushes an initial distribution $P_0$ to Gaussian
\vspace{-0.1in}
\begin{align}
\label{eqn:general_forward_SDE}
    dX_t=-f_t\, X_t\,dt+\sqrt{2\sigma_t^2}\, dW_t,
\end{align}
\vskip -0.1in
where $dW_t$ is the Brownian motion, $X_t$ is a $d$-dim. random variable, and $X_t\sim P_t$. Under mild assumptions, the process can be reversed and the backward process is defined as follows
\vskip -0.2in
\begin{align}
\label{eqn:general_backward_SDE}
    dY_t=(f_{T-t}\,Y_t+2\sigma_{T-t}^2\nabla\log p_{T-t}(Y_t))\,dt+\sqrt{2\sigma_{T-t}^2}\,d\tilde{W}_t,
\end{align} 
\vskip -0.1in
where $Y_0\sim P_T$, and $p_t$ is the density of $P_t$. Then $Y_{T-t}$ and $X_t$ have the same distribution with density $p_t$ \citep{anderson1982reverse}, which means the dynamics~\eqref{eqn:general_backward_SDE} will push (near) Gaussian distribution back to (nearly) the initial distribution $P_0$. To apply the backward dynamics for generative modeling, the main challenge lies in approximating the term $\nabla\log p_{T-t}(Y_t)$ which is called {\it score function}. It is common to use a neural network to approximate this score function and learn it via the forward dynamics~\eqref{eqn:general_forward_SDE}; then, samples can be generated by simulating the backward dynamics~\eqref{eqn:general_backward_SDE}.

\vspace{-0.1in}

\subsection{The training of score function via denoising score matching}
\vspace{-0.1in}

In order to learn the score function, a natural starting point is to consider the following score matching objective \citep[e.g.,][]{hyvarinen2005estimation}
\vspace{-0.15in}
\begin{align}
\label{fct:L_conti}
    \mathcal{L}_{\rm conti}(\theta)&=\frac{1}{2}\int_{t_0}^T w(t)\mathbb{E}_{X_t\sim P_t}\|S(\theta;t,X_t)-\nabla_x \log p_t(X_t)\|^2\,dt
\end{align}
where $S(\theta;t,X_t)$ is a $\theta$-parametrized neural network, $w(t)$ is some {\it weighting function}, and the subscript means this is the continuous setup. Ideally one would like to minimize this objective function to obtain $\theta$; however, $p_t$ in general is unknown, and so is the true score function $\nabla_x \log p_t$. One of the solutions is denoising score matching proposed by \citet{vincent2011connection}, where one, instead of directly matching the true score, leverages conditional score for which initial condition is fixed so that $p_{t|0}$ is analytically known. 

More precisely, given the linearity of forward dynamics \eqref{eqn:general_forward_SDE}, its exact solution is explicitly known: Let $\mu_t=\int_0^t f_s\,ds$, and $\bar{\sigma}_t^2=2\int_0^t e^{2\mu_s-2\mu_t}\sigma_s^2\,ds$. Then the solution is $X_t=e^{-\mu_t}X_0+\bar{\sigma}_t\xi,$
where $\xi\sim\mathcal{N}\left(0, I\right)$. We also have $X_t|X_0\sim\mathcal{N}\left(e^{-\mu_t}X_0,\bar{\sigma}_t^2I\right)$ and $g_t(x|y)=(2\pi\bar{\sigma}_t^2)^{-d/2}\exp(-\|x-e^{-\mu_t}y\|^2/(2\bar{\sigma}_t^2))$, which is the density of $X_t|X_0$. Then the objective can be rewritten as 
\vskip -0.25in
\begin{align}
\label{fct:conti_denoising}
    \mathcal{L}_{\rm conti}(\theta)&=\frac{1}{2}\int_{t_0}^T w(t)\mathbb{E}_{X_0}\mathbb{E}_{X_t|X_0}\| S(\theta;t,X_t)-\nabla\log g_t(X_t|X_0)\|^2dt+\frac{1}{2}\int_{t_0}^T w(t){C_t}dt\notag\\
    &=\frac{1}{2}\int_{t_0}^T w(t)\frac{1}{\bar{\sigma}_t}\mathbb{E}_{X_0}\mathbb{E}_{\xi}\|\bar{\sigma}_t S(\theta;t,X_t)+\xi\|^2dt+\frac{1}{2}\int_{t_0}^T w(t){C_t}dt
\end{align}
where $C_t=\mathbb{E}_{X_t}\|\nabla\log p_t\|^2-\mathbb{E}_{X_0}\mathbb{E}_{X_t|X_0}\|\nabla\log g_t(X_t|X_0)\|^2$. For completeness, we will provide a detailed derivation of these results in Appendix~\ref{app:denoising_score_matching_objective} and emphasize that it is just a review of existing results in our notation.

Throughout this paper, we adopt the variance exploding (VESDE) setting \citep{song2020score}, where $f_t=0$ and hence $\mu_t=0$, which also aligns with the setup in the classic of EDM~\citep{karras2022elucidating}.

\vspace{-0.1in}

\section{Error analysis for diffusion-based generative models}
\label{sec:error_analysis}
\vspace{-0.1in}

In this section, we will quantify both the training and sampling processes, and then integrate them into a more comprehensive generation error analysis.

\vspace{-0.1in}

\subsection{Training}
\vspace{-0.1in}

\label{subsec:training}
In this section, we consider a practical implementation of denoising score matching objective, represent the score by a deep ReLU network, and establish the exponential convergence of GD training dynamics.

\textbf{Training objective function. } Consider a quadrature discretization of the time integral in \eqref{fct:conti_denoising} 
based on deterministic\footnote{Otherwise it is no longer GD training but stochastic GD.} collocation points $0<t_0<t_1<t_2<\cdots<t_N=T$. Then
\vskip -0.25in
\begin{align}
\label{eqn:barL+barC}
    \mathcal{L}_{\rm conti}(\theta)\approx \bar{\mathcal{L}}(\theta)+\bar{C}, 
\end{align}
\vskip -0.2in
where $\bar{C}=\frac{1}{2}\sum_{j=1}^{N}w(t_j)(t_j-t_{j-1})C_{t_j}$, and 
\vskip -0.25in
\begin{align}
\label{fct:bar_L}
     \bar{\mathcal{L}}(\theta)=\frac{1}{2}\sum_{j=1}^{N}w(t_j)(t_j-t_{j-1})\frac{1}{\bar{\sigma}_{t_j}}\mathbb{E}_{X_0}\mathbb{E}_{\xi}\|\bar{\sigma}_{t_j} S(\theta;t_j,X_{t_j})+\xi\|^2.
\end{align}
\vspace{-0.1in}

Define $\beta_j=w(t_j)(t_j-t_{j-1})\frac{1}{\bar{\sigma}_{t_j}}$ to be the {\it total weighting}. Consider the empirical version of $\bar{\mathcal{L}}$~\eqref{fct:bar_L} . Denote the initial data to be $\{x_i\}_{i=1}^n$ with $x_i\sim P_0$, and the noise to be $\{\xi_{ij}\}_{j=1}^{N}$ with $\xi_{ij}\sim\mathcal{N}(0,I_d)$. Then the input data of the neural network is $\{t_j, X_{ij}\}_{i=1,j=1}^{n,N}=\{t_j, x_i+\bar{\sigma}_{t_j}\xi_{ij}\}_{i=1,j=1}^{n,N}$ and the output data is $\{\xi_{ij}/\bar{\sigma}_{t_j}\}_{i=1,j=1}^{n,N}$ if $\bar{\sigma}_{t_j}\ne 0$. Consequently, $\bar{\mathcal{L}}(\theta)$~\eqref{fct:bar_L} can be approximated by the following
\vskip -0.25in
\begin{align}
\label{fct:lem}
    \lem(\theta)=\frac{1}{2n}\sum_{i=1}^n\sum_{j=1}^{N}\beta_j\|\bar{\sigma}_{t_j} S(\theta;t_j,x_i+\bar{\sigma}_{t_j}\xi_{ij})+\xi_{ij}\|^2.
\end{align}
We will use~\eqref{fct:lem} as the training objective function in our analysis. For simplicity, we also denote $f(\theta;i,j)=\beta_j\|\bar{\sigma}_{t_j} S(\theta;t_j,x_i+\bar{\sigma}_{t_j}\xi_{ij})+\xi_{ij}\|^2$ and then $\lem(\theta)=\frac{1}{2n}\sum_{i=1}^n\sum_{j=1}^{N}f(\theta;i,j)$. Note the time dependence can be absorbed into the $X$ dependence. More precisely, because $\bar{\sigma}_{t}$ is a monotonically increasing function of $t$, we can replace $t_j$ in the inputs by $\bar{\sigma}_{t_j}$ to indicate the time dependence. This is then equivalent to augmenting $X_{ij}$ to be $d+1$ dimensional with $(x_i)_{d+1}:=0$ and $(\xi_{ij})_{d+1}:=1$. For simplified presentation, we will slightly abuse notation and still use $d$ as the input dimension rather than $d+1$.

\textbf{Architecture.} The analysis of diffusion model training is in general very challenging. One obvious factor is the complex score parameterizations used in practice such as U-Net \citep{ronneberger2015u} and transformers \citep{peebles2023scalable,li2022diffusion}. In this paper, we simplify the architecture and consider deep feedforward networks. 
Although it is still far from practical usage, note this simple structure can already provide insights about the design space, as shown in later sections, and is more complicated than existing works~\citep{han2024neural,shah2023learning} related to the training of diffusion models (see Section~\ref{subsec:related_works}). More precisely, we consider the standard deep ReLU network with bias absorbed:
\vskip -0.3in
\begin{align}
\label{fct:neural network}
S(\theta;t_j,X_{ij})=W_{L+1}\sigma(W_L\cdots W_1\sigma(W_0X_{ij})),
\end{align}
where $\theta=(W_0,\cdots,W_{L+1})$, $W_0\in\mathbb{R}^{m\times d},W_{L+1}\in\mathbb{R}^{d\times m}$, $W_\ell\in\mathbb{R}^{m\times m}$ for $\ell=1,\cdots,L$, and $\sigma(\cdot)$ is the ReLU activation. 
\vskip -0.05in
\textbf{Algorithm.} Let $\theta^{(k)}=(W_0,W_1^{(k)},\cdots,W_L^{(k)},W_{L+1})$. We consider the gradient descent (GD) algorithm as follows
\vskip -0.25in
\begin{align}
    \theta^{(k+1)}=\theta^{(k)}-h\nabla \lem(\theta^{(k)}),
    \label{eq:agjvdouiy2g314bori4uybgo1afds}
\end{align}
where $h>0$ is the learning rate. We fix $W_0$ and $W_{L+1}$ throughout the training process and only update $W_1,\cdots,W_L$, which is a commonly used setting in the convergence analysis of neural networks~\citep{allen2019convergence,cai2019neural,han2024neural}.
\vskip -0.05in
\textbf{Initialization.} We employ the same initialization as in~\citet{allen2019convergence}, which is to set $(W_\ell^{(0)})_{ij}\sim\mathcal{N}(0,\frac{2}{m})$ for $\ell=0,\cdots,L$, $i,j=1,\cdots,m$, and $(W_{L+1}^{(0)})_{ij}\sim\mathcal{N}(0,\frac{1}{d})$ for $i=1\cdots,d$, $j=1\cdots,m$.

For this setup, the main challenge in our convergence analysis for denoising score matching lies in the nature of the data. 1) The output data that neural network tries to match 
is an unbounded Gaussian random vector, and cannot be rescaled as assumed in many theoretical works (for example, \citet{allen2019convergence} assumed the output data to be of order $o(1)$). 2) The input data $X_{ij}$ is the sum of two parts: $x_i$ which follows the initial distribution $P_0$, and a Gaussian noise $\bar{\sigma}_{t_j}\xi_{ij}$. Therefore, any assumption on the input data needs to agree with this noisy and unbounded nature, and commonly used assumptions like data separability~\citep{allen2019convergence,li2018learning} can no longer be used.

To deal with the above issues, we instead make the following assumptions.
\begin{assumption}[On network hyperparameters and initial data of the forward dynamics]
\label{assump:network_hyperparameters}We assume the following holds:
    \begin{enumerate}[topsep=0pt,parsep=0pt,partopsep=0pt, align=left,leftmargin=*,widest={3}]
    \vspace{-0.05in}
        \item Data scaling: $\|x_i\|=\Theta(d^{1/2})$ for all $i$. 
        \item Input dimension: $d=\Omega({\text{poly}}(\log (nN)))$.
    \end{enumerate}
\end{assumption}
\vskip -0.05in
We remark that the first assumption focuses only on the initial data $x_i$ instead of the whole solution of the forward dynamics $X_{ij}$ which incorporates the Gaussian noise. Also, this assumption is indeed not far away from reality; 
for example, it holds with high (at least $1-\mathcal{O}(\exp(-\Omega(d))$) probability for standard Gaussian random vectors.  The requirement for input dimension $d$ is to ensure that $d$ is not too small, or equivalently 
the number of data points is not exponential in $d$.

We also make the following assumptions on the hyperparameters of the denoising score matching.
\vskip -0.2in
\begin{assumption}[On the design of diffusion models]
\vskip -0.05in
\label{assump:normalize_  weighting}
We assume the following holds:
\begin{enumerate}[topsep=0pt,parsep=0pt,partopsep=0pt, align=left,leftmargin=*,widest={3}]\vspace{-0.02in}

    \item Weighting: $\sum_{j=1}^N w(t_j)(t_{j}-t_{j-1})\bar{\sigma}_{t_j}= \mathcal{O}(N)$. 
    \item Variance: $\bar{\sigma}_{t_0}>0$ and $\bar{\sigma}_{t_N}=\Theta(1)$.
\end{enumerate}
\end{assumption}
\vskip -0.1in
The first assumption is to guarantee that the weighting function $w(t)$ is properly scaled. This expression $w(t_j)(t_{j}-t_{j-1})\bar{\sigma}_{t_j}$ is obtained from proving the upper and lower bounds of the gradient of~\eqref{fct:lem}, and is different from the total weighting $\beta_i$ defined above. In the second assumption, $\bar{\sigma}_{t_0}>0$ ensures the output $\xi_{ij}/\bar{\sigma}_{t_j}$ is well-defined. The $\bar{\sigma}_{t_N}=\Theta(1)$ guarantees that the scales of the noise $\bar{\sigma}_{t_j}\xi_{ij}$ and the initial data $x_i$ are of the same order at the end of the forward process, namely, the initial data $x_i$ is eventually push-forwarded to near Gaussian with the proper variance. Therefore, Assumption~\ref{assump:normalize_  weighting} aligns with what has been used in practice (see Section~\ref{sec:design_space_explore} and \citet{karras2022elucidating,song2020score} for examples).

The following theorem summarizes our convergence result for the training of the score function. 

\begin{theorem}[Convergence of GD]
\label{thm:convergence_GD_no_interpolation}
 Define a set of indices to be $\mathcal{G}^{(s)}=\{(i,j)|f(\theta^{(s)};i,j)\ge f(\theta^{(s)};i',j')\text{ for all }i',j'\}$.
 Then given Assumption~\ref{assump:network_hyperparameters} and \ref{assump:normalize_  weighting}, for any $\epsilon_{\rm train}>0$, there exists some $M(\epsilon_{\rm train})=\Omega\left(\text{poly}\big(n,N,d,L,T/t_0,\log(\frac{1}{\epsilon_{\rm train}})\big)\right)$, s.t., when $m\ge M(\epsilon_{\rm train})$,  $h =\Theta(\frac{nN}{m\min_j w(t_j)(t_j-t_{j-1})\bar{\sigma}_{t_j}})$, and $k=\mathcal{O}(d^{\frac{1-a_0}{2}}n^2N\log(\frac{d}{\epsilon_{\rm train}}))$, with probability at least $1-\mathcal{O}(nN)\exp(-\Omega(d^{2a_0-1}))$, we have
 \vskip -0.15in
    \begin{align*}
         &\lem(\theta^{(k)})\le\prod_{s=0}^{k-1}\left(1-C_5 h \ w(t_{j^*(s)})(t_{j^*(s)}-t_{j^*(s)-1})\bar{\sigma}_{t_{j^*(s)}} \left(\frac{md^{\frac{a_0-1}{2}}}{n^3N^2}\right)\right)\lem(\theta^{(0)})
    \end{align*}
             \vskip -0.1in
    where the universal constant $C_5>0$, $a_0\in(\frac{1}{2},1)$,  and $(i^*(s),j^*(s))=\arg\max_{(i,j)\in\mathcal{G}^{(s)}}w(t_{j})(t_j-t_{j-1})\bar{\sigma}_{t_{j}}$. Moreover, when $K=\Theta(d^{\frac{1-a_0}{2}}n^2N\log(\frac{d}{\epsilon_{\rm train}}))$, 
    \begin{align*}
        \lem(\theta^{(K)})\le \epsilon_{\rm{train}}.
    \end{align*}
\end{theorem}
The above theorem implies that for denoising score matching objective $\lem(\theta)$, GD has exponential convergence. For example, if we simply take $j^*=\min_jw(t_{j})(t_j-t_{j-1})\bar{\sigma}_{t_{j}}$, then $\lem(\theta^{(k+1)})$ is further upper bounded by $\left(1-C_6 h \ w(t_{j^*})(t_{j^*}-t_{j^*-1})\bar{\sigma}_{t_{j^*}} \left(\frac{md^{\frac{a_0-1}{2}}}{n^3N^2}\right)\right)^{k+1} \lem(\theta^{(0)})$.
The rate of convergence can be interpreted in the following way: 1) at the $k$th iteration, we collect all the indices of the time points into $\mathcal{G}^{(k)}$ where $f(\theta; i,j)$ has the maximum value; 2) we then choose the maximum of $w(t_j)(t_{j}-t_{j-1})\bar{\sigma}_{t_j}$ among all such indices and denote the index to be $j^*(k)$, and obtain the decay ratio bound  for the next iteration as $1-C_6 h \ w(t_{j^*(k)})(t_{j^*(k)}-t_{j^*(k)-1})\bar{\sigma}_{t_{j^*(k)}} \left(\frac{md^{\frac{a_0-1}{2}}}{n^3N^2}\right)$.

\vspace{-0.05in}

\begin{remark}[Can $\epsilon_{\rm train}$ be arbitrarily small? Some ramifications of the denoising setting]
\label{rmk:epsilon_train}
    Let us first see some facts about $\lem$ and $\bar{\mathcal{L}}$. Under minimal assumption of the existence of score function and in the zero-time-discretization-error limit, the score matching objective can be made zero and therefore the denoising score matching objective is bounded below by $-\bar{C}$, which is nonnegative and zero only when the data distribution is extremely special (we thus write $-\bar{C}>0$ from hereon unless confusion arises). That is, $\min_\theta\bar{\mathcal{L}}(\theta)\ge\min_{\text{any function }S}\bar{\mathcal{L}}=-\bar{C} >0$     according to \eqref{fct:conti_denoising}. Since $\lem\to\bar{\mathcal{L}}$ as the sample size of the training data set $n\to\infty$, we have ${\lem}\ge -\bar{C}-c_n > 0$ for some constant $c_n>0$ and $c_n\to 0$ as $n\to\infty$.

    However, Theorem~\ref{thm:convergence_GD_no_interpolation} seems to imply $\lem(\theta^{(k)})\to 0$ as $k\to\infty$ since $\lem(\theta^{(K)})\le \epsilon_{\rm train}$ and $\epsilon_{\rm train}$ is arbitrary, and it seems to contradict the $-\bar{C} > 0$ lower bound. However, there is no contradiction due to the combination of two facts. 
    First, the theorem states that for arbitrary $\epsilon_{\rm train}>0$, there exists a critical size, such that for overparameterized network beyond this size, GD can render the loss $\lem(\theta)$ eventually no greater than $\epsilon_{\rm train}$. If we fix the network size, i.e., with $m,L,d$ given, then $K$ is given, and Theorem~\ref{thm:convergence_GD_no_interpolation} says nothing about GD's behavior after $K$ iterations. 
    That is, we do not know whether $\lim\sup_{k\to \infty}\lem(\theta^{(k)})=0$. 
    Second, our optimization setting requires the sample size $n$ to be smaller than the network width $m$ (Assumption~\ref{assump:network_hyperparameters}). Thus, when $m$ is fixed, the sample size $n$ is upper bounded. 

The above discussion implies, within the validity of our theory, for any fixed network width $m$, if $\epsilon_{\rm train}$ is small, the sample size $n$ cannot be too large, meaning $\lem(\theta)-\bar{\mathcal{L}}(\theta)$ may not be small. Therefore, we can simultaneously have $\lem(\theta)$ close to 0 and $\bar{\mathcal{L}}(\theta)$ close to $-\bar{C}>0$.

\end{remark}

\textbf{Main technical steps for proving Theorem~\ref{thm:convergence_GD_no_interpolation}.} The proof of Theorem~\ref{thm:convergence_GD_no_interpolation} is in Appendix~\ref{app:training_proof}, where the analysis framework is adapted from~\citet{allen2019convergence}. Roughly speaking, the key proof in this framework is to establish the lower bound of the gradient. Then by integrating it into the semi-smoothness property of the neural network, we can obtain the exponential rate of convergence of gradient descent. For the lower bound of gradient, we develop a new method to deal with the difficulties in the denoising score matching setting (see the discussions earlier in this section). 

Our new proof technique adopts a different decoupling of the gradient
and leverages a high probability bound based on a high-dimensional geometric idea. See Appendix~\ref{app:subsec:lower_bound_grad} for a proof sketch and more details.

\vspace{-0.1in}

\subsection{Sampling}
\label{subsec:sampling}
\vspace{-0.1in}

In this section, we establish a nonasymptotic error bound of the backward process in the variance exploding setting, which is an extension to \citet{benton2024nearly}. For simplified notations, denote the backward time schedule as $\{\backt_j\}_{0\le j\le N}$ such that $0=\backt_0<\backt_1<\cdots<\backt_N=T-\delta$.

         \vskip -0.05in
\textbf{Generation algorithm.} We consider the exponential integrator scheme for simulating the backward SDE \eqref{eqn:general_backward_SDE} with $f_t\equiv 0$\footnote{The exponential integrator scheme is degenerate since $f_t\equiv 0$. Time discretization is applied when we evaluate the score approximations $\{S(\theta;t,\Bar{Y}_t)\}$.}. 
 The generation algorithm can be piecewisely expressed as a continuous-time SDE: for any $t\in [\backt_j,\backt_{j+1})$,
          \vskip -0.3in
\begin{align}\label{eqn:EI_backward_SDE}
    d \Bar{Y}_t =  2\sigma_{T-t}^2 S(\theta;T-\backt_j, \Bar{Y}_{\backt_j})  dt +\sqrt{2\sigma_{T-t}^2} d\Bar{W}_t.
\end{align}
         \vskip -0.1in
\textbf{Initialization.} Denote $q_t:= \text{Law}(\Bar{Y}_t)$ for all $t\in [0,T-\delta]$. We choose the Gaussian initialization, $q_0=\mathcal{N}(0,\bar{\sigma}_T^2)$.
         \vskip -0.05in
Our convergence result relies on the following assumption.
\begin{assumption}
         \vskip -0.05in
\label{assump:finite_second_moment} The distribution $P_0$ has a finite second moment: $\mathbb{E}_{x\sim P_0}[ \lVert x \rVert^2] = \mathrm{m}_2^2<\infty$. 
\end{assumption}
         \vskip -0.1in
Next we state the main convergence result, whose proof is provided in Appendix~\ref{app:sampling_proof}.
         \vskip -0.2in
\begin{theorem}
         \vskip -0.05in\label{thm:VESDE_generation} Under Assumption~\ref{assump:finite_second_moment}, for any $\delta\in (0,1)$ and $T>1$, we have
                  \vskip -0.3in
\begin{align}\label{eqn:VESDE_generation}
    &\KL(p_\delta|q_{T-\delta}) \lesssim  \underbrace{\frac{\mathrm{m}_2^2}{\bar{\sigma}_T^2}}_{E_I} + \underbrace{\sum_{j=0}^{N-1} \gamma_j \sigma_{T-\backt_j}^2 \mathbb{E}_{Y_{\backt_j}\sim p_{T-\backt_j}}[\lVert S(\theta;T-\backt_j,Y_{\backt_j})-\nabla \log p_{T-\backt_j}(Y_{\backt_j}) \rVert^2]}_{E_S} \nonumber \\
    &+ \underbrace{d\sum_{j=0}^{N-1} \gamma_j \int_{\backt_j}^{\backt_{j+1}} \frac{\sigma_{T-t}^4}{\bar{\sigma}_{T-t}^4} dt + \mathrm{m}_2^2\frac{\int_0^{\backt_1}\sigma_{T-t}^2 d t }{\bar{\sigma}_T^{4}}  +(\mathrm{m}_2^2+d)\sum_{j=1}^{N-1} ( 1-e^{-\bar{\sigma}_{T-\backt_j}^2} ) \frac{\bar{\sigma}_{T-\backt_j}^4-\bar{\sigma}_{T-\backt_{j+1}}^2\bar{\sigma}_{T-\backt_{j-1}}^2}{\bar{\sigma}_{T-\backt_{j-1}}^2\bar{\sigma}_{T-\backt_j}^4}}_{E_D}.
\end{align}
             \vskip -0.2in
where $\gamma_j:=\backt_{j+1}-\backt_j$ for all $j=0,1,\cdots,N-1$ is the stepsize of the generation algorithm in \eqref{eqn:EI_backward_SDE}.
             \vskip -0.1in
\end{theorem}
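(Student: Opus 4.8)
\textbf{Proof proposal for Theorem~\ref{thm:VESDE_generation}.}
The plan is to follow the Girsanov/Fokker--Planck decomposition strategy of \citet{benton2024nearly}, adapting each step to the variance exploding regime where $f_t\equiv 0$ and $\bar{\sigma}_t^2=2\int_0^t\sigma_s^2\,ds$ grows unboundedly. First I would write $\KL(p_\delta\,|\,q_{T-\delta})$ and bound it by the relative entropy of the two path measures on $[0,T-\delta]$ via the data-processing inequality: the exact time-reversed process \eqref{eqn:general_backward_SDE} started from $p_T$, and the generation process \eqref{eqn:EI_backward_SDE} started from $q_0=\mathcal{N}(0,\bar{\sigma}_T^2 I)$. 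The initialization mismatch $\KL(p_T\,|\,\mathcal N(0,\bar{\sigma}_T^2 I))$ is handled by a standard Gaussian-perturbation estimate: since $p_T$ is $p_0$ convolved with $\mathcal N(0,\bar{\sigma}_T^2 I)$, this KL is $\lesssim \mathrm m_2^2/\bar{\sigma}_T^2$ using Assumption~\ref{assump:finite_second_moment}, which produces the $E_I$ term and one copy of the $\mathrm m_2^2/\bar{\sigma}_T^2$ inside $E_D$.

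Next I would apply Girsanov's theorem to the path-measure KL. The drift discrepancy at time $t\in[\backt_k,\backt_{k+1})$ between the true backward drift $2\sigma_{T-t}^2\nabla\ln p_{T-t}(Y_t)$ and the piecewise-constant generation drift $2\sigma_{T-t}^2 s(\theta;T-\backt_k,\bar Y_{\backt_k})$ splits, by triangle inequality, into (i) the score approximation error $\|s(\theta;T-\backt_k,\cdot)-\nabla\ln p_{T-\backt_k}(\cdot)\|^2$ evaluated along the true process, integrated against $4\sigma_{T-t}^4$, which after using $\int_{\backt_k}^{\backt_{k+1}}\sigma_{T-t}^2\,dt$-type bounds yields $E_S$ with the weights $\gamma_k\sigma_{T-\backt_k}^2$; and (ii) a discretization error coming from freezing both the spatial argument (from $Y_t$ to $Y_{\backt_k}$) and the time argument (from $T-t$ to $T-\backt_k$) of the true score over each subinterval. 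For step (ii), the key estimates are a bound on $\mathbb E\|\nabla\ln p_{T-t}(Y_t)-\nabla\ln p_{T-\backt_k}(Y_{\backt_k})\|^2$: the spatial part uses the Lipschitz-in-expectation / second-moment control of the score of a Gaussian-smoothed measure (so that the increment is controlled by the quadratic variation of the backward process, giving the $d\,\gamma_k\,\sigma_{T-t}^4/\bar{\sigma}_{T-t}^4$ integrand), and the time part uses that $\partial_t\nabla\ln p_t$ is controlled through the heat-equation structure, producing the telescoping-looking expression $(\mathrm m_2^2+d)\sum_k(1-e^{-\bar{\sigma}_{T-\backt_k}^2})(\bar{\sigma}_{T-\backt_k}^4-\bar{\sigma}_{T-\backt_{k+1}}^2\bar{\sigma}_{T-\backt_{k-1}}^2)/(\bar{\sigma}_{T-\backt_{k-1}}^2\bar{\sigma}_{T-\backt_k}^4)$ in $E_D$. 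These two ingredients — a moment bound and a covariance/score-regularity bound for $p_t=p_0*\mathcal N(0,\bar{\sigma}_t^2 I)$ — are exactly the VE analogues of the lemmas in \citet{benton2024nearly}, and I would re-derive them directly from the explicit Gaussian convolution formula rather than citing the VP versions, since the scaling $e^{-\mu_t}=1$ changes the algebra.

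Finally I would assemble the pieces: sum the per-interval Girsanov contributions over $k=0,\dots,N-1$, add the initialization term, and collect terms into $E_I$, $E_S$, $E_D$ as displayed. I expect the main obstacle to be step (ii), specifically getting the \emph{time-discretization} part of the score-freezing error into the clean closed form appearing in $E_D$: this requires carefully tracking how $\nabla\ln p_t$ changes in $t$ through $\bar{\sigma}_t$, bounding $\|\partial_s\nabla\ln p_s\|$ by quantities like $\sigma_s^2/\bar{\sigma}_s^2$ times second-moment-type factors, and then integrating over $[\backt_{k-1},\backt_{k+1}]$ (note the drift at step $k$ was frozen using information available at $\backt_k$, so the relevant interval for the time-regularity comparison spans two steps, which is why indices $k-1,k,k+1$ all appear). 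Ensuring the resulting bound does not blow up as $\delta\to 0$ or $T\to\infty$ — i.e., that every term is genuinely controlled by $\mathrm m_2^2$, $d$, and the schedule — is where the VE-specific care (as opposed to a verbatim translation of \citet{benton2024nearly}) is needed, and it is also what makes the result applicable across the various time/variance schedules claimed after the theorem.
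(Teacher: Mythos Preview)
Your high-level decomposition (data-processing/chain rule for KL, Girsanov to get the drift-mismatch integral, then split into initialization, score, and discretization errors) matches the paper exactly, as does your treatment of $E_I$ and $E_S$.

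The gap is in your plan for the discretization error. You propose to split $\nabla\ln p_{T-t}(Y_t)-\nabla\ln p_{T-\backt_k}(Y_{\backt_k})$ into a ``spatial'' piece (freeze time, move $Y$) and a ``time'' piece (freeze position, move $t$), and to control the latter by bounding $\partial_t\nabla\ln p_t$. The paper does \emph{not} do this. Instead it sets $L_t:=\nabla\ln p_{T-t}(Y_t)$ and applies It\^o's formula directly: using the Fokker--Planck equation $\partial_t p_t=\sigma_t^2\Delta p_t$, \emph{all drift terms cancel exactly}, leaving $dL_t=\sqrt{2\sigma_{T-t}^2}\,\nabla^2\ln p_{T-t}(Y_t)\,d\tilde W_t$. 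Hence $\mathbb E\|L_t-L_{\backt_k}\|^2$ is the integrated quadratic variation $\int_{\backt_k}^t 2\sigma_{T-u}^2\,\mathbb E\|\nabla^2\ln p_{T-u}\|_F^2\,du$, and the Tweedie-type identity $\nabla^2\ln p_t=-\bar\sigma_t^{-2}I_d+\bar\sigma_t^{-4}\Sigma_t$, with $\Sigma_t(x)=\mathrm{Cov}(X_0\mid X_t=x)$, expresses this entirely through the denoising posterior covariance. One more identity, $\tfrac{d}{dt}\mathbb E[\Sigma_t]=2\sigma_t^2\bar\sigma_t^{-4}\mathbb E[\Sigma_t^2]$, then collapses the integral to the closed form $2d\int\sigma^2/\bar\sigma^4+\bar\sigma_{T-\backt_k}^{-4}\mathbb E[\Tr\Sigma_{T-\backt_k}]-\bar\sigma_{T-t}^{-4}\mathbb E[\Tr\Sigma_{T-t}]$. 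There is no separate ``time-regularity'' estimate on $\partial_t\nabla\ln p_t$ anywhere.

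Consequently your explanation for the indices $k-1,k,k+1$ is also off: they do not come from a two-step comparison window, but from summation by parts. After integrating the per-step bound in $t$, the paper gets a positive sum $\sum_k A_k\,\mathbb E[\Tr\Sigma_{T-\backt_k}]$ and a negative sum $-\sum_k B_k\,\mathbb E[\Tr\Sigma_{T-\backt_{k+1}}]$; shifting the index in one of them pairs $A_k$ with $B_{k-1}$, and the difference is what produces the $\bar\sigma_{T-\backt_k}^4-\bar\sigma_{T-\backt_{k+1}}^2\bar\sigma_{T-\backt_{k-1}}^2$ numerator. Finally the estimate $\mathbb E[\Tr\Sigma_t]\le\min(\mathrm m_2^2,\bar\sigma_t^2 d)\lesssim(1-e^{-\bar\sigma_t^2})(\mathrm m_2^2+d)$ supplies the remaining factor. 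If you try to carry out your spatial/temporal split instead, you will likely lose the exact cancellations and end up with a looser (or harder to close) bound; the martingale-plus-posterior-covariance route is the mechanism you should aim for.
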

             \vskip -0.1in
Theorem \ref{thm:VESDE_generation} is a VESDE-based diffusion model's analogy of what's proved in \citet{benton2024nearly} for VPSDE-based diffusion model, only requiring the data distributions to have finite second moments, and it achieves the sharp almost linear data dimension dependence under the exponential time schedule. The major differences from \citep{benton2024nearly} are (1) the initialization error in the VESDE case is handled differently (see Lemma~\ref{lem:initialization error VE}); (2) Theorem \ref{thm:VESDE_generation} applies to varies choices of time schedules, which enables to investigate the design space of the diffusion model, as we will discuss in Section \ref{sec:design_space_explore}. Worth mentioning is, \citet{yangconvergence} also obtained 
 polynomial complexity results for VESDE-based diffusion models with uniform stepsize, but under stronger data assumption (assuming compact support). Compared to their result, 
 complexity implied by Theorem~\ref{thm:VESDE_generation} has better accuracy and data dimension dependencies. A detailed discussion on complexities is given in Appendix~\ref{append:EI+ED dominates}.

Terms $E_I,E_D, E_S$ in \eqref{eqn:VESDE_generation} represent the three types of errors: initialization error, discretization error, and score estimation error, respectively. Term $E_I$ quantifies the error between the initial density of the sampling algorithm $q_0$ and the ideal initialization $p_T$, which is the density when the forward process stops at time $T$. Term $E_D$ is the error stemming from the discretization of the backward dynamics. Term $E_S$ characterizes the error of the estimated score function and the true score, and is related to the optimization error of $\lem$. Important to note is, in Theorem~\ref{thm:VESDE_generation}, population loss is needed instead of the empirical version $\lem$ ~\eqref{fct:lem}. Besides this, the weighting $\gamma_j \sigma_{T-\backt_j}^2$ is not necessarily the same as the total weighting in $\lem$ \eqref{fct:lem} $\beta_j$, depending on choices of $w(t_j)$ and time and variance schedules (see more discussion in Section~\ref{sec:design_space_explore}). We will later on integrate the optimization error (Theorem~\ref{thm:convergence_GD_no_interpolation}) into this score error $E_S$ to obtain a full error analysis in Section~\ref{subsec:full_error_analysis}. 

\begin{remark}[sharpness of dependence in $d$ and $\mathrm{m}_2^2$] In one of the simplest cases, when the data distribution is Gaussian, the score function is explicitly known. Hence $\KL(p_\delta|q_{T-\delta})$ can be explicitly computed as well, which verifies that the dependence of parameters $d$ and $\mathrm{m}_2^2$ is sharp in $E_I$ and $E_D$.  
\end{remark}

\vspace{-0.1in}

\subsection{Full error analysis}
\label{subsec:full_error_analysis}
\vspace{-0.1in}

In this section, we combine the analyses from the previous two sections to obtain an end-to-end generation error bound.

Before providing the main result of this section, let us first clarify some terminologies.

\textbf{Time schedule, variance schedule, and total weighting.}
The terms {\it time schedule} and {\it variance schedule} respectively refer to the choice of $\backt_j$ and $\bar{\sigma}_{t_j}$ in sampling. Meanwhile, note both the training and sampling processes require the proper choices of time and variance, and these choices are not necessarily the same for both processes. For training, the effect of these two is integrated into the {\it total weighting} $\beta_j$, which is also influenced by an additional weighting parameter $w(t_j)$. In this theoretical paper, when studying the generation error, we aim to apply the optimization result to better understand the effect of optimization on sampling. Therefore, to simplify the analysis and discussions in Section~\ref{sec:design_space_explore}, we choose the same time and variance schedules for both training and sampling. 

\vspace{-0.05in}

The main result is stated in the following.

\begin{theorem}
\label{cor:full_error_analysis}

    Under the same conditions as Theorem~\ref{thm:convergence_GD_no_interpolation},\ref{thm:VESDE_generation}, and that $K$ is such that GD reaches $\epsilon_{\text{train}}$ in at most $K$th iterations,
    we have
    \begin{align*}
        &\KL(p_\delta|q_{T-\delta}) \lesssim E_I+E_D+\max_{1\leq j \leq N} \frac{\sigma^2_{t_{N-j}}}{w(t_{N-j})} \left(\epsilon_{\rm{train}}+\epsilon_{n}+\epsilon_{\rm est}+\epsilon_{\rm approx}\right)
    \end{align*}
    \vskip -0.15in
    where $E_I,E_D$ are defined in Theorem~\ref{thm:VESDE_generation}, $\epsilon_{\rm train}$ is defined in Theorem~\ref{thm:convergence_GD_no_interpolation}, $\epsilon_{n}=|\bar{\mathcal{L}}(\theta^{(K)})-\lem(\theta^{(K)})+\lem(\theta^*)-\bar{\mathcal{L}}(\theta^*)|$, $\epsilon_{\rm est}=|\bar{\mathcal{L}}(\theta^*)-\bar{\mathcal{L}}(\theta_\mathcal{F})|$, $\epsilon_{\rm approx}=|\bar{\mathcal{L}}(\theta_\mathcal{F})+\bar{C}|$. In these terms, $\bar{C}$ is defined in \eqref{eqn:barL+barC}, $\theta^*=\arg\min_{\theta, s.t., \lem(\theta)=0}\bar{\mathcal{L}}(\theta)$ and $\theta_{\mathcal{F}}=\arg\inf_{\{\theta:S(\theta)\in\mathcal{F}\}}|\bar{\mathcal{L}}(\theta)+\bar{C}|$ with 
    $\mathcal{F}=\big\{$ReLU network function defined in \eqref{fct:neural network}, with $d=\Omega({\rm{poly}}(\log (nN))), m=\Omega\left(\text{poly}\big(n,N,d,L,T/t_0\big)\right)\big\}$.
\end{theorem}

In this theorem, the discretization error $E_D$ and initialization error $E_I$ are the same as Theorem~\ref{thm:VESDE_generation}. For the score error $E_S$,
our optimization result is valid for general time schedules and therefore can directly fit into the sampling error analysis, which is in contrast to existing works \citep{han2024neural,shah2023learning} (see more discussions in Section~\ref{subsec:related_works}). The coefficient $\max_j {\sigma^2_{t_{N-j}}}/{w(t_{N-j})}$ results from different weightings in $E_S$ and $\lem$, i.e., $\gamma_j \sigma_{T-\backt_j}^2$ and $\beta_j$. We will discuss the effect of $\max_j {\sigma^2_{t_{N-j}}}/{w(t_{N-j})}$ under different time and variance schedules in Section~\ref{sec:design_space_explore}.

The way we bound $\mathbb{E}_{Y_{\backt_j}\sim p_{T-\backt_j}}[\lVert S(\theta;T-\backt_j,Y_{\backt_j})-\nabla \log p_{T-\backt_j}(Y_{\backt_j}) \rVert^2]$ in $E_S$ (see Theorem~\ref{thm:VESDE_generation}) is to decompose it into the optimization error $\epsilon_{\rm train}$, statistical error $\epsilon_n$, estimation error $\epsilon_{\rm est}$, and approximation error $\epsilon_{\rm approx}$. This gives clear intuition to results, but we also note it may not give a tight bound. In fact, we have
\begin{align*}
    \epsilon_n+\epsilon_{\rm train}&=|\bar{\mathcal{L}}(\theta^{(K)})-\lem(\theta^{(K)})+\lem(\theta^*)-\bar{\mathcal{L}}(\theta^*)|+ |\lem(\theta^{(K)})-\lem(\theta^*)|\\
    &\ge \bar{\mathcal{L}}(\theta^{(K)})+\bar{\mathcal{L}}(\theta^*)\ge 2\min_\theta\bar{\mathcal{L}}(\theta)\ge -2\bar{C} > 0.
\end{align*}
$\epsilon_n$ can still be small if we take $n\to\infty$, but that means $\epsilon_{\rm train}$ has to be large, and our generation error bound cannot be made 0. It is unclear yet whether this is due to limitation of our analysis or intrinsic, and will be left for future investigation.

Another related note is, in this paper, we focus on $\epsilon_{\text{train}}$ and the effect of optimization, but the analyses of $\epsilon_n,\ \epsilon_{\rm est},\text{ and }\epsilon_{\rm approx}$ are also important and possible \cite{chen2023score,oko2023diffusion,han2024neural,wibisono2024optimal}. On the other hand, again, whether it is optimal to decompose the full error into these four is unclear.

To better see the parameter dependence of the error bound in Theorem~\ref{cor:full_error_analysis}, the following is an example with simplified results, where we employ the schedules in EDM~\citep{karras2022elucidating}.
\begin{corollary}[Full error analysis under EDM \citep{karras2022elucidating} designs]
    Under the same conditions as Theorem~\ref{cor:full_error_analysis}, we have
    \begin{align*}
        KL(p_\delta|q_{T-\delta}) \lesssim  \frac{\mathrm{m}_2^2}{T^2} +\frac{d a^2 T^{\frac{1}{a}} }{\delta^{\frac{1}{a}}N}+(\mathrm{m}_2^2+d)\left( \frac{a^2T^{\frac{1}{a}}}{\delta^{\frac{1}{a}} N}+\frac{a^3T^{\frac{2}{a}}}{\delta^{\frac{2}{a}}N^2}\right)+\frac{1}{N}\left(C_9+\left(1-C_8 h \left(\frac{md^{\frac{a_0-1}{2}}}{n^3N^2}\right)\right)^K\right),
    \end{align*}
    where $C_8, C_9>0$ and $a=7$ in~\citep{karras2022elucidating}.
\end{corollary}

\vspace{-0.15in}

\section{Theory-based understanding of the design space and its relation to existing empirical counterparts}
\label{sec:design_space_explore}
\vspace{-0.1in}

This section theoretically explores preferable choices of parameters in both training and sampling, and shows that they agree with the ones used in EDM \citep{karras2022elucidating} and \citet{song2020score} in different circumstances.

\vspace{-0.1in}

\subsection{Choice of total weighting for training}
\label{subsec:weighting}
\vspace{-0.1in}

This section develops the optimal total weighting $\beta_j$ for training objective~\eqref{fct:lem}. We qualitatively show in two steps that ``bell-shaped'' weighting, which is the one used in EDM~\citep{karras2022elucidating}, will lead to the optimal rate of convergence:
Step 1) $\|\bar{\sigma}_{t_j} S(\theta;t_j,X_{ij})+\xi_{ij}\|$ as a function of $j$ is inversely ``bell-shaped''; Step 2) $f(\theta;i,j)=\beta_j\|\bar{\sigma}_{t_j} S(\theta;t_j,X_{ij})+\xi_{ij}\|$ should be close to each other for any~$i,j$.

\vspace{-0.1in}
\subsubsection{Inversely ``bell-shaped'' loss $\|\bar{\sigma}_{t_j} S(\theta;t_j,X_{ij})+\xi_{ij}\|$ as a function of time index $j$}
\vspace{-0.1in}

\begin{proposition}
\label{prop:inverse_bell_shaped_loss}
    Under the same assumptions as Theorem~\ref{thm:convergence_GD_no_interpolation}, for any $\theta$ and $i=1,\cdots,n$, we have 
    \begin{enumerate}[topsep=0pt,parsep=0pt,partopsep=0pt, align=left,leftmargin=*,widest={3}]
        \item $\forall \epsilon_1>0$, $\exists\ \delta_1>0$, s.t., when $0\le\bar{\sigma}_{t_j}<\delta_1$, $\|\bar{\sigma}_{t_j} S(\theta;t_j,x_i+\bar{\sigma}_{t_j}\xi_{ij})+\xi_{ij}\|>\|\xi_{ij}\|-\epsilon_1.$ \label{prop:1_inverse_bell_shaped_loss}
        \item $\forall \epsilon_2>0$, $\exists\ M>0$, s.t., when $\bar{\sigma}_{t_j}>M$, $\|\bar{\sigma}_{t_j} S(\theta;t_j,x_i+\bar{\sigma}_{t_j}\xi_{ij})+\xi_{ij}\|\ge M^2 (\|S(\theta;t_j,\xi_{ij})\|-\epsilon_2)$.
        \label{prop:2_inverse_bell_shaped_loss}
    \end{enumerate}
\end{proposition}
\vskip -0.1in

The above proposition can be interpreted in the following way. Given any network $S$, 
when $\bar{\sigma}_{t_j}$ is very small, \ref{prop:1_inverse_bell_shaped_loss} implies that $\|\bar{\sigma}_{t_j} S(\theta;t_j,x_i+\bar{\sigma}_{t_j}\xi)+\xi_{ij}\|$ is away from 0 by approximately $\|\xi_{ij}\|$ which is of order $\sqrt{d}$ with high probability, i.e., it cannot be small. When $\bar{\sigma}_{t_j}$ is large, \ref{prop:2_inverse_bell_shaped_loss} shows that as it becomes larger and larger, i.e., as $M$ increases, $\|\bar{\sigma}_{t_j} S(\theta;t_j,x_i+\bar{\sigma}_{t_j}\xi)+\xi_{ij}\|$ 
will also increase.
Therefore, the function $\|\bar{\sigma}_{t_j} S(\theta;t_j,X_{ij})+\xi_{ij}\|$
has most likely an inversely ``bell-shaped'' curve in terms of $j$ dependence.

\vspace{-0.1in}

\subsubsection{ Ensuring comparable values of $f(\theta;i,j)$ for optimal rate of convergence}
\vspace{-0.1in}

\begin{corollary}
\label{cor:optimal_rate}
    Under the same conditions as Theorem~\ref{thm:convergence_GD_no_interpolation}, for some large $K'>0$, if $|f(\theta^{(k+K')};i,j)-f(\theta^{(k+K')};l,s)|\le\epsilon$ holds for all $k>0$ and all $(i,j),(l,s)$, with some small universal constant $\epsilon>0$, then we have, for some constant $C_7>0$,
    \vspace{-0.1in}
    \begin{align*}
         &\lem(\theta^{(k+K')})\le\left(1-C_7 h \ \max_{j=1,\cdots,N} w(t_{j})(t_{j}-t_{j-1})\bar{\sigma}_{t_{j}} \left(\frac{md^{\frac{a_0-1}{2}}}{n^3N^2}\right)\right)^k\lem(\theta^{(K')}).
    \end{align*}
\end{corollary}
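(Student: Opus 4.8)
The corollary is essentially a clean-up of Theorem~\ref{thm:convergence_GD_no_interpolation} under the extra hypothesis that, after $K$ burn-in iterations, all per-sample losses $f(\theta^{(k)};i,j)$ are within $\epsilon$ of one another. The plan is to revisit the rate expression in Theorem~\ref{thm:convergence_GD_no_interpolation}, which at iteration $s$ contracts by the factor
\[
1-C_6 h\, w(t_{j^*(s)})(t_{j^*(s)}-t_{j^*(s)-1})\bar{\sigma}_{t_{j^*(s)}}\cdot\Big(\tfrac{m\sqrt{\log d}}{d^{1/2+4c}n^2N}\Big),
\]
where $j^*(s)=\arg\max_{j\in\mathcal{G}^{(s)}} w(t_j)(t_j-t_{j-1})\bar{\sigma}_{t_j}$ and $\mathcal{G}^{(s)}$ is the set of time indices attaining the maximal $f(\theta^{(s)};i,j)$. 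The point is that when all $f(\theta^{(s)};i,j)$ agree up to $\epsilon$, the ``argmax set'' $\mathcal{G}^{(s)}$ is (essentially) all of $\{1,\dots,N\}$, so $j^*(s)$ can be taken to be the global maximizer $\arg\max_j w(t_j)(t_j-t_{j-1})\bar{\sigma}_{t_j}$, independent of $s$. This turns the $s$-dependent product into a clean $k$-th power with a fixed contraction factor, which is exactly the stated bound (with $C_7$ absorbing the loss of the $\epsilon$-slack into a slightly smaller universal constant, and $E_{\rm ni}$ unchanged since it does not depend on the trajectory).

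Concretely, I would proceed as follows. First, re-run the one-step descent estimate from the proof of Theorem~\ref{thm:convergence_GD_no_interpolation} (Appendix~\ref{app:training_proof}), which combines the gradient lower bound with semi-smoothness to produce, at each step $s\ge K$, a decrease controlled by $w(t_{j^*(s)})(t_{j^*(s)}-t_{j^*(s)-1})\bar{\sigma}_{t_{j^*(s)}}$ times the loss gap. Second, invoke the hypothesis $|f(\theta^{(s)};i,j)-f(\theta^{(s)};l,r)|\le\epsilon$ for $s\ge K$ to argue that every index $j$ lies within $\epsilon$ of the maximal per-sample value; tracing through the gradient lower bound argument, this $\epsilon$-robustness lets one replace the restricted maximum over $\mathcal{G}^{(s)}$ by the unrestricted maximum $\max_{j=1,\dots,N} w(t_j)(t_j-t_{j-1})\bar{\sigma}_{t_j}$ at the cost of an additive $O(\epsilon)$ term in the bound and a constant-factor adjustment $C_6\rightsquigarrow C_7$. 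Third, since the contraction factor is now the same at every iteration $s\ge K$, telescoping the one-step estimates from $s=K$ to $s=K+k-1$ yields the geometric factor $\big(1-C_7 h\max_j w(t_j)(t_j-t_{j-1})\bar{\sigma}_{t_j}\cdot(m\sqrt{\log d}/(d^{1/2+4c}n^2N))\big)^k$ multiplying $\lem(W^{(K)})-\lem^*$, plus the same non-interpolation floor $E_{\rm ni}$ obtained via Gronwall as in Theorem~\ref{thm:convergence_GD_no_interpolation}.

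The main obstacle is step two: making rigorous the claim that ``all per-sample losses within $\epsilon$'' implies one may use the \emph{global} maximizer of $w(t_j)(t_j-t_{j-1})\bar{\sigma}_{t_j}$ in the contraction. In the original proof the index $j^*(s)$ is tied to which sample currently has the largest residual, because the gradient lower bound is driven by that dominant term; one must check that the $\epsilon$-flatness of $\{f(\theta^{(s)};i,j)\}$ genuinely propagates through the decoupling/geometric argument for the gradient lower bound (Appendix~\ref{app:subsec:lower_bound_grad}) so that \emph{any} index, in particular the one maximizing the weighting product, contributes a comparable lower bound — and that the accumulated $\epsilon$-errors over $k$ iterations stay controlled (they should, since they are additive per step but the contraction also compounds, so a geometric-series bound keeps the total $O(\epsilon/(h\,\cdot\text{rate}))$, which is absorbed into $E_{\rm ni}$ or treated as lower-order). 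A secondary, more routine point is verifying that $K$ can be chosen uniformly (depending only on the problem constants, not on $k$) so that the hypothesis is self-consistent with the convergence already guaranteed by Theorem~\ref{thm:convergence_GD_no_interpolation}; this is where the ``for some large $K>0$'' in the statement does its work.
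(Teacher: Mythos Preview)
Your proposal is correct and follows essentially the same route as the paper's proof: the paper also argues that under the $\epsilon$-flatness hypothesis the gradient lower bound (Lemma~\ref{lem:lower_bound} and Lemma~\ref{lem:perturb_upper_lower_bound_grad}) can be driven by \emph{any} index $j$ rather than only those in $\mathcal{G}^{(s)}$, so $j^*(s)$ may be taken as the global maximizer of $w(t_j)(t_j-t_{j-1})\bar{\sigma}_{t_j}$ with the $\mathcal{O}(\epsilon)$ slack absorbed into the constant $C_7$, after which Theorem~\ref{thm:convergence_GD_no_interpolation} immediately gives the stated geometric rate. Your write-up is in fact more detailed than the paper's (which is a two-sentence sketch), and your identification of step two as the main point to verify is exactly right.
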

\vspace{-0.2in}
\begin{wrapfigure}{r}{0.25\textwidth}
    \centering  
    \vspace{-0.1in}
\includegraphics[width=0.3\textwidth]{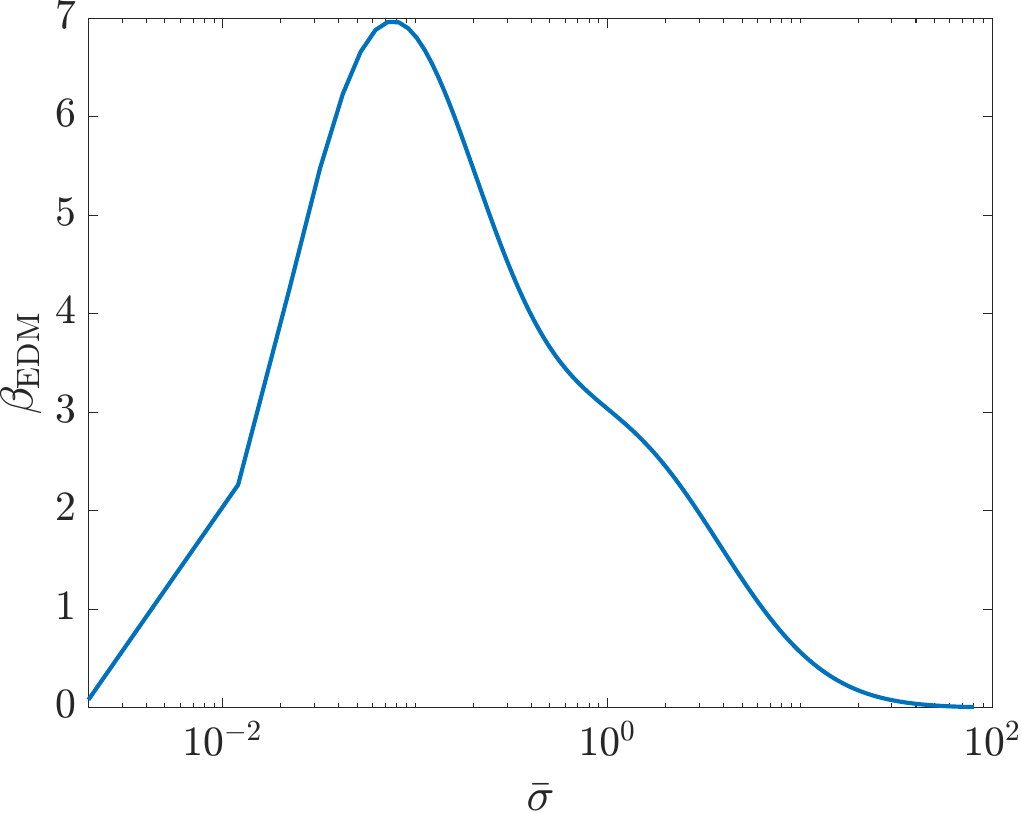}
    \vspace{-0.2in}
    \caption{Weighting choice $\beta_{\rm EDM}$ in EDM.}
    \vspace{-0.3in}
    \label{fig:EDM_weighting}
\end{wrapfigure}
The above corollary shows that
if $f(\theta^{(k)};i,j)$'s are almost the same for any $i,j$, then the decay ratio of the next iteration is minimized. More precisely, the index set $\mathcal{G}^{(k)}$ defined in Theorem~\ref{thm:convergence_GD_no_interpolation} is roughly the whole set $\{1,\cdots,N\}$, and therefore $w(t_{j^*})(t_{j^*}-t_{j^*-1})\bar{\sigma}_{t_{j^*}}$ can be taken as the maximum value over all $j$, which consequently leads to the optimal rate.

\vspace{-0.1in}

\subsubsection{``Bell-shaped'' weighting: our theory and EDM}
\vspace{-0.1in}

Combining the above two aspects, the optimal rate of convergence leads to the choice of total weighting $\beta_j$ such that $f(\theta;i,j)=\beta_j\|\bar{\sigma}_{t_j} S(\theta;t_j,X_{t_j})+\xi_{ij}\|$ is close to each other; as a result, the total weighting should be chosen as a ``bell-shaped'' curve as a function of $j$ according to the shape of the curve for $\|\bar{\sigma}_{t_j} S(\theta;t_j,X_{t_j})+\xi_{ij}\|$.

Before comparing the preferable weighting predicted by our theory and the intuition-and-empirics-based one in EDM~\citep{karras2022elucidating}, let us first recall that the EDM training objective\footnote{In EDM \citep{karras2022elucidating}, they use $P_{\rm mean}=-1.2,\ P_{\rm std}=1.2,\ \sigma_{\rm data}=0.5,\ \bar{\sigma}_{\min}=0.002,\ \bar{\sigma}_{\max}=80$.} can be written as $\mathbb{E}_{\bar{\sigma}\sim p_{\rm train}}\mathbb{E}_{y,n}\lambda(\bar{\sigma})\|D_\theta(y+n;\bar{\sigma})-y\|^2$
\begin{align}
    \label{fct:EDM_training_objective}=\frac{1}{Z_1}\int e^{-\frac{(\log \bar{\sigma}-P_{\rm mean})^2}{2P_{\rm std}^2}}\frac{\bar{\sigma}^2+\sigma_{\rm data}^2}{\bar{\sigma}\sigma_{\rm data}^2}\mathbb{E}_{X_0,\xi}\|\bar{\sigma} s(\theta;t,X_{t})+\xi\|^2\, d\bar{\sigma},
\end{align}
\vskip -0.2in
where $Z_1$ is a normalization constant, and we denote $\beta_{\rm EDM}(\bar{\sigma})=e^{-\frac{(\log \bar{\sigma}-P_{\rm mean})^2}{2P_{\rm std}^2}}\frac{\bar{\sigma}^2+\sigma_{\rm data}^2}{\bar{\sigma}\sigma_{\rm data}^2}$ to be the {\it total weighting} of EDM. Note the dependence on $\bar{\sigma}$ and time $j$ can be freely switched due to their 1-to-1 correspondence.

 Figure~\ref{fig:EDM_weighting} plots the total weighting of EDM $\beta_{\rm EDM}$ as a function of $\bar{\sigma}$. As is shown in the picture, this is a ``bell-shaped'' curve\footnote{This horizontal axis is in $\log$-scale and the plot in regular scale is a little bit skewed, not precisely a ``bell'' shape. However, we remark that the trend of the curve still matches our theory.}, which coincides with our choice of total weighting in the above theory. When $\bar{\sigma}$ is very small or very large, according to Proposition~\ref{prop:inverse_bell_shaped_loss}, the lower bound of $\|\bar{\sigma}_{t_j} S(\theta;t_j,X_{t_j})+\xi_{ij}\|$ cannot vanish and therefore needs the smallest weighting over all $\bar{\sigma}$. When $\bar{\sigma}$ takes the middle value, the scale of the output data $\xi_{ij}/\bar{\sigma}_j$ is roughly the same as the input data $X_{ij}$ and therefore makes it easier for the neural network to fit the data, which admits larger weighting.

\vspace{-0.1in}

\subsection{Choice of time and variance schedules}\label{subsec:time variance schedule}

\vspace{-0.1in}

This section will discuss the choice of time and variance schedules based on the three errors $E_S,E_D,E_I$ in the error analysis of Section~\ref{subsec:full_error_analysis}. Two situations will be considered based on how well the score function is approximated in training: when the network is less trained, $E_S$ dominates and polynomial schedule~\citep{karras2022elucidating} is preferable; when the score function is well approximated, $E_D+E_I$ dominates and exponential schedule~\citep{song2020score} is better.

\vspace{-0.05in}

\subsubsection{When score error $E_S$ dominates}
\vspace{-0.1in}

As is shown in Theorem~\ref{cor:full_error_analysis}, the main impact of different time and variance schedules on score error $E_S$ appears in the term $\max_j {\sigma^2_{t_{N-j}}}/{w(t_{N-j})}$, when the score function is approximated to a certain accuracy. It remains to compute $w(t)$ under various choices of schedules.

\textbf{General rule of constructing $w(t)$.} To ensure fair comparisons between different time and variance schedules, we maintain a fixed total weighting in the training objective. Additionally, to facilitate comparisons with practical usage, we adopt the total weighting in EDM, i.e., $    
\beta_{j}=C_3\beta_{\rm EDM}(\bar{\sigma}_{t_j}),$
for some universal constant $C_3>0$. The reason for using the EDM total weighting is that according to Section~\ref{subsec:weighting}, our total weighting $\beta_j$ should be ``bell-shaped'' as a function of $j$, which agrees qualitatively with the one used in EDM.

\textbf{Polynomial  schedule \citep{karras2022elucidating} vs exponential  schedule \citep{song2020score}.} 
We fix $\epsilon_n,\epsilon_{\rm train}$ and apply the two schedules (Table~\ref{tab:schedule}) separately to the above total weighting $\beta$ (hence $w$). Then, compute $\max_j {\sigma^2_{t_{N-j}}}/{w(t_{N-j})}$ which is a factor in score error $E_S$ (Thm.\ref{cor:full_error_analysis}) 
in Table~\ref{tab:E_S_EDM_VE}. The Exp.'s result $\frac{1}{2}\left(\bar{\sigma }_{\max }-\bar{\sigma }_{\max } \left(\frac{\bar{\sigma }_{\min }^2}{\bar{\sigma }_{\max }^2}\right){}^{1/N}\right)$ is larger\footnote{This holds under parameters used in either \citet{song2020score} or \citet{karras2022elucidating}.} than the Poly.'s result $\left( \bar{\sigma }_{\max }-\left(\bar{\sigma }_{\max }^{1/\rho }-\frac{\bar{\sigma }_{\max }^{1/\rho }-\bar{\sigma }_{\min }^{1/\rho }}{N}\right)^{\rho }\right)$ for large $N$, meaning the poly. time schedule in EDM is better than the exp. schedule in \citep{song2020score}.
Note these two terms are both of order $1/N$ as $N\to\infty$ and therefore the difference lies in their prefactors.

\begin{table}[htb!]
  \caption{Polynomial and exponential (time) schedules.}
  \label{tab:schedule}
  \centering
  \begin{tabular}{lll}
    \toprule
    & Variance schedule $\bar{\sigma}_t$ & Time schedule $t_j$ \\
    \midrule
    Poly. \citep{karras2022elucidating} & $t$ & $\left( \bar{\sigma}_{\max}^{1/\rho}-(\bar{\sigma}_{\max}^{1/\rho}-\bar{\sigma}_{\min}^{1/\rho})\frac{N-j}{N} \right)^{\rho}$ \\
    Exp. \citep{song2020score} & $\sqrt{t}$ & $\bar{\sigma}_{\max}^2\left(\frac{\bar{\sigma}_{\min}^2}{\bar{\sigma}_{\max}^2}\right)^{\frac{N-j}{N}}$ \\
    \bottomrule
  \end{tabular}
\end{table}


\begin{table}[htb!]
  \caption{Comparisons between different schedules.}
  \label{tab:E_S_EDM_VE}
  \centering
  \begin{tabular}{l|ll|ll}
    \toprule
    &\multicolumn{2}{c|}{$E_S$ (score error) dominates}
    &\multicolumn{2}{c}{$E_D+E_I$ (sampling error) dominates}\\
    \cmidrule(r){2-5}
    & $\max_j {\sigma^2_{t_{j}}}/{w(t_{j})}$ & Choice & $N$ & Choice\\
    \midrule
    Poly. \citep{karras2022elucidating} &  $C_4\left( \bar{\sigma }_{\max }-\left(\bar{\sigma }_{\max }^{1/\rho }-\frac{\bar{\sigma }_{\max }^{1/\rho }-\bar{\sigma }_{\min }^{1/\rho }}{N}\right)^{\rho }\right)$ & \Checkmark & $\Omega\left(\frac{\mathrm{m}_2^2 \vee d}{d}\rho^2\big(\frac{\bar{\sigma}_{\max}}{\bar{\sigma}_{\min}}\big)^{1/\rho}\bar{\sigma}_{\max}^2 \right)$ & \\
    Exp. \citep{song2020score} & $C_4\cdot\frac{1}{2}\left(\bar{\sigma }_{\max }-\bar{\sigma }_{\max } \left(\frac{\bar{\sigma }_{\min }^2}{\bar{\sigma }_{\max }^2}\right){}^{1/N}\right)$&&$\Omega\left(\frac{\mathrm{m}_2^2 \vee d}{d}\ln(\frac{\bar{\sigma}_{\max}}{\bar{\sigma}_{\min}})^2\bar{\sigma}_{\max}^2\right)$&\Checkmark \\
    \bottomrule
  \end{tabular}
\end{table}

\subsubsection{When discretization error $E_D$ and initialization error $E_I$ dominate}
\vspace{-0.1in}

In this section, we compare the two different schedules in Table~\ref{tab:schedule} by studying the iteration complexity of the sampling algorithm, i.e., number of time points $N$, when $E_D+E_I$ dominates. 

\textbf{General rules of comparison.} We consider the case when the discretization and initialization errors are bounded by the same quantity $\epsilon$, i.e., $E_I+E_D\lesssim \varepsilon.$
Then according to Theorem~\ref{thm:VESDE_generation} and Theorem~\ref{cor:full_error_analysis}, we compute the iteration complexity for achieving this error using the two schedules in Table~\ref{tab:schedule}. To make the comparison more straightforward, we adopt $T=t_N=\Theta(\text{poly}(\varepsilon^{-1}))$ and therefore $\bar{\sigma}_{\max}=\Theta(\varepsilon^{-1/2})$. More details are provided in Appendix \ref{append:EI+ED dominates}. 

\textbf{Polynomial  schedule \citep{karras2022elucidating} vs exponential  schedule \citep{song2020score}.} As is shown in the last column of Table~\ref{tab:E_S_EDM_VE}, the iteration complexity under exponential schedule~\citep{song2020score} has the poly-logarithmic dependence on the ratio between maximal and minimal variance ($\bar{\sigma}_{\max}/\bar{\sigma}_{\min}$)\footnote{The exponential time schedule under the variance schedule in \citep{karras2022elucidating} also has the poly-logarithmic dependence on $\bar{\sigma}_{\max}/\bar{\sigma}_{\min}$. Under both variance schedules in \citep{karras2022elucidating} and \citep{song2020score}, it can be shown that exponential time schedule is optimal. Details are provided in Appendix~\ref{append:EI+ED dominates}.}, which is better than the complexity under polynomial schedule~\citep{karras2022elucidating}, which is polynomially dependent on $\bar{\sigma}_{\max}/\bar{\sigma}_{\min}$. Both complexities are derived from Theorem \ref{thm:VESDE_generation} by choosing different parameters.

\begin{remark}[The existence of optimal $\rho$ in the polynomial schedule~\citep{karras2022elucidating}] 
For fixed $\bar{\sigma}_{\max}$ and $\bar{\sigma}_{\min}$, the optimal $\rho$ that minimizes the iteration complexity is $\rho= \frac{1}{2} \ln \big(\tfrac{\bar{\sigma}_{\max}}{\bar{\sigma}_{\min}}\big)$. In \citep{karras2022elucidating}, it was empirically observed that with fixed iteration complexity, there is an optimal value of $\rho$ that minimizes the FID. Our result indicates that, for fixed $\bar{\sigma}_{\max}$ and $\bar{\sigma}_{\min}$, hence the desired accuracy in KL divergence being fixed, there is an optimal value of $\rho$ that minimizes the iteration complexity to reach the fixed accuracy. Even though we consider a different metric/divergence instead of FID, our result still provides a quantitative support to the existence of optimal $\rho$ observed in \citep{karras2022elucidating}. 

\end{remark}

\begin{ack}
The authors are grateful for the partially support by NSF DMS-1847802, Cullen-Peck Scholarship, and GT-Emory Humanity.AI Award. We thank the anonymous reviewers for their helpful comments.

\end{ack}

\bibliographystyle{plainnat}
\bibliography{ref}


\clearpage
\appendix

\section*{Appendix}

\section{Conclusions and limitations}
\label{app:conclusions_limitations}

\textbf{Conclusions.} In this paper, we provide a first full error analysis incorporating both optimization and sampling processes. For the training process, we provide a first result under a deep neural network and prove the exponential convergence into a neighborhood of minima. At the same time, we extend the current analysis to the variance exploding case for sampling. Moreover, based on the full error analysis, we establish a quantitative understanding of the error bound under the two schedules. Consequently, we conclude with a qualitative illustration of the "bell-shaped" weighting and the choices of schedules under well-trained and less-trained cases.

\textbf{Limitations.}  The network architecture we used in the model is a deep ReLU network. Although being so far the most complicated architecture for theoretical results, it is still far from what is used in practice like U-Nets and transformers. Moreover, regarding the full error analysis, we only focus on the optimization and sampling error and do not dissect the generalization error. When bridging the theoretical results with practical designs of diffusion models, our results are mostly qualitative and we only compare two existing schedules under two extreme cases, when the network is well-trained and less-trained. Thus, theoretical implications on practical designs remain to be explored. We will leave these perspectives for future exploration.

\section{Notations}

\begin{longtable}{p{.13\textwidth} p{.9\textwidth}}
        $X_t$ & Solution of forward dynamics~\eqref{eqn:general_forward_SDE}  \\
        $Y_t$ & Solution of backward dynamics~\eqref{eqn:general_backward_SDE} \\
        $\bar{Y}_t$ & Solution of generation algorithm~\eqref{eqn:EI_backward_SDE}\\
        $\sigma_t$ & Diffusion coefficient of~\eqref{eqn:general_forward_SDE} and~\eqref{eqn:general_backward_SDE} \\
        $\bar{\sigma}_t$ & Standard deviation of $X_t$ \eqref{fct:conti_denoising}\\
        $\mathcal{L}_{\rm conti}$ & Continuous-time score-matching objective \eqref{fct:L_conti}\\
        $\bar{\mathcal{L}}$ & Discrete-time denoising score-matching objective (population version) \\
        $\lem$ & Discrete-time denoising score-matching objective (empirical version) \eqref{fct:lem}\\
        $C_t$ & Constant between score-matching and denoising score-matching loss at time $t$ \eqref{fct:conti_denoising}\\
        $\bar{C} $ & Constant between score-matching and denoising score-matching loss over all discrete times \eqref{eqn:barL+barC}\\
        $x_i$ & Sample from the initial data distribution $P_0$ \eqref{fct:lem}\\
        $X_{ij}$ & Sample from the distribution $P_t$ at time $t$ \eqref{fct:lem}\\
        $t_j$ & The $j$th time point for forward process \eqref{fct:bar_L}\\
        $\backt_j$ & The $j$th time point for backward process \eqref{eqn:EI_backward_SDE}\\
        $\delta$ & The first (last) time point of the forward (backward) dynamics, i.e., $t_0$ \eqref{eqn:VESDE_generation}\\
        $T$ & Stopping time of the forward dynamics \eqref{eqn:VESDE_generation}\\
        $\gamma_j$ & Difference between backward time points, $\backt_{j+1}-\backt_j$ \eqref{eqn:VESDE_generation}\\
        $p_t$ & Density of the solution of forward dynamics at time $t$ (and backward dynamics at time $T-t$) \eqref{eqn:VESDE_generation}\\
        $q_t$ & Density of the solution of the generation algorithm at time $t$ \eqref{eqn:VESDE_generation}\\
        $w(t)$ & Weighting function \eqref{fct:L_conti}\\
        $\beta_j$ & Total weighting, i.e. $w(t_j)(t_j-t_{j-1})/\bar{\sigma}_{t_j}$ \eqref{fct:lem}\\
        $\beta_{\rm EDM}$ & Total weighting used in EDM \citep{karras2022elucidating} \eqref{fct:EDM_training_objective}\\
        $\bar{\sigma}_{\max}\ (\bar{\sigma}_{\min})$ & Maximum (minimum) of $\bar{\sigma}_{t_j}$ (Table~\ref{tab:E_S_EDM_VE})\\
        $n$ & Number of samples from the initial distribution $P_0$ \eqref{fct:lem}\\
        $N$ & Number of time steps when discretizing the forward and backward dynamics \eqref{fct:bar_L}\\
        $d$ & Dimension of input, output data, and the solutions of the dynamics~\eqref{eqn:general_forward_SDE} and~\eqref{eqn:general_backward_SDE}\\
        $S$ & Deep ReLU network (parameterization of score function) \\
        $\theta$ & All the parameters in the network $S$ \eqref{fct:neural network}\\
        $W_\ell$ & The weight in the $i$th layer of the network $S$ \eqref{fct:neural network}\\
        $\theta^{(k)}$ & The $k$th iteration of the weights $\theta$ through GD \eqref{eq:agjvdouiy2g314bori4uybgo1afds}\\
        $W_\ell^{(k)}$ & The $k$th iteration of the weights $W_\ell$ through GD \eqref{eq:agjvdouiy2g314bori4uybgo1afds}\\
        $m$ & Width of the network \eqref{fct:neural network}\\
        $L$ & Depth of the network \eqref{fct:neural network}\\
        $(i^*(s), j^*(s))$ & Index of the largest loss and $w(t_j)(t_j-t_{j-1})\bar{\sigma}_{t_j}$ at the $s$th iteration (Theorem~\ref{thm:convergence_GD_no_interpolation})\\
        $\mathrm{m}_2^2$ & Second moment of the initial distribution $P_0$ \eqref{eqn:VESDE_generation}\\
        $E_I$ & Initialization error \eqref{eqn:VESDE_generation}\\
        $E_D$ & Discretization error \eqref{eqn:VESDE_generation}\\
        $E_S$ & Score error \eqref{eqn:VESDE_generation}\\
        $\epsilon_{\rm train}$ & Optimization error (Theorem~\ref{thm:convergence_GD_no_interpolation})\\
        $\epsilon_n$ & Statistical error (Theorem~\ref{cor:full_error_analysis})\\
        $\epsilon_{\rm est}$ & Estimation error (Theorem~\ref{cor:full_error_analysis})\\
        $\epsilon_{\rm approx}$ & Approximation error (Theorem~\ref{cor:full_error_analysis})\\
        $\theta^*$ & Minimum point of $\bar{\mathcal{L}}$ when $\lem=0$ (Theorem~\ref{cor:full_error_analysis})\\
        $\theta_\mathcal{F}$ & Optimal parameter in the function class (Theorem~\ref{cor:full_error_analysis})\\
        
\end{longtable}

\section{Derivation of denoising score matching objective}
\label{app:denoising_score_matching_objective}

In this section, we will derive the denoising score matching objective, i.e. show the equivalence of \eqref{fct:L_conti} and \eqref{fct:conti_denoising}. For simplicity, we denote $S_\theta$ to be the neural network we use $S(\theta;t,X_t)$. 

Consider
\begin{align}
    \label{appeqn:score_obj}\mathbb{E}_{X_t\sim P_t}\|S(\theta;t,X_t)-\nabla\log p_t\|^2=\mathbb{E}_{X_t}\left[ \|S_\theta\|^2-2\ip{S_\theta}{\nabla\log p_t} \right]+\mathbb{E}_{X_t}\|\nabla\log p_t\|^2,
\end{align}
where $p_t$ is the density of $X_t$.

Since $p_t(x)=\int p_0(y)q_t(x|y)dy$, where $q_t(\cdot)$ is the density of $X_t|X_0$, then we have
\begin{align*}
    \mathbb{E}_{X_t}\ip{S_\theta}{\nabla\log p_t}&=\int S_\theta^\top\nabla\log p_t\cdot p_t\, dx_t\\
    &=\int S_\theta^\top\nabla p_t\, dx_t\\
    &=\iint S_\theta^\top \nabla q_t(x|y)p_0(y)\,dxdy\\
    &=\iint S_\theta^\top \nabla \log q_t(x|y)p_0(y)q_t(x|y)\,dxdy\\
    &=\mathbb{E}_{X_0\sim P_0}\mathbb{E}_{X_t|X_0\sim Q_t}\ip{S_\theta}{\nabla\log q_t(x_t|x_0)}.
\end{align*}
Then
\begin{align*}
    \eqref{appeqn:score_obj}&=\mathbb{E}_{X_0\sim P_0}\mathbb{E}_{X_t|X_0\sim Q_t}\left[  \|S_\theta\|^2-2\ip{S_\theta}{\nabla\log q_t(x_t|x_0)}\right]+\mathbb{E}_{X_t}\|\nabla\log p_t\|^2\\
    &=\underbrace{\mathbb{E}_{X_0}\mathbb{E}_{X_t|X_0} \|S_\theta-\nabla\log q_t\|^2}_{(\Delta)}+C,
\end{align*}
where $C=\mathbb{E}_{X_t}\|\nabla\log p_t\|^2-\mathbb{E}_{X_0}\mathbb{E}_{X_t|X_0}\|\nabla\log q_t(x_t|x_0)\|^2$.

Moreover, $X_t|X_0\sim\mathcal{N}(e^{-\mu_t}X_0,\bar{\sigma}_t^2I)$, and its density function is
\begin{align*}
    q_t(x|y)=(2\pi\bar{\sigma}_t^2)^{-d/2}\exp\left(-\frac{\|x-e^{-\mu_t}y\|^2}{2\bar{\sigma}_t^2}\right).
\end{align*}
Then
\begin{align*}
    (\Delta)&=\mathbb{E}_{X_0}\mathbb{E}_{X_t|X_0} \|S_\theta-\nabla\log q_t\|^2\\
    &=\mathbb{E}_{X_0}\mathbb{E}_{X_t|X_0} \left\|S_\theta-\nabla_x\left(-\frac{\|X_t-e^{-\mu_t}X_0\|^2}{2\bar{\sigma}_t^2}\right)\right\|^2\\
    &=\mathbb{E}_{X_0}\mathbb{E}_{X_t|X_0} \left\|S_\theta+\frac{X_t-e^{-\mu_t}X_0}{\bar{\sigma}_t^2}\right\|^2\\
    &=\mathbb{E}_{X_0}\mathbb{E}_{\epsilon_t} \left\|S_\theta+\frac{\epsilon_t}{\bar{\sigma}_t^2}\right\|^2.
\end{align*}
Let $\xi=\frac{\epsilon_t}{\bar{\sigma}_t}\sim\mathcal{N}(0,I)$. Then
\begin{align*}
    (\Delta)&=\mathbb{E}_{X_0}\mathbb{E}_{\xi}\bar{\sigma}_t\cdot \frac{1}{\bar{\sigma}_t^2}\|\bar{\sigma}_t S_\theta+\xi\|^2\\
    &=\frac{1}{\bar{\sigma}_t}\mathbb{E}_{X_0}\mathbb{E}_{\xi}\|\bar{\sigma}_t S_\theta+\xi\|^2
\end{align*}

\section{Proofs for training}
\label{app:training_proof}

In this section, we will prove Theorem~\ref{thm:convergence_GD_no_interpolation}.

Before introducing the concrete proof, we first redefine the deep fully connected feedforward network
\begin{align*}
    &r_{ij,0}=W_0X_{ij}, q_{ij,0}=\sigma(r_{ij,0}), \\
    &r_{ij,\ell}=W_\ell q_{ij,\ell-1}, q_{ij,\ell}=\sigma(r_{ij,\ell}), \text{ for } \ell=1,\cdots,L\\
    &S(\theta;t_j,X_{ij})=W_{L+1}q_{ij,L}
\end{align*}
where $W_0\in\mathbb{R}^{m\times d},W_{L+1}\in\mathbb{R}^{d\times m}$ and $W_\ell\in\mathbb{R}^{m\times m}$; $\sigma$ is the ReLU activation. We also denote $q_{ij,-1}$ to be $X_{ij}$.

We also follow the notation in~\citet{allen2019convergence} and denote $D_{i,\ell}\in\mathbb{R}^{m\times m}$ to be a diagonal matrix and $(D_{i,\ell})_{kk}=\mathbbm{1}_{(W_\ell q_{ij,\ell-1})_k>0}$ for $k=1,\cdots,m$. Then
\begin{align*}
    q_{ij,\ell}=D_{ij,\ell}W_\ell q_{ij,\ell-1}
\end{align*}

For the objective~\eqref{fct:lem}, the gradient w.r.t. to the $k$th row of $W_\ell$ for $\ell=1,\cdots,L$ is the following
\begin{align*}
    \nabla_{(W_\ell)_k}\lem(\theta)&=\frac{1}{n}\sum_{i=1}^n\sum_{j=1}^{N}w(t_j)(t_j-t_{j-1})\\
    &\qquad[(\underbrace{W_{L+1}D_{ij,L}W_L\cdots D_{ij,\ell}W_{\ell+1}}_{R_{ij,\ell+1}})^\top(\bar{\sigma}_{t_j}W_{L+1}q_{ij,L}+\xi_{ij})]_k\,q_{ij,\ell-1}\, \mathbbm{1}_{(W_\ell q_{ij,\ell-1})_k>0}
\end{align*}
Throughout the proof, we use both $\lem(\theta)$ and $\lem(W)$ to represent the same value of the loss function, where $W=(W_1,\cdots,W_L)$, and we let $a=b=\frac{1}{2}$.

Next, we will prove Theorem~\ref{thm:convergence_GD_no_interpolation}.

\begin{proof}[Proof of Theorem~\ref{thm:convergence_GD_no_interpolation}]
    First by Lemma~\ref{lem:scale_Xij_qij},
    \begin{align*}
        \lem(W^{(0)})=\mathcal{O}(d^{2a}\sum_{j}w(t_j)(t_j-t_{j-1})/\bar{\sigma}_{t_j})
    \end{align*}
    Also, $\|\nabla\lem(\theta)\|\le\sqrt{L}\max_\ell \|\nabla_{W_\ell}\lem(\theta)\|$. Then we have
    \begin{align*}
    \|W^{(k)}-W^{(0)}\|&\le \sum_{i=0}^{k-1}h \|\nabla\lem(W^{(i)})\|\\
    &\le\mathcal{O}(\sqrt{md^{2a-1}NL\max_j w(t_j)(t_j-t_{j-1})\bar{\sigma}_{t_j}})h  k \max_i\sqrt{\lem(W^{(i)})}\\
    &\le\mathcal{O}(\sqrt{md^{2a-1}NL\max_j w(t_j)(t_j-t_{j-1})\bar{\sigma}_{t_j}}\ d^a)h  k \sqrt{\sum_{j}w(t_j)(t_j-t_{j-1})/\bar{\sigma}_{t_j}}:=\omega
\end{align*}

Let $h =\Theta(\frac{nN}{m\min_j w(t_j)(t_j-t_{j-1})\bar{\sigma}_{t_j}})$ and $k=\mathcal{O}(d^{\frac{1-a_0}{2}}n^2N\log(\frac{d}{\epsilon_{\rm train}}))$, where $\epsilon_{\rm train}>0$ is some small constant. 
 Then $\omega=\mathcal{O}(\log (\frac{d}{\epsilon_{\rm train}})\frac{d^{1-\frac{a_0}{2}}n^3N^{5/2}L^{1/2}}{\sqrt{m}}\frac{\sqrt{\max_j w(t_j)(t_j-t_{j-1})\bar{\sigma}_{t_j}\sum_k w(t_j)(t_j-t_{j-1})/\bar{\sigma}_{t_j}}}{\min_j w(t_j)(t_j-t_{j-1})\bar{\sigma}_{t_j}})$ and by Lemma~\ref{lem:semi-smoothness}, with probability at least $1-\mathcal{O}(nN)\exp(-\Omega(d^{2a_0-1}))$,
    \begin{align*}
         &\lem(W^{(k+1)})\\
         &\le \lem(W^{(k)})-h \|{\nabla \lem(W^{(k)})}\|^2\\
         &+h \sqrt{\lem}\sqrt{\sum_{j}w(t_j)(t_j-t_{j-1})\bar{\sigma}_{t_j}}\mathcal{O}(\omega^{1/3}L^2\sqrt{m\log m}d^{a/2})\|\nabla \lem(W^{(k)})\|\\
        &+h ^2\sqrt{\lem}\sqrt{\sum_{j}w(t_j)(t_j-t_{j-1})\bar{\sigma}_{t_j}}\mathcal{O}(L^2\sqrt{m}d^{a})\|\nabla \lem(W^{(k)})\|^2\\
        &\le \left(1-h  w(t_{j^*})(t_{j^*}-t_{j^*-1})\bar{\sigma}_{t_{j^*}}\cdot\Omega\left(\frac{md^{\frac{a_0-1}{2}}}{n^3N^2} \right)\right)\lem(W^{(k)})\\
        &+h C\frac{m^{5/6}d^{7/12-a_0/6}}{N^{1/6}n^{2/3}(\log m)^{1/6}\sqrt{L}}\frac{\sqrt{\sum_{j}w(t_j)(t_j-t_{j-1})\bar{\sigma}_{t_j}}\min_j w(t_j)(t_j-t_{j-1})\bar{\sigma}_{t_j}}{\max_j w(t_j)(t_j-t_{j-1})\bar{\sigma}_{t_j}\sum_k w(t_j)(t_j-t_{j-1})/\bar{\sigma}_{t_j}}\lem(W^{(k)})\\
        &\le \left(1-h  w(t_{j^*})(t_{j^*}-t_{j^*-1})\bar{\sigma}_{t_{j^*}}\cdot\Omega\left(\frac{md^{\frac{a_0-1}{2}}}{n^3N^2} \right)\right)\lem(W^{(k)})\\
    \end{align*}
    where $C>0$ is some constant, $a_0\in(1/2,1)$; the second inequality follows from Lemma~\ref{lem:perturb_upper_lower_bound_grad} with
    \begin{align*}
        \|\nabla_{W_L} \lem({\theta^{(k)}})\|^2=\Omega\left(\frac{md^{\frac{a_0-1}{2}}}{n^3N^2} \ w(t_{j^*})(t_{j^*}-t_{j^*-1})\bar{\sigma}_{t_{j^*}}\right) \lem (\theta^{(k)}),
    \end{align*}
    which is obtained inductively; the last inequality follows from $m=\Omega\left(d^{13/2-2a_0/3}n^{14/3}N^{11}L^3(\log m)\left(\frac{\max_j w(t_j)(t_j-t_{j-1})\bar{\sigma}_{t_j}\sum_k w(t_j)(t_j-t_{j-1})/\bar{\sigma}_{t_j}}{\min_j w(t_j)(t_j-t_{j-1})\bar{\sigma}_{t_j}\sqrt{\sum_{j}w(t_j)(t_j-t_{j-1})\bar{\sigma}_{t_j}}}\right)^6\right)$
    
\end{proof}

\subsection{Proof of lower bound of the gradient at the initialization}
\label{app:subsec:lower_bound_grad}
In this section, we will show the main part of the convergence analysis, which is the following lower bound of the gradient.
\begin{lemma}[Lower bound]
\label{lem:lower_bound}
    With probability $1-\mathcal{O}(nN)\exp(-\Omega(d^{2a_0-1}))$, we have
    \begin{align*}
        \|\nabla\lem({\theta^{(0)}})\|^2\ge C_6\left(\frac{md^{\frac{a_0-1}{2}}}{n^3N^2} \ w(t_{j^*})(t_{j^*}-t_{j^*-1})\bar{\sigma}_{t_{j^*}}\right)\lem(\theta^{(0)})
    \end{align*}
    where $(i^*,j^*)=\arg\max \left\|\sqrt{\frac{w(t_j)(t_j-t_{j-1})}{\bar{\sigma}_{t_j}}}(\bar{\sigma}_{t_j}W_{L+1}q_{ij,L}+\xi_{ij})\right\|$, $\frac{1}{2}<a_0<1$, and $C_6>0$ is some universal constant.
\end{lemma}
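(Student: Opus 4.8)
\textbf{Proof proposal for Lemma~\ref{lem:lower_bound}.}

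The plan is to lower-bound $\|\nabla\lem(\theta^{(0)})\|^2$ by isolating the contribution of a single worst-case sample pair $(i^*,j^*)$, and then showing that the gradient of the last trainable layer $W_L$ cannot be small at initialization. Concretely, since $\|\nabla\lem\|^2 \ge \|\nabla_{W_L}\lem\|^2$, I would first write out $\nabla_{(W_L)_k}\lem(\theta^{(0)})$ using the formula stated in Appendix~\ref{app:training_proof}: it is a sum over $(i,j)$ of terms of the form $\beta_j[R_{ij,L+1}^\top(\bar\sigma_{t_j}W_{L+1}q_{ij,L}+\xi_{ij})]_k\, q_{ij,L-1}\,\mathbbm{1}_{(W_L q_{ij,L-1})_k>0}$, where $R_{ij,L+1} = W_{L+1}$. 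The key quantity controlling the magnitude is the residual vector $v_{ij} := \bar\sigma_{t_j}W_{L+1}q_{ij,L}+\xi_{ij}$, whose weighted norm is maximized (by definition) at $(i^*,j^*)$.

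The main step — and what I expect to be the chief obstacle — is a \emph{decoupling / geometric separation} argument: the cross-terms coming from $(i,j)\ne(i^*,j^*)$ in the squared gradient norm must not cancel the diagonal $(i^*,j^*)$ contribution. This is where the ``new method'' advertised in Section~\ref{subsec:training} comes in. My approach would be to use the high-dimensional Assumption~\ref{assump:network_hyperparameters}: with $d=\Theta(m)$ large and $\|x_i\|=\Theta(d^{1/2})$, the noisy inputs $X_{ij}=x_i+\bar\sigma_{t_j}\xi_{ij}$ are, with probability $1-\exp(-\Omega(\log d))$, pairwise well-separated in angle (a fact presumably recorded in a ``Lemma..'' referenced in the text). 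Propagating this separation through the random ReLU layers — using the Gaussian initialization $(W_\ell^{(0)})_{ij}\sim\mathcal N(0,2/m)$ and standard forward-stability estimates \`a la \citet{allen2019convergence} (Lemma~\ref{lem:scale_Xij_qij}) — gives that the feature vectors $q_{ij,L-1}$ retain enough linear independence that the map $\{(i,j)\}\mapsto \{q_{ij,L-1}\otimes(\text{activation pattern})\}$ has a controlled smallest singular value. Then I would project $\nabla_{W_L}\lem$ onto the direction singled out by $v_{i^*j^*}$ and $q_{i^*j^*,L-1}$, and bound the inner product from below by $\Omega(m^{1/2}\sqrt{\log d}\,d^{2b-1.5-4c}/(n^2N))\cdot\beta_{j^*}\|v_{i^*j^*}\|^2$, absorbing the effect of the fixed random last layer $W_{L+1}$ (whose rows have norm $\Theta(\sqrt{m/d})=\Theta(1)$) and the $\Theta(1/2)$-fraction of active ReLU units.

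Finally, I would convert this back to a bound in terms of $\lem(\theta^{(0)})$: since $\lem(\theta^{(0)}) = \frac{1}{2n}\sum_{i,j}\beta_j\|v_{ij}\|^2 \le \frac{1}{2n}\cdot nN\cdot\max_{i,j}\beta_j\|v_{ij}\|^2 \lesssim N\,\beta_{j^*}\|v_{i^*j^*}\|^2$ up to the reweighting by $w(t_j)(t_j-t_{j-1})\bar\sigma_{t_j}$ versus $\beta_j = w(t_j)(t_j-t_{j-1})/\bar\sigma_{t_j}$ (which is why the statement carries the explicit $w(t_{j^*})(t_{j^*}-t_{j^*-1})\bar\sigma_{t_{j^*}}$ factor rather than $\beta_{j^*}$), the per-sample lower bound on $\|\nabla_{W_L}\lem\|^2$ translates into the claimed $\Omega\!\big(\frac{m\sqrt{\log d}\,d^{2b-1.5-4c}}{n^2N}\, w(t_{j^*})(t_{j^*}-t_{j^*-1})\bar\sigma_{t_{j^*}}\big)\lem(\theta^{(0)})$. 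The $n^2$ in the denominator reflects the loss of a factor $n$ from passing to the single-sample bound and another from the worst-case conditioning of the feature Gram matrix; the fractional exponent $2b-1.5-4c$ with $c\in(1/10,1/8)$ is the slack needed to make the ReLU-pattern and anti-concentration estimates hold simultaneously with the stated failure probability.
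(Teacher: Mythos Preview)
Your proposal has a genuine gap at the decoupling step. You propose to control the cross-terms via angular separation of the inputs $X_{ij}=x_i+\bar\sigma_{t_j}\xi_{ij}$, then propagate to get a smallest-singular-value bound on the feature Gram matrix $\{q_{ij,L-1}\}$. But this is precisely the route the paper explicitly rules out: for fixed $i$ and small $\bar\sigma_{t_j}$, the $X_{ij}$'s all cluster near $x_i$, so pairwise separation fails (cf.\ the remark after Assumption~\ref{assump:network_hyperparameters}: ``there is no guarantee of the separability of $x_i\sim P_0$ in the input $X_{ij}$''). The high-dimensional assumption only separates the added noises $\xi_{ij}$, not the full inputs, and no feature-Gram singular-value bound is available here.

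The paper's actual mechanism is entirely different and does \emph{not} try to make cross-terms small. Instead it is a \emph{sign-alignment} argument: decompose $\nabla_{(W_L)_k}\lem$ into the $(i^*,j^*)$ piece $\nabla_1$ and the aggregate remainder $\nabla_2$, and further into signal and noise parts $\nabla_{11},\nabla_{12},\nabla_{21},\nabla_{22}$. The goal is to find $\Omega(m\sqrt{\log d}/d^{1/2+4c})$ rows $k$ for which all four pieces are simultaneously non-negative, so that $(\nabla_1+\nabla_2)^2\ge \nabla_1^2$ with no cancellation at all. The geometric content is \emph{not} separation but its opposite: because ReLU outputs are entrywise non-negative, $\big\langle q_{i^*j^*,L-1},\,\sum_{(i,j)\ne(i^*,j^*)}\alpha_{ij}q_{ij,L-1}\big\rangle\ge 0$, so the two vectors make an acute angle, and a uniformly random row $(W_L)_k$ lands in the intersection of the two induced half-spaces with probability $\ge 1/4$ (Lemma~\ref{lem:prob_wv1>0_wv2>0}). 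A refinement to hyperspherical caps of colatitude $\arccos(\sqrt{c\log d}/\sqrt{d})$ (Lemma~\ref{lem:prob_intersect_hyperspherical_cap}) yields the $\sqrt{\log d}/d^{1/2+4c}$ factor; an analogous argument over the columns of $W_{L+1}$ handles the $\xi$-parts. Chernoff over the $m$ independent rows then gives the row count, and the per-row lower bound $\nabla_1^2$ produces the stated inequality. Your final conversion from $\|v_{i^*j^*}\|^2$ to $\lem(\theta^{(0)})$ is correct, but the core of the argument needs to be replaced by this sign-based hyperspherical-cap computation.
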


Below is the proof sketch of Lemma~\ref{lem:lower_bound}.

\begin{proof}[Proof sketch]

    We first decompose the gradient of the $k$th row of $W_L$ $\nabla_{(W_L)_k}\bar{\mathcal{L}}_{\rm em}(\theta)=\underbrace{\frac{1}{n}w(t_{j^*})(t_{j^*}-t_{j^*-1}){(W_{L+1})^k}^\top(\bar{\sigma}_{t_{j^*}}W_{L+1} q_{i^* j^*,L}+\xi_{i^* j^*})q_{i^* j^*,L-1} {1}_{(W_L q_{i^* j^*,L-1})_k>0}}_{\nabla_1}$
    $+\underbrace{\frac{1}{n}\sum_{(i,j)\ne (i^*,j^*)}w(t_j)(t_j-t_{j-1}){(W_{L+1})^k}^\top(\bar{\sigma}_{t_j}W_{L+1}q_{ij,L}+\xi_{ij})\,q_{ij,L-1} {1}_{(W_L q_{ij,L-1})_k>0}}_{\nabla_2}$

where $(i^*,j^*)$ indicates the sample index with the largest loss value.

Then we first fix $(q_{ij,L-1})_s=1$, and prove that the index set of both $(q_{i^*j^*,L})_s> 0$ and $\sum_{(i,j)\ne (i^*,j^*)}w(t_j)(t_j-t_{j-1})\bar{\sigma}_{t_j}{1}
_{(W_Lq_{ij,L-1})_k>0}(q_{ij,L})_s>0$ is order $m$ with high probability.

Next, we conditioned on the index set we've found, then we can decouple each element of $\nabla_{(W_L)_k}\bar{\mathcal{L}}_{\rm em}$ with high probability. We prove that with high probability
\begin{align*}
    \angle(W_{L+1}\bar{\sigma}_{t_{j^*}}q_{i^*j^*,L}+\xi_{i^*j^*},W_{L+1}\sumij \alpha_{ij} q_{ij,L}+\sumij \bar{\alpha}_{ij}\xi_{ij})\le \pi-c d^{\frac{a_0-1}{2}},
\end{align*}
for some constant $c>0$ and $\frac{1}{2}<a_0<1$. Based on this, we show that with probability at least $1-\mathcal{O}(nN)\exp(-\Omega(d))$,
\begin{align*}
    \mathbb{P}\left((\nabla_1)_s> 0,(\nabla_2)_s> 0\right)\ge cd^{\frac{a_0-1}{2}},
\end{align*}
for some $c>0$. Then we prove that with probability at least $1-\exp(-\Omega(md^{\frac{a_0-1}{2}}))$ 
\begin{align*}
    |\{k:(W_{L+1}^k)^\top v\ge 0,(W_{L+1}^k)^\top (u+\xi)\ge 0\}|=\Theta(md^{\frac{a_0-1}{2}}).
\end{align*}

with high probability, the event $(\nabla_1)_s> 0$ and $(\nabla_2)_s>0$ has probability at least of order $d^{(a_0-1)/2}$ where $a_0\in(1/2,1)$.

Now, we deal with $(q_{ij,L-1})_s$ and prove that if the above results hold for $(q_{ij,L-1})_s=1$, then there exists an index set with cardinality of order $m/(nN)$ such that $(\nabla_1)_s> 0$ and $(\nabla_2)_s>0$ also hold in this index set.

In the end, combining all the steps above yields the lower bound.
\end{proof}

Here is the complete proof.

\begin{proof}
    The main idea of the proof of lower bound is to decouple the elements in the gradient and incorporate geometric view. We focus on $\nabla_{W_L}\lem(\theta)$.

\textbf{Step 1}: Rewrite $\nabla_{(W_L)_k}\lem(\theta)$ to be the $(i^*,j^*)$th term $g_1$ plus the rest $nN-1$ terms $g_2$.

Let $(i^*,j^*)=\arg\max \left\|\sqrt{\frac{w(t_j)(t_j-t_{j-1})}{\bar{\sigma_{t_j}}}}(\bar{\sigma}_{t_j}W_{L+1}q_{ij,L}+\xi_{ij})\right\|$. Let $$g_{ij,L}=w(t_j)(t_j-t_{j-1}){(W_{L+1})^k}^\top(\bar{\sigma}_{t_j}W_{L+1}q_{ij,L}+\xi_{ij})\,q_{ij,L-1}.$$ Then
\begin{align*}
    &\nabla_{(W_L)_k}\lem(\theta)
    \\
    =&\underbrace{\frac{1}{n}w(t_{j^*})(t_{j^*}-t_{j^*-1}){(W_{L+1})^k}^\top(\bar{\sigma}_{t_{j^*}}W_{L+1}q_{i^*j^*,L}+\xi_{i^*j^*})\,q_{i^*j^*,L-1} \mathbbm{1}_{(W_L q_{i^*j^*,L-1})_k>0}}_{\nabla_1}\\
    &+\underbrace{\frac{1}{n}\sum_{(i,j)\ne (i^*,j^*)}w(t_j)(t_j-t_{j-1}){(W_{L+1})^k}^\top(\bar{\sigma}_{t_j}W_{L+1}q_{ij,L}+\xi_{ij})\,q_{ij,L-1} \mathbbm{1}_{(W_L q_{ij,L-1})_k>0}}_{\nabla_2}
\end{align*}
Also define
\begin{align*}
    \nabla_{1,s}=&\underbrace{\frac{1}{n}w(t_{j^*})(t_{j^*}-t_{j^*-1})\bar{\sigma}_{t_{j^*}}\ {(W_{L+1})^k}^\top W_{L+1}q_{i^*j^*,L}\, (q_{i^*j^*,L-1})_s\mathbbm{1}_{(W_L q_{i^*j^*,L-1})_k>0}}_{\nabla_{11,s}}\\
    &+\underbrace{\frac{1}{n}w(t_{j^*})(t_{j^*}-t_{j^*-1})\ {(W_{L+1})^k}^\top\xi_{i^*j^*}\, (q_{i^*j^*,L-1})_s\mathbbm{1}_{(W_L q_{i^*j^*,L-1})_k>0}}_{\nabla_{12,s}}
\end{align*}
\begin{align*}
    \nabla_{2,s}=&\underbrace{\frac{1}{n}\sum_{(i,j)\ne (i^*,j^*)}w(t_j)(t_j-t_{j-1})\bar{\sigma}_{t_j}{(W_{L+1})^k}^\top W_{L+1}q_{ij,L}\,(q_{ij,L-1})_s \mathbbm{1}_{(W_L q_{ij,L-1})_k>0}}_{\nabla_{21,s}}\\
    &+\underbrace{\frac{1}{n}\sum_{(i,j)\ne (i^*,j^*)}w(t_j)(t_j-t_{j-1}){(W_{L+1})^k}^\top\xi_{ij}\,q_{ij,L-1}(q_{ij,L-1})_s \mathbbm{1}_{(W_L q_{ij,L-1})_k>0}}_{\nabla_{22,s}}
\end{align*}
Our goal is to show that with high probability, there are at least $\mathcal{O}(\frac{md^{\frac{a_0-1}{2}}}{nN})$ number of rows $k$ such that $\nabla_{11,s}\ge 0, \nabla_{12,s}\ge 0,\nabla_{21,s}\ge 0,\nabla_{22,s}\ge 0$. Then we can lower bound $\|\nabla_{(W_L)_k}\lem(\theta)\|^2$ by $\|\nabla_1\|^2$, which can be eventually lower bounded by $\lem(\theta)$.

\textbf{Step 2}: Consider $[\nabla_{(W_L)_k}\lem(\theta)]_s$. For $(g_2)_s$, first take $(q_{ij,L-1})_s=1$ for all $(i,j)\ne(i^*,j^*)$. Then we only need to consider 
\begin{align*}
    \nabla_{2,s}'=\frac{1}{n}\sum_{(i,j)\ne (i^*,j^*)}w(t_j)(t_j-t_{j-1}){(W_{L+1})^k}^\top(\bar{\sigma}_{t_j}W_{L+1}q_{ij,L}+\xi_{ij})\,\mathbbm{1}_{(W_L q_{ij,L-1})_k>0}
\end{align*}
which is independent of $s$. For $\nabla_1$, since $q_{i^*,j^*,L-1}\ge 0$ which does not affect the sign of this term, we can also first take $(q_{i^*,j^*,L-1})_s=1$ for all $s$.

\textbf{Step 3}: We focus on $\nabla_{11}$ and $\nabla_{21}$ and we would like to pick the non-zero elements in this two terms. More precisely, let
\begin{align*}
    &N_1=\left\{s\ |\ (q_{i^*j^*,L})_s>0,s=1,\cdots,m\right\},\\
    &N_2=\left\{s\ \Big|\ \sumij w(t_j)(t_j-t_{j-1})\bar{\sigma}_{t_j}\mathbbm{1}_{(W_Lq_{ij,L-1})_k>0}(q_{ij,L})_s>0,s=1,\cdots,m \right\}
\end{align*}
Let $\alpha_{ij}=w(t_j)(t_j-t_{j-1})\bar{\sigma}_{t_j}\mathbbm{1}_{(W_Lq_{ij,L-1})_k>0}\ge 0$. Then 
\begin{align*}
    &\sumij w(t_j)(t_j-t_{j-1})\bar{\sigma}_{t_j}\mathbbm{1}_{(W_Lq_{ij,L-1})_k>0}(q_{ij,L})_s\\
    &=\sumij \alpha_{ij}(q_{ij,L})_s=\sumij \alpha_{ij}\sigma(W_Lq_{ij,L-1})_s.
\end{align*}
If 
\begin{align*}
    \sumij \alpha_{ij}(W_Lq_{ij,L-1})_s=(W_L)_s\sumij \alpha_{ij}q_{ij,L-1}>0,
\end{align*}
then there must be at least one pair of $(i,j)$ s.t. $\alpha_{ij}(W_Lq_{ij,L-1})_s=\alpha_{ij}\sigma(W_Lq_{ij,L-1})_s>0$, which implies $\sumij \alpha_{ij}(q_{ij,L})_s>0$. Therefore, it suffices to consider 
\begin{align*}
    &N_1=\left\{s\ |\ (q_{i^*j^*,L})_s=(W_L)_s q_{i^*j^*,L-1}>0,s=1,\cdots,m\right\},\\
    &N_2'=\left\{ s\ |\ (W_L)_s\sumij \alpha_{ij}q_{ij,L-1}>0 \right\}.
\end{align*}
Since $(q_{ij,L-1})_s\ge 0$, we have
\begin{align*}
    &\ip{q_{i^*j^*,L-1}}{\sumij\alpha_{ij}q_{ij,L-1}}\ge 0,\\
    &i.e., \angle \left(q_{i^*j^*,L-1}, \sumij\alpha_{ij}q_{ij,L-1}\right)\le \frac{\pi}{2}
\end{align*}
By Lemma~\ref{lem:prob_wv1>0_wv2>0} and Proposition~\ref{prop:gaussian_norm_direction_indep_uniform}, we have
\begin{align*}
    &\mathbb{P}\left((W_L)_s q_{i^*j^*,L-1}>0, (W_L)_s\sumij \alpha_{ij}q_{ij,L-1}>0\right)\\
    &=\mathbb{P}\left(\frac{(W_L)_s}{\|(W_L)_s\|} q_{i^*j^*,L-1}>0, \frac{(W_L)_s}{\|(W_L)_s\|} \sumij \alpha_{ij}q_{ij,L-1}>0\right)\\
    &\ge \frac{1}{4}.
\end{align*}
Also $(W_L)_s$'s are $i.i.d.$ multivariate Gaussian. By Chernoff bound,
\begin{align*}
    \mathbb{P}\left( |N_1\cap N_2'|\in (\delta_1 \frac{m }{4},\delta_2 \frac{m }{4})  \right)\le 1-2e^{-\Omega(m)}
\end{align*}
for some small $\delta_1\le \frac{1}{4}$ and $\delta_2\le4$, i.e., $|N_1\cap N_2'|=\Theta(m)$ with probability at least $1-2e^{-\Omega(m)}$.

\textbf{Step 4}: Next we condition on $N_1\cap N_2'$ and consider ${(W_{L+1})^k}^\top W_{L+1}\bar{\sigma}_{t_{j^*}}q_{i^*j^*,L}+{(W_{L+1})^k}^\top\xi_{i^*j^*}$ and ${(W_{L+1})^k}^\top W_{L+1}\sumij \alpha_{ij} q_{ij,L}+{(W_{L+1})^k}^\top\sumij \bar{\alpha}_{ij}\xi_{ij}$, where $\bar{\alpha}_{ij}=\alpha_{ij}/\bar{\sigma}_{ij}$.

We would like to prove that with high probability
\begin{align*}
    \angle(W_{L+1}\bar{\sigma}_{t_{j^*}}q_{i^*j^*,L}+\xi_{i^*j^*},W_{L+1}\sumij \alpha_{ij} q_{ij,L}+\sumij \bar{\alpha}_{ij}\xi_{ij})\le \pi-c d^{\frac{a_0-1}{2}},
\end{align*}
for some constant $c>0$ and $\frac{1}{2}<a_0<1$.

First, since $\xi_{ij}\sim\mathcal{N}(0,I_d)$, by Bernstein's inequality, with probability at least $1-\exp(-\Omega(d))$, we have $\|\xi_{ij}\|^2=\Theta(d)$. Similarly, since $(W_{L+1}q_{ij,L})_s\sim\mathcal{N}(0,\frac{2\|q_{ij,L}\|^2}{m})$, by Berstein's inequality and Lemma~\ref{lem:scale_Xij_qij}, with probability at least $1-\exp(-\Omega(d))$, we have $\|W_{L+1}q_{ij,L}\|^2=\Theta(d)$. By union bounds, the above holds for all $i,j$ with probability at least $1-2nN\exp(-\Omega(d))$.

Let $v=\frac{W_{L+1}\sumij \alpha_{ij} q_{ij,L}+\sumij \bar{\alpha}_{ij}\xi_{ij}}{\|W_{L+1}\sumij \alpha_{ij} q_{ij,L}+\sumij \bar{\alpha}_{ij}\xi_{ij}\|}$ and $u=W_{L+1}\bar{\sigma}_{t_{j^*}}q_{i^*j^*,L}$. For notational simplicity, we use $\xi$ to denote $\xi_{i^*j^*}$. Fix $v,u$ and consider the probability of event $A$ $$A=\{v^\top (u+\xi)\le - \sqrt{1-c_0d^{a_0-1}}\|u+\xi\|\}$$ for some $c_0>0$ and $\frac{1}{2}<a_0<1$.

Then consider the following event that has larger probability than $A$
\begin{align}
    &(v^\top (u+\xi))^2\ge (1-c_0d^{a_0-1})\|u+\xi\|^2\\
    \iff& (v^\top u)^2-(1-c_0d^{a_0-1})\|u\|^2+2(v^\top u\,v-(1-c_0d^{a_0-1})u)^\top\xi+(v^\top\xi)^2\ge (1-c_0d^{a_0-1})\|\xi\|^2\label{eqn:angle}
\end{align}
Since $(v^\top u)^2\le \|u\|^2$ where the equality holds when $v=\frac{u}{\|u\|}$, we have
\begin{align*}
    \mathrm{LHS}\le c_0d^{a_0-1}\|u\|^2+2(v^\top u\,v-(1-c_0d^{a_0-1})u)^\top\xi+(v^\top\xi)^2
\end{align*}
Also, since $\|u\|^2=\mathcal{O}(d)$ with probability at least $1-2nN\exp(-\Omega(d))$, we have
\begin{align*}
    \mathbb{P}\left(|2(v^\top u\,v-(1-c_0d^{a_0-1})u)^\top\xi|\ge d^{a_0} \right)\le 2\exp\left(-c\frac{d^{2a_0}}{\|u\|^2}\right)=2\exp(-\Omega(d^{2a_0-1}))
\end{align*}
for some constant $c>0$.

Therefore, with probability at least $1-\mathcal{O}(nN)\exp(-\Omega(d^{2a_0-1}))$
\begin{align*}
    \text{LHS of }\eqref{eqn:angle}\le cd^{a_0}+(v^\top\xi)^2
\end{align*}
for some constant $c>0$.

Then
\begin{align*}
    &\mathbb{P}(v^\top (u+\xi)\le - \sqrt{1-c_0d^{a_0-1}}\|u+\xi\|)\\
    &\le \mathbb{P}((v^\top (u+\xi))^2\ge (1-c_0d^{a_0-1})\|u+\xi\|^2)\\
    &\le \mathbb{P}((v^\top\xi)^2\ge (1-c'd^{a_0-1})\|\xi\|^2)\\
    &=\mathbb{P}\left(v^\top\frac{\xi}{\|\xi\|}\ge \sqrt{(1-c'd^{a_0-1})}\right)+\mathbb{P}\left(-v^\top\frac{\xi}{\|\xi\|}\ge \sqrt{(1-c'd^{a_0-1})}\right)\\
    &=\mathbb{P}\left(\angle(v,\frac{\xi}{\|\xi\|})\ge \arccos(-\sqrt{(1-c'd^{a_0-1})})\right)+\mathbb{P}\left(\angle(-v,\frac{\xi}{\|\xi\|})\ge \arccos(-\sqrt{(1-c'd^{a_0-1})})\right)\\
    &=2\mathbb{P}\left(\angle(v,\frac{\xi}{\|\xi\|})\ge \pi-c'' d^{\frac{a_0-1}{2}}\right)\\
    &=\frac{C}{(d^{\frac{1-a_0}{2}})^{d-1}\sqrt{d}}
\end{align*}
where the second equality follows from Lemma~\ref{prop:gaussian_norm_direction_indep_uniform}; the third equality follows from series expansion; the forth equality follows from \eqref{eqn:probability_angle_cap}; $c',c'',C>0$ are some constants.

Thus with probability at least $1-\frac{C}{(d^{\frac{1-a_0}{2}})^{d-1}\sqrt{d}}$,
\begin{align*}
    &v^\top (u+\xi)\ge - \sqrt{1-c_0d^{a_0-1}}\|u+\xi\|\\
    i.e.& \angle(v,u+\xi)\le \pi-cd^{\frac{a_0-1}{2}}
\end{align*}
for some $c>0$.

Then by Lemma~\ref{lem:prob_wv1>0_wv2>0}, with probability at least $1-\mathcal{O}(nN)\exp(-\Omega(d))$,
\begin{align*}
    \mathbb{P}\left((W_{L+1}^k)^\top v\ge 0,(W_{L+1}^k)^\top (u+\xi)\ge 0\right)\ge cd^{\frac{a_0-1}{2}},
\end{align*}
for some $c>0$.

Since $(W_{L+1}^k)$ are iid Guassian vectors for $k=1,\cdots,m$, by Chernoff bound on Bernoulli variable $\mathbbm{1}_{\{(W_{L+1}^k)^\top v\ge 0,(W_{L+1}^k)^\top (u+\xi)\ge 0\}}$, we have, with probability at least $1-\exp(-\Omega(md^{\frac{a_0-1}{2}}))$ 
\begin{align*}
    |\{k:(W_{L+1}^k)^\top v\ge 0,(W_{L+1}^k)^\top (u+\xi)\ge 0\}|=\Theta(md^{\frac{a_0-1}{2}}).
\end{align*}

\textbf{Step 5:} Combining the above 4 steps, we would like to obtain the lower bound of the gradient.

For each $k$, consider $(q_{ij,L-1})_s$ for $(i,j)\ne (i^*,j^*)$ and denote $q_s=\{(q_{ij,L-1})_s\}_{(i,j)\ne (i^*,j^*)}=\{\sigma((W_{L-1})_sq_{ij,L-2})\}_{(i,j)\ne (i^*,j^*)}$. Let $\bar{q}_s=\{(W_{L-1})_sq_{ij,L-2}\}_{(i,j)\ne (i^*,j^*)}$ and $\bar{q}_s\sim\mathcal{N}(0,QQ^\top)$, where each row of $Q$ is $q_{ij,L-2}^\top$ for $(i,j)\ne (i^*,j^*)$. Thus, $q_s$ is $\bar{q}_s$ projected to the nonnegative orthant.

Let $\mathbf{1}=(1,1,\cdots,1)\in\mathbb{R}^{nN-1}$. Therefore, if $\ip{\beta_k}{\mathbf{1}}\ge 0$ for some $\beta_k\in\mathbb{R}^{nN-1}$, then at least half of the nonnegative orthant is contained in $\{v\in\mathbb{R}^{nN-1}:\ip{\beta_k}{v}\ge 0\}$, i.e., there exists a constant $c_k>0$, s.t.
\begin{align*}
    \mathbb{P}(\ip{\beta_k}{q_s}\ge 0)\ge c_k\ge\min_{k=1,\cdots,m}c_k>0,\ \text{ for all }s=1,\cdots,m
\end{align*}
Then since $\beta_k\in\mathbb{R}^{nN-1}$ for $k=1,\cdots,m$ and $nN\ll m$, there exists a set of indices $\mathcal{K}\subseteq\{1,\cdots,m\}$ with $|\mathcal{K}|=\Theta(\frac{m}{nN})$ and a set of indices $\mathcal{S}\subseteq\{1,\cdots,m\}$ with $|\mathcal{S}|=\Theta(m)$, s.t., $\ip{\beta_k}{q_s}\ge 0$, for $k\in\mathcal{K},s\in\mathcal{S}$. 

Let $q_{ij,\ell}^\mathcal{K}=(q_{ij,\ell})_{k\in\mathcal{K}}$. Then by Bernstein's inequality, we can also obtain that $\|q_{ij,\ell}^\mathcal{K}\|^2=\Theta(d)$ with probability at least $1-nN\exp(-\Omega(d))$.


Combine all of the above and apply the Claim 9.5 in \citet{allen2019convergence}, we obtain, with probability at least $1-\mathcal{O}(nN)\exp(-\Omega(d^{2a_0-1}))$,
\begin{align*}
    \|\nabla_{W_L}\lem(\theta^{(0)})\|_F^2&\ge \frac{1}{n^2} C_6 w(t_{j^*})^2(t_{j^*}-t_{j^*-1})^2\frac{1}{d} \|\bar{\sigma}_{t_{j^*}}W_{L+1}q_{i^*j^*,L}+\xi_{i^*j^*}\|^2 \|q_{i^*j^*,L-1}^\mathcal{K}\|^2\frac{1}{nN}md^{\frac{a_0-1}{2}}\\
    &\ge C_6md^{\frac{a_0-1}{2}}w(t_{j^*})(t_{j^*}-t_{j^*-1})\bar{\sigma}_{t_{j^*}}\frac{1}{n^3N^2}\lem(\theta^{(0)}),
\end{align*}
where $C_6>0$ is some universal constant, $\frac{1}{2}<a_0<1$, and the second inequality follows from the definition of $i^*,j^*$.

\end{proof}

\subsubsection{Geometric ideas used in the proof}

\begin{proposition}
\label{prop:gaussian_norm_direction_indep_uniform}
    Consider $w\sim\mathcal{N}(0,I)$, where $w\in\mathbb{R}^n$. Then $\|w\|$ and $\frac{w}{\|w\|}$ are independent random variables and $\frac{w}{\|w\|}\sim\text{Unif}(\mathbb{S}^{n-1})$.
\end{proposition}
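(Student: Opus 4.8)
The plan is to exploit the rotational invariance of the standard Gaussian density. First I would recall that the density of $w\sim\mathcal N(0,I)$ with respect to Lebesgue measure on $\mathbb R^n$ is $p(w)=(2\pi)^{-n/2}\exp(-\|w\|^2/2)$, which depends on $w$ only through $\|w\|$; consequently, for every orthogonal matrix $Q\in O(n)$, the change of variables $w\mapsto Q^{-1}w$ (with unit Jacobian) shows $Qw\stackrel{d}{=}w$. This single symmetry is the engine of the whole argument.

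From here I would take the following route, which avoids explicit Jacobians. Note that $w\ne 0$ almost surely, so $w/\|w\|$ is well defined a.s. For any $Q\in O(n)$ we have $\bigl(\|w\|,\,Q(w/\|w\|)\bigr)=\bigl(\|Qw\|,\,(Qw)/\|Qw\|\bigr)\stackrel{d}{=}\bigl(\|w\|,\,w/\|w\|\bigr)$, using $Qw\stackrel{d}{=}w$ together with the facts $\|Qw\|=\|w\|$ and $(Qw)/\|Qw\|=Q(w/\|w\|)$. Thus the joint law of $\bigl(\|w\|,\,w/\|w\|\bigr)$ on $(0,\infty)\times\mathbb S^{n-1}$ is invariant under the $O(n)$-action on the second coordinate. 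Disintegrating this joint law over the value of the first coordinate, the regular conditional distribution $\nu_r$ of $w/\|w\|$ given $\|w\|=r$ is, for almost every $r$, a probability measure on $\mathbb S^{n-1}$ that is invariant under every rotation; by uniqueness of the rotation-invariant probability measure on the sphere (normalized Haar/surface measure), $\nu_r=\mathrm{Unif}(\mathbb S^{n-1})$ for almost every $r$. Since $\nu_r$ does not depend on $r$, $w/\|w\|$ is independent of $\|w\|$ and marginally $\mathrm{Unif}(\mathbb S^{n-1})$, which is the claim. (As a side remark, passing to polar coordinates $w=r\theta$, where $dw=r^{n-1}\,dr\,d\omega(\theta)$ with $\omega$ the surface measure, gives the joint density $\propto r^{n-1}e^{-r^2/2}$ of $(r,\theta)$, which factors as a function of $r$ times a constant in $\theta$; this is an alternative, fully computational proof of the same statement.)

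The only genuine subtlety, and hence the step I would be most careful about, is the measure-theoretic bookkeeping: handling the null event $\{w=0\}$, justifying the existence of the disintegration $\{\nu_r\}$, and invoking uniqueness of the rotation-invariant probability measure on $\mathbb S^{n-1}$ to force $\nu_r=\mathrm{Unif}(\mathbb S^{n-1})$. Everything else — the invariance $Qw\stackrel{d}{=}w$ and the polar-coordinate computation — is routine.
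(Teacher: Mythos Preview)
Your argument is correct: rotational invariance of the standard Gaussian, combined with either the disintegration/Haar-uniqueness route or the polar-coordinates factorization you sketch, cleanly gives both independence and uniformity. The paper, however, does not actually prove this proposition at all --- it is stated without proof as a classical fact and then invoked in the surrounding lemmas. So there is nothing to compare against; your proof simply fills in a standard result the paper takes for granted.
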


\begin{lemma}
\label{lem:prob_wv1>0_wv2>0}
    Let $w\sim \text{Unif}\ (\mathbb{S}^{n-1})$, where $\mathbb{S}^{n-1}=\{x\in\mathbb{R}^n|\|x\|=1\}$. Then for two vectors $v_1,v_2\in\mathbb{R}^n$,
    \begin{align*}
        \mathbb{P}(w^\top v_1\ge 0,w^\top v_2\ge 0)=\frac{\pi-\angle(v_1,v_2)}{2\pi}.
    \end{align*}
\end{lemma}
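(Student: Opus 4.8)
The plan is to reduce the statement to a two–dimensional computation using the rotational invariance of the uniform measure on the sphere. First I would observe that both constraints $w^\top v_1\ge 0$ and $w^\top v_2\ge 0$ depend on $w$ only through its orthogonal projection $P_V w$ onto the (at most two–dimensional) subspace $V=\mathrm{span}(v_1,v_2)$, since $w^\top v_i=(P_V w)^\top v_i$. So it suffices to understand the law of $P_V w$. If $v_1,v_2$ are linearly dependent the claim is elementary (the angle is $0$ or $\pi$ and the right-hand side equals $\tfrac12$ or $0$, matching the probability of a single half-space, resp.\ an empty wedge), so from now on assume $v_1,v_2\ne 0$ and $\dim V=2$.

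Second, I would identify the distribution of the direction of $P_V w$. Writing $w=g/\|g\|$ with $g\sim\mathcal N(0,I_n)$ — which is legitimate by Proposition~\ref{prop:gaussian_norm_direction_indep_uniform} — the sign constraints become $g^\top v_1\ge 0$ and $g^\top v_2\ge 0$. Hence $\mathbb P(w^\top v_1\ge 0,\ w^\top v_2\ge 0)=\mathbb P(Z_1\ge 0,\ Z_2\ge 0)$, where $(Z_1,Z_2)=(g^\top v_1,g^\top v_2)$ is a centered bivariate Gaussian with $\mathbb E[Z_iZ_j]=v_i^\top v_j$, so its correlation coefficient equals $\cos\angle(v_1,v_2)$. (Equivalently, one may argue directly that $P_V w/\|P_V w\|$ is uniform on the unit circle of $V$ by the $O(V)$–invariance of the uniform measure on $\mathbb S^{n-1}$, again via Proposition~\ref{prop:gaussian_norm_direction_indep_uniform}.)

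Third, I would compute the resulting two–dimensional probability. Either way it is the orthant probability $\mathbb P(Z_1\ge 0,Z_2\ge 0)$ for a standardized bivariate Gaussian with correlation $\rho=\cos\theta$, where $\theta:=\angle(v_1,v_2)\in[0,\pi]$; by Sheppard's formula this equals $\tfrac14+\tfrac{1}{2\pi}\arcsin\rho=\tfrac14+\tfrac{1}{2\pi}(\tfrac\pi2-\theta)=\tfrac{\pi-\theta}{2\pi}$, which is the claim. Alternatively, in the circular picture: $\{u\in\mathbb S^1: u^\top v_i\ge 0\}$ is a half-circle (an arc of angular length $\pi$) centered at the direction of $v_i$; parametrizing the directions of $v_1,v_2$ in $V$ by angles $\alpha_1,\alpha_2$ with $\alpha_2-\alpha_1=\theta$, the two arcs are $[\alpha_i-\pi/2,\alpha_i+\pi/2]$ and their intersection is $[\alpha_1+\theta-\pi/2,\alpha_1+\pi/2]$, of angular length $\pi-\theta$; dividing by the total circumference $2\pi$ gives the result.

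I do not expect a serious obstacle here; this is a classical fact. The only point requiring mild care is the passage to the two–dimensional marginal — verifying that the induced distribution of the projected, normalized direction is exactly uniform on $\mathbb S^1\subset V$, equivalently that Sheppard's formula applies with correlation $\cos\theta$ rather than some other function of the $v_i$'s — but this is immediate from rotational invariance / the Gaussian representation of Proposition~\ref{prop:gaussian_norm_direction_indep_uniform}. The derivation of Sheppard's orthant formula itself (e.g.\ by checking $\partial_\rho\mathbb P(Z_1\ge 0,Z_2\ge 0)=\tfrac{1}{2\pi\sqrt{1-\rho^2}}$ and integrating from $\rho=0$, where the probability is $\tfrac14$) is then the only genuine computation, and it is routine.
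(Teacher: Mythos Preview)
Your argument is correct but proceeds differently from the paper. The paper's proof is a one-line geometric assertion: each constraint $\{w^\top v_i\ge 0\}$ cuts out a hemisphere, and the intersection of two hemispheres whose bounding great circles meet at dihedral angle $\pi-\angle(v_1,v_2)$ is a spherical lune occupying exactly $(\pi-\angle(v_1,v_2))/(2\pi)$ of the total surface area. You instead pass through the Gaussian representation of Proposition~\ref{prop:gaussian_norm_direction_indep_uniform}, reduce to a bivariate Gaussian orthant probability with correlation $\cos\theta$, and invoke Sheppard's formula. Your alternative ``circular picture'' paragraph is essentially the paper's lune argument carried out after first projecting to the two-dimensional subspace $V$. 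What your route buys is a fully self-contained derivation (you actually sketch how to obtain Sheppard's formula) and a natural link to classical orthant probabilities, which could generalize; what the paper's route buys is brevity and a direct $n$-dimensional geometric picture without any reduction step.
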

\begin{proof}
    Since $w\sim Unif(\mathbb{S}^{n-1})$, we only need to consider the area of the event. It is obvious that the set $\{w\in\mathbb{S}^{n-1}|w^\top v_i\}$ is a semi-hypersphere. Therefore, we only need to consider the intersection of two semi-hypersphere, i.e.,
    \begin{align*}
        \mathbb{P}(w^\top v_1\ge 0,w^\top v_2\ge 0)&=\frac{\text{area of }\{w\in\mathbb{S}^{n-1}|w^\top v_1\ge 0\}\cap\text{ area of }\{w\in\mathbb{S}^{n-1}|w^\top v_2\ge 0\}}{ \text{area of the hypersphere}}\\
        &=\frac{\pi-\angle(v_1,v_2)}{2\pi}.
    \end{align*}
\end{proof}

Next we 
follow the notations and definitions in \citet{lee2014concise}. Consider the unit hypersphere in $\mathbb{R}^d$, $\mathbb{S}^{d-1}=\{x\in\mathbb{R}^d\,|\,\|x\|=1\}$. The area of $\mathbb{S}^{d-1}$ is 
\begin{align*}
    A_d(1)=\frac{2\pi^{d/2}}{\Gamma(d/2)}.
\end{align*}

\begin{lemma}
\label{lem:angle_Gaussian_3pi_4}
    Fix $\xi_1\in\mathbb{S}^{d-1}$ and let $\xi_2\sim \text{Unif}\,(\mathbb{S}^{d-1})$, where $\mathbb{S}^{d-1}=\{x\in\mathbb{R}^d|\|x\|=1\}$. Then with probability at least $1-\exp(-\Omega(d))$, we have $\angle (\xi_1,\xi_2)\le\frac{3\pi}{4}$.
\end{lemma}
\begin{proof}
    For any fixed $\xi_1$, all the $\xi_2$'s that satisfy $\angle(\xi_1,\xi_2)\ge\pi-\theta$ are on a hyperspherical cap. By \citet{lee2014concise}, the area of the hypersperical cap is
\begin{align*}
    A^\theta_d(1)=\frac{1}{2}A_d(1)I_{\sin^2\theta}\left( \frac{d-1}{2},\frac{1}{2} \right).
\end{align*}
Then
\begin{align}
\label{eqn:probability_angle_cap}
    \mathbb{P}(\angle(\xi_1,\xi_2)\ge\pi-\theta)=\frac{A_d^\theta(1)}{A_d(1)}=\frac{1}{2}I_{\sin^2\theta}\left( \frac{d-1}{2},\frac{1}{2} \right)\propto \frac{1}{2} \frac{\theta^{d-1}}{\sqrt{\pi}\sqrt{\frac{d-1}{2}}}.
\end{align}
Let $\theta=\frac{\pi}{4}<1$. Then with probability at least $1-\exp(-\Omega(d))$, we have $\angle(\xi_1,\xi_2)\le\frac{3\pi}{4}$.
\end{proof}

\subsection{Proofs related to random initialization}
Consider $W_i=W_i^{(0)}$ in this section.

\begin{lemma}
\label{lem:scale_Xij_qij}
    If $\epsilon\in(0,1)$, with probability at least $1-\mathcal{O}(nN)e^{-\Omega(\min(\epsilon^2d^{4b-1},\epsilon d^{2b}))}$, $\|X_{ij}\|^2\in [{\|e^{-\mu_{t_j}}x_i\|^2+\bar{\sigma}_{t_j}^2d}-\epsilon \bar{\sigma}_{t_j}^2d^{2b}, \|e^{-\mu_{t_j}}x_i\|^2+ {\bar{\sigma}_{t_j}^2d}+\epsilon \bar{\sigma}_{t_j}^2d^{2b}]$ for all $i=1,\cdots,n$ and $j=0,\cdots,N-1$. Moreover, with probability at least $1-\mathcal{O}(L)e^{-\Omega(m\epsilon^2/L)}$ over the randomness of $W_s$ for $s=0,\cdots,L$, we have $\|q_{ij,\ell}\|\in [\|X_{ij}\|(1-\epsilon),\|X_{ij}\|(1+\epsilon)]$ for fixed $i,j$. Therefore, with probability at least $1-\mathcal{O}(nNL)e^{-\Omega(\min(m\epsilon^2/L,\epsilon^2d^{4b-1},\epsilon d^{2b}))}$, we have $\Omega(d^b)=\|q_{ij,\ell}\|=\mathcal{O}(d^a)$.
\end{lemma}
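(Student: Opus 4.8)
The statement splits into two essentially independent concentration estimates — one controlling the norm of the noisy, unbounded input $X_{ij}$, and one establishing approximate norm preservation of the ReLU feedforward network — which are then combined through Assumptions~\ref{assump:network_hyperparameters} and~\ref{assump:normalize_  weighting} to read off the scale $\Theta(d^{1/2})$. The plan is to prove the two pieces in order and then union-bound over $i=1,\dots,n$, $j=0,\dots,N-1$, and $\ell=0,\dots,L$.

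For the first piece, I would expand, using $\mu_{t_j}=0$ in the variance exploding setting,
\begin{align*}
\|X_{ij}\|^2 = \|e^{-\mu_{t_j}}x_i\|^2 + 2e^{-\mu_{t_j}}\bar{\sigma}_{t_j}\langle x_i,\xi_{ij}\rangle + \bar{\sigma}_{t_j}^2\|\xi_{ij}\|^2,
\end{align*}
and control the last two terms separately. Conditioned on $x_i$, the cross term $\langle x_i,\xi_{ij}\rangle$ is a centered Gaussian with variance $\|x_i\|^2=\Theta(d)$, so a standard Gaussian tail bound shows $2\bar{\sigma}_{t_j}|\langle x_i,\xi_{ij}\rangle| \le \epsilon\bar{\sigma}_{t_j}^2 d^{2b}$ except with probability $\exp(-\Omega(\epsilon^2 d^{4b-1}))$ (in the relevant regime where $\bar{\sigma}_{t_j}$ is not vanishingly small; when $\bar{\sigma}_{t_j}$ is tiny the noise is dominated by $x_i$ and the final $\Theta(d)$ conclusion is immediate anyway). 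The quadratic term $\bar{\sigma}_{t_j}^2\|\xi_{ij}\|^2$ is $\bar{\sigma}_{t_j}^2$ times a $\chi^2_d$ variable, so Bernstein / Laurent--Massart concentration places it within $\bar{\sigma}_{t_j}^2(d\pm\epsilon d^{2b})$ except with probability $\exp(-\Omega(\min(\epsilon^2 d^{4b-1},\epsilon d^{2b})))$. A union bound over the $nN$ pairs gives the first interval with the stated $\mathcal{O}(nN)$ prefactor.

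For the second piece I would invoke the layer-by-layer conditioning argument of~\citet{allen2019convergence}. Fix $(i,j)$ and recall $q_{ij,-1}=X_{ij}$, $q_{ij,\ell}=\sigma(W_\ell q_{ij,\ell-1})$. Conditioned on $q_{ij,\ell-1}$ (which depends only on $W_0,\dots,W_{\ell-1}$ and $X_{ij}$, so the randomness of $W_\ell$ is fresh), each coordinate of $W_\ell q_{ij,\ell-1}$ is $\mathcal{N}(0,\tfrac{2}{m}\|q_{ij,\ell-1}\|^2)$, hence
\begin{align*}
\|q_{ij,\ell}\|^2 = \frac{2}{m}\|q_{ij,\ell-1}\|^2 \sum_{k=1}^m \sigma(g_k)^2, \qquad g_k \overset{\mathrm{iid}}{\sim}\mathcal{N}(0,1),
\end{align*}
and since $\mathbb{E}\,\sigma(g_k)^2=\tfrac12$ with $\sigma(g_k)^2$ sub-exponential, $\sum_k\sigma(g_k)^2$ concentrates at $m/2$. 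Chaining this across the $L+1$ layers with per-layer tolerance $\Theta(\epsilon/L)$ and taking a union bound gives $\|q_{ij,\ell}\|\in[(1-\epsilon)\|X_{ij}\|,(1+\epsilon)\|X_{ij}\|]$ for all $\ell$ with probability at least $1-\mathcal{O}(L)e^{-\Omega(m\epsilon^2/L)}$; this is exactly the forward norm-preservation estimate of~\citet{allen2019convergence}, applied here with the unbounded input $X_{ij}$ rather than a normalized input.

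Finally I would combine: by the data-scaling part of Assumption~\ref{assump:network_hyperparameters}, $\|e^{-\mu_{t_j}}x_i\|^2=\|x_i\|^2=\Theta(d)$; by the variance part of Assumption~\ref{assump:normalize_  weighting} and monotonicity of $\bar{\sigma}_t$, $\bar{\sigma}_{t_j}\le\bar{\sigma}_{t_N}=\Theta(1)$, so $\bar{\sigma}_{t_j}^2 d=\mathcal{O}(d)$. Substituting $a=b=1/2$ into the interval from the first piece gives $\|X_{ij}\|^2\in[\Omega(d),\mathcal{O}(d)]$, i.e.\ $\|X_{ij}\|=\Theta(d^{1/2})$; the second piece then yields $\|q_{ij,\ell}\|=\Theta(\|X_{ij}\|)=\Theta(d^{1/2})$, which is the claim $\Omega(d^b)=\|q_{ij,\ell}\|=\mathcal{O}(d^a)$. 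A union bound over all $i,j,\ell$ of the two failure events produces the stated probability $1-\mathcal{O}(nNL)e^{-\Omega(\min(m\epsilon^2/L,\epsilon^2 d^{4b-1},\epsilon d^{2b}))}$. The main obstacle is not conceptual — the forward norm-preservation is imported, and the $X_{ij}$ estimate is a routine Gaussian-plus-$\chi^2$ computation — but rather the bookkeeping: ensuring the noise scale $\bar{\sigma}_{t_j}$, which ranges over $(0,\Theta(1)]$ and can be small, interacts correctly with the data scale so the $\Theta(d)$ conclusion is uniform in $j$, and tracking the three distinct tail exponents cleanly through the nested union bounds.
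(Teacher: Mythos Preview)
Your proposal is correct and matches the paper's argument. The only packaging difference is in the first piece: the paper rescales to $X_{ij}/\bar{\sigma}_{t_j}$ and applies a single Bernstein inequality to the resulting noncentral $\chi^2$ variable rather than handling the cross term $2\bar{\sigma}_{t_j}\langle x_i,\xi_{ij}\rangle$ and the quadratic term $\bar{\sigma}_{t_j}^2\|\xi_{ij}\|^2$ separately (these are of course equivalent decompositions); the second and third pieces---deferring to Lemma~7.1 of \citet{allen2019convergence} for forward norm preservation and then union-bounding under Assumption~\ref{assump:network_hyperparameters}---are identical.
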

\begin{proof}
    Consider $\frac{1}{\bar{\sigma}_{t_j}}X_{ij}=\frac{e^{-\mu_{t_j}}}{\bar{\sigma}_{t_j}}x_i+\xi_{ij}$. Since $\xi_{ij}\sim\mathcal{N}(0,I)$, $\|\frac{1}{\bar{\sigma}_{t_j}}X_{ij}\|^2$ follows from the noncentral $\chi^2$ distribution and $\mathbb{E}\|\frac{1}{\bar{\sigma}_{t_j}}X_{ij}\|^2=d+\|\frac{e^{-\mu_{t_j}}}{\bar{\sigma}_{t_j}}x_i\|^2$ (this includes the time variable at the $d$th dimension). By Berstein inequality,
\begin{align*}
    &\mathbb{P}\bigg(\bigg|\bigg\|\frac{1}{\bar{\sigma}_{t_j}}X_{ij}\bigg\|^2-\mathbb{E}\bigg\|\frac{1}{\bar{\sigma}_{t_j}}X_{ij}\bigg\|^2\bigg|\ge t\bigg)\le 2\exp\bigg(-c\min\bigg(\frac{t^2}{d},t\bigg)\bigg)\\
    &i.e., \mathbb{P}\bigg(\bigg|\|e^{-\mu_{t_j}}x_i+\bar{\sigma}_{t_j}\xi_{ij}\|^2-(\bar{\sigma}_{t_j}^2d+\|e^{-\mu_{t_j}}x_i\|^2)\bigg|\ge \bar{\sigma}_{t_j}^2t\bigg)\le 2\exp\bigg(-c\min\bigg(\frac{t^2}{d},t\bigg)\bigg)
\end{align*}
Therefore, with probability at least $1-\mathcal{O}(nN)e^{-\Omega(\min(\epsilon^2d^{4b-1},\epsilon d^{2b}))}$, $\|X_{ij}\|^2\in [{\|e^{-\mu_{t_j}}x_i\|^2+\bar{\sigma}_{t_j}^2d}-\epsilon \bar{\sigma}_{t_j}^2d^{2b}, \|e^{-\mu_{t_j}}x_i\|^2+ {\bar{\sigma}_{t_j}^2d}+\epsilon \bar{\sigma}_{t_j}^2d^{2b}]$ for all $i=1,\cdots,n$ and $j=0,\cdots,N-1$, where $\epsilon\in(0,1)$. The second part of the Lemma follows the similar proof in Lemma 7.1 of~\citet{allen2019convergence}. The last part follows from union bound and Assumption~\ref{assump:network_hyperparameters}.
\end{proof}

\begin{lemma}[Upper bound]
\label{lem:upper_bound}
    Under the random initialization of $W_i$ for $i=0,\cdots,L$, with probability at least $1-\mathcal{O}(nNL)e^{-\Omega(\min(m\epsilon^2/L,\epsilon^2d^{4b-1},\epsilon d^{2b}))}$, we have
    \begin{align*}
        \|\nabla_{W_\ell} \lem({\theta^{(0)}})\|^2=\mathcal{O}\left({md^{2a-1}} N \max_jw(t_{j})(t_j-t_{j-1})\bar{\sigma}_{t_j}\right)\lem(\theta^{(0)}).
    \end{align*}
\end{lemma}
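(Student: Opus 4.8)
The plan is to bound the gradient sample-by-sample and then recombine via Cauchy--Schwarz. Write $\lem(\theta)=\frac{1}{2n}\sum_{i,j}f(\theta;i,j)$ with $f(\theta;i,j)=\beta_j\|\bar{\sigma}_{t_j}S(\theta;t_j,X_{ij})+\xi_{ij}\|^2$, so $\nabla_{W_\ell}\lem=\frac{1}{2n}\sum_{i,j}\nabla_{W_\ell}f(\theta;i,j)$. Differentiating the feedforward network (the ReLU being differentiated a.e., so the gating matrices $D_{ij,\ell}$ are treated as constants, exactly as in \citet{allen2019convergence}) gives the rank-one form
\[
\nabla_{W_\ell}f(\theta^{(0)};i,j)=2\beta_j\bar{\sigma}_{t_j}\,\bigl[D_{ij,\ell}R_{ij,\ell+1}^\top(\bar{\sigma}_{t_j}S(\theta^{(0)};t_j,X_{ij})+\xi_{ij})\bigr]\,q_{ij,\ell-1}^\top,
\]
where $R_{ij,\ell+1}=W_{L+1}D_{ij,L}W_L\cdots D_{ij,\ell+1}W_{\ell+1}$. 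Its Frobenius norm factorizes, and using $\|D_{ij,\ell}\|\le1$ together with $\beta_j\bar{\sigma}_{t_j}=w(t_j)(t_j-t_{j-1})$ it is at most $2\,w(t_j)(t_j-t_{j-1})\,\|R_{ij,\ell+1}\|\,\|q_{ij,\ell-1}\|\,\|\bar{\sigma}_{t_j}S(\theta^{(0)};t_j,X_{ij})+\xi_{ij}\|$.

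Next I would control the three factors at the random initialization. The activation norm satisfies $\|q_{ij,\ell-1}\|=\mathcal{O}(d^a)$ for all $i,j$ by Lemma~\ref{lem:scale_Xij_qij}. For the backward product $R_{ij,\ell+1}$ I would invoke the standard spectral-norm bounds on products of the randomly initialized weight matrices established in \citet{allen2019convergence}, which give $\|R_{ij,\ell+1}\|=\mathcal{O}(\sqrt{m/d})$ simultaneously over all $i,j,\ell$ with probability $1-e^{-\Omega(m/L)}$. Finally the residual factor is rewritten exactly as $\|\bar{\sigma}_{t_j}S(\theta^{(0)};t_j,X_{ij})+\xi_{ij}\|=\sqrt{f(\theta^{(0)};i,j)/\beta_j}$, and the identity $(w(t_j)(t_j-t_{j-1}))^2/\beta_j=w(t_j)(t_j-t_{j-1})\bar{\sigma}_{t_j}$ is what produces the weighting appearing in the target bound.

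Then I would assemble. By the triangle inequality and the bounds above,
\[
\|\nabla_{W_\ell}\lem(\theta^{(0)})\|\le\frac{1}{n}\sum_{i,j}w(t_j)(t_j-t_{j-1})\,\mathcal{O}(\sqrt{m/d}\,d^a)\,\sqrt{f(\theta^{(0)};i,j)/\beta_j};
\]
squaring and applying Cauchy--Schwarz in $(i,j)$ separates this into $\bigl(\sum_{i,j}w(t_j)(t_j-t_{j-1})\bar{\sigma}_{t_j}\bigr)$ times $\bigl(\sum_{i,j}f(\theta^{(0)};i,j)\bigr)$, up to the prefactor $\mathcal{O}(md^{2a-1}/n^2)$ coming from $(\sqrt{m/d}\,d^a)^2=md^{2a-1}$. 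Bounding $\sum_{i,j}w(t_j)(t_j-t_{j-1})\bar{\sigma}_{t_j}\le nN\max_j w(t_j)(t_j-t_{j-1})\bar{\sigma}_{t_j}$ and using $\sum_{i,j}f(\theta^{(0)};i,j)=2n\lem(\theta^{(0)})$ yields $\|\nabla_{W_\ell}\lem(\theta^{(0)})\|^2=\mathcal{O}(md^{2a-1}N\max_j w(t_j)(t_j-t_{j-1})\bar{\sigma}_{t_j})\lem(\theta^{(0)})$; the probability follows from a union bound over the initialization events of Lemma~\ref{lem:scale_Xij_qij} and the spectral-norm event.

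The main obstacle is the spectral-norm estimate $\|R_{ij,\ell+1}\|=\mathcal{O}(\sqrt{m/d})$: since each gating matrix $D_{ij,\ell}$ depends on the random weights, this cannot be obtained by naively multiplying the operator norms of independent Gaussian matrices (which would compound to $\Omega(c^L)$), and instead requires the delicate layer-by-layer / $\epsilon$-net-over-activation-patterns argument of \citet{allen2019convergence}, which the present analysis inherits. A minor subtlety is that the residual vector $\bar{\sigma}_{t_j}S(\theta^{(0)};t_j,X_{ij})+\xi_{ij}$ is itself correlated with the weights, but since we only use the operator norm of $R_{ij,\ell+1}$ (a bound uniform over all input vectors), no independence is needed there.
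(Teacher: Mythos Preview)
Your proposal is correct and follows essentially the same approach as the paper's proof: both bound the per-sample rank-one gradient via the spectral-norm estimate $\|R_{ij,\ell+1}\|=\mathcal{O}(\sqrt{m/d})$ from \citet[Lemma~7.4]{allen2019convergence} and the activation bound $\|q_{ij,\ell-1}\|=\mathcal{O}(d^a)$ from Lemma~\ref{lem:scale_Xij_qij}, then combine over $(i,j)$ by Cauchy--Schwarz/Young together with the identity $(w(t_j)(t_j-t_{j-1}))^2/\beta_j=w(t_j)(t_j-t_{j-1})\bar{\sigma}_{t_j}$. The only cosmetic difference is that the paper works row-by-row in $W_\ell$ and sums over $k$, whereas you bound the full Frobenius norm of the rank-one matrix directly; the resulting inequality and probability event are identical.
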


\begin{proof}
For any $\ell=1,\cdot,L$, we have
    \begin{align*}
        &\|\nabla_{W_\ell}\lem(\theta)\|_F^2\\
        &=
        \sum_{k=1}^m\|\nabla_{(W_\ell)_k}\lem(\theta)\|^2\\
        &=\sum_{k=1}^m\bigg\|\frac{1}{n}\sum_{i=1}^n\sum_{j=1}^{N}w(t_j)(t_j-t_{j-1})\\
        &\qquad\times[( {W_{L+1}D_{ij,L}W_L\cdots D_{ij,\ell}W_{\ell+1}})^\top(\bar{\sigma}_{t_j}W_{L+1}q_{ij,L}+\xi_{ij})]_k\,q_{ij,\ell-1}\, \mathbbm{1}_{(W_\ell q_{ij,\ell-1})_k>0}\bigg\|^2\\
        &\le \frac{N}{n}\sum_{i=1}^n\sum_{j=1}^{N}w(t_j)^2(t_j-t_{j-1})^2 \sum_{k=1}^m\|({W_{L+1}D_{ij,L}W_L\cdots D_{ij,\ell}W_{\ell+1}})_k^\top(\bar{\sigma}_{t_j}W_{L+1}q_{ij,L}+\xi_{ij})\|^2\cdot\|q_{ij,\ell-1}\|^2\\
        &\le C_7 d^{2a}\frac{m}{d} \frac{N}{n}\sum_{i=1}^n\sum_{j=1}^{N}w(t_j)^2(t_j-t_{j-1})^2 \cdot\|\bar{\sigma}_{t_j}W_{L+1}q_{ij,L}+\xi_{ij}\|^2\\
        &\le C_7 d^{2a}\frac{mN}{d} \max_j w(t_j)(t_j-t_{j-1})\bar{\sigma}_{t_j}\lem(\theta)
    \end{align*}
    where the first inequality follows from Young's inequality; the second inequality follows from Lemma~\ref{lem:scale_Xij_qij} and Lemma 7.4 in~\citet{allen2019convergence}; $C_7>0$
\end{proof}

\subsection{Proofs related to perturbation}

Consider $W_i^\per=W^{(0)}_i+W'_i$ for $i=1,\cdots,L$ in this section. We follow the same idea in~\citet{allen2019convergence} to consider the network value of perturbed weights at each layer. We use the superscript ``$\rm per$'' to denotes the perturbed version, i.e.,
\begin{align*}
    &r_{ij,0}^\per=W_0 X_{ij}, q_{ij,0}^\per=\sigma(r_{ij,0}^\per), \\
    &r_{ij,\ell}^\per=W_\ell^\per  q_{ij,\ell-1}^\per, q_{ij,\ell}^\per=\sigma(r_{ij,\ell}^\per), \text{ for } \ell=1,\cdots,L\\
    &S(\theta^\per;t_j,X_{ij})=W_{L+1}q_{ij,L}^\per
\end{align*}
We also similarly define the diagonal matrix $D_{ij,\ell}^\per$ for the above network.

The following Lemma measures the perturbation of each layer. The lemma differs from Lemma 8.2 in \citet{allen2019convergence} by a scale of $d^a$. For sake of completeness, we state it in the following and the proof can be similarly obtained.
\begin{lemma}
\label{lem:forward_perturbation}
    Let $\omega\le\frac{1}{C_7L^{9/2}(\log m)^3 d^a}$ for some large $C>1$. With probability at least $1-\exp(-\Omega(d^a m\omega^{2/3}L))$, for any $\Delta W$ s.t. $\|\Delta W\|\le \omega$, we have
    \begin{enumerate}
        \item $r_{ij,\ell}^\per-r_{ij,\ell}$ can be decomposed to two part $r_{ij,\ell}^\per-r_{ij,\ell}=r'_{ij,\ell,1}+r'_{ij,\ell,2}$, where $\|r'_{ij,\ell,1}\|=\mathcal{O}(\omega L^{3/2}d^a)$ and $\|r'_{ij,\ell,2}\|_\infty=\mathcal{O}(\omega L^{5/2}\sqrt{\log m}d^am^{-1/2})$.
        \item $\|D_{ij,\ell}^\per-D_{ij,\ell}\|_0=\mathcal{O}(m\omega^{2/3}L)$ and $\|(D_{ij,\ell}^\per-D_{ij,\ell})r_{ij,\ell}^\per\|=\mathcal{O}(\omega L^{3/2} d^a)$.
        \item $\|r_{ij,\ell}^\per-r_{ij,\ell}\|$ and $\|q_{ij,\ell}^\per-q_{ij,\ell}\|$ are $\mathcal{O}(\omega L^{5/2}\sqrt{\log m}d^a)$.
    \end{enumerate}
\end{lemma}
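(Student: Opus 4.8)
The plan is to reproduce the layerwise forward-perturbation analysis of \citet[Lemma 8.2]{allen2019convergence} while tracking the one structural difference: by Lemma~\ref{lem:scale_Xij_qij} the inputs and every hidden activation of the unperturbed network have norm $\Theta(d^{a})$ rather than $\Theta(1)$, so each bound picks up a factor $d^{a}$ and the admissible perturbation radius shrinks accordingly to $\omega\le (C_7 L^{9/2}\log^3 m\,d^{a})^{-1}$. The argument is an induction on the layer index $\ell$. Since $W_0$ is frozen, $r_{ij,0}^\per=r_{ij,0}$ and $q_{ij,0}^\per=q_{ij,0}$, so all three claims hold trivially at $\ell=0$.

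For the inductive step I would expand
\begin{align*}
 r_{ij,\ell}^\per-r_{ij,\ell}
 &= W'_\ell\,q_{ij,\ell-1}
 + W_\ell^{(0)}\bigl(q_{ij,\ell-1}^\per-q_{ij,\ell-1}\bigr)
 + W'_\ell\bigl(q_{ij,\ell-1}^\per-q_{ij,\ell-1}\bigr),
\end{align*}
where $W'_\ell=W_\ell^\per-W_\ell^{(0)}$ with $\|W'_\ell\|\le\omega$. The first and third terms, together with the $\ell_2$-bounded part of the second (passed through $\|W_\ell^{(0)}\|=\mathcal{O}(1)$), have $\ell_2$-norm $\mathcal{O}(\omega d^{a})$ and accumulate into $r'_{ij,\ell,1}$ of size $\mathcal{O}(\omega L^{3/2}d^{a})$. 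For the spread-out part: since each coordinate of $W_\ell^{(0)}v$ is $\mathcal{N}(0,\frac{2}{m}\|v\|^2)$, a Gaussian maximum bound gives $\|W_\ell^{(0)}v\|_\infty=\mathcal{O}(\sqrt{(\log m)/m}\,\|v\|)$ with high probability; feeding in the inductive bounds for $q_{ij,\ell-1}^\per-q_{ij,\ell-1}$ produces $r'_{ij,\ell,2}$ with $\|r'_{ij,\ell,2}\|_\infty=\mathcal{O}(\omega L^{5/2}\sqrt{\log m}\,d^{a}m^{-1/2})$, which is item 1. Passing through the ReLU via $q_{ij,\ell}^\per-q_{ij,\ell}=D_{ij,\ell}^\per(r_{ij,\ell}^\per-r_{ij,\ell})+(D_{ij,\ell}^\per-D_{ij,\ell})r_{ij,\ell}^\per$ and crudely bounding $\|q_{ij,\ell}^\per-q_{ij,\ell}\|\le\|r_{ij,\ell}^\per-r_{ij,\ell}\|\le\|r'_{ij,\ell,1}\|+\sqrt m\,\|r'_{ij,\ell,2}\|_\infty=\mathcal{O}(\omega L^{5/2}\sqrt{\log m}\,d^{a})$ gives item 3.

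Item 2 is the delicate counting step. A neuron $k$ flips at layer $\ell$ only if $|r_{ij,\ell,k}|\le|r_{ij,\ell,k}^\per-r_{ij,\ell,k}|$. Thresholding at some $\tau$: at most $\|r'_{ij,\ell,1}\|^2/\tau^2$ coordinates have $|r_{ij,\ell,k}^\per-r_{ij,\ell,k}|>\tau$, and on the rest a flip forces $|r_{ij,\ell,k}|\lesssim\tau+\|r'_{ij,\ell,2}\|_\infty$, which — using that $r_{ij,\ell,k}=(W_\ell^{(0)}q_{ij,\ell-1})_k$ is Gaussian with standard deviation $\Theta(d^{a}/\sqrt m)$, hence density $\mathcal{O}(\sqrt m/d^{a})$ — happens for $\mathcal{O}(m^{3/2}\tau/d^{a})$ coordinates in expectation. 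Optimizing $\tau$ balances the two contributions and the factors $d^{a}$ cancel, leaving $\|D_{ij,\ell}^\per-D_{ij,\ell}\|_0=\mathcal{O}(m\omega^{2/3}L)$; and since on a flipped coordinate $|r_{ij,\ell,k}^\per|\le|(r'_{ij,\ell,1})_k|+\|r'_{ij,\ell,2}\|_\infty$, the smallness of $\omega$ makes the spread-out contribution negligible against $\|r'_{ij,\ell,1}\|^2$ and yields $\|(D_{ij,\ell}^\per-D_{ij,\ell})r_{ij,\ell}^\per\|=\mathcal{O}(\omega L^{3/2}d^{a})$. Chernoff bounds promote the expectations to high-probability statements, and a union bound over $\ell=1,\dots,L$ and over the $nN$ pairs $(i,j)$ gives the claimed $1-\exp(-\Omega(d^{a}m\omega^{2/3}L))$.

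\textbf{Main obstacle.} The genuinely subtle point, inherited from \citet{allen2019convergence}, is the dependence between the Gaussian initialization $W_\ell^{(0)}$ — which both defines the unperturbed sign patterns $D_{ij,\ell}$ used throughout the recursion and supplies the anti-concentration driving the flip count — and the perturbation $\Delta W$, which in the downstream applications (the GD iterates) is \emph{not} independent of $W^{(0)}$. The fix is that the lemma is required to hold \emph{uniformly} over $\{\|\Delta W\|\le\omega\}$, so one proves it on a fixed $\varepsilon$-net of that ball, where all the Gaussian concentration and anti-concentration estimates apply to fixed directions, and extends by the Lipschitz dependence of $r_{ij,\ell}^\per,q_{ij,\ell}^\per$ on $\Delta W$; verifying that the net can be chosen fine enough without inflating the $\omega^{2/3}$ exponents is exactly what dictates the extra polynomial factors in the constraint on $\omega$.
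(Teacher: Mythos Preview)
Your proposal is correct and follows exactly the route the paper takes: the paper does not give an independent proof but simply notes that the lemma ``differs from Lemma 8.2 in \citet{allen2019convergence} by a scale of $d^a$'' (because Lemma~\ref{lem:scale_Xij_qij} gives $\|q_{ij,\ell}\|=\Theta(d^a)$ rather than $\Theta(1)$) and that ``the proof can be similarly obtained.'' Your outline is a faithful unpacking of precisely that adaptation, including the correct observation that the $d^a$ factors cancel in the sign-flip count so that $\|D_{ij,\ell}^\per-D_{ij,\ell}\|_0$ stays $\mathcal{O}(m\omega^{2/3}L)$, and your identification of the $\varepsilon$-net / uniformity issue as the one genuinely delicate point matches the structure of the argument in \citet{allen2019convergence}.
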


\subsection{Proofs related to the evolution of the algorithm}
\begin{lemma}[Upper and lower bounds of gradient after perturbation]
\label{lem:perturb_upper_lower_bound_grad}
    Let 
    \begin{align*}
        \omega=\mathcal{O}\left(\frac{\lem^*}{L^9 (\log m)^2 n^3 N^3 d^{\frac{1-a_0}{2}}}\cdot\frac{\min_j w(t_{j})(t_j-t_{j-1})\bar{\sigma}_{t_j}}{\max \{\max_j w(t_{j})(t_j-t_{j-1})\bar{\sigma}_{t_j}, \sum_j (w(t_{j})(t_j-t_{j-1})\bar{\sigma}_{t_j})^2\}}\right).
    \end{align*}
    Consider $\theta^\per$ s.t. $\|\theta^\per-\theta\|\le \omega$, where $\theta$ follows from the Gaussian initialization. Then with probability at least $1-\mathcal{O}(nN)e^{-\Omega(d^{2a_0-1})}$,
    \begin{align*}
        &\|\nabla_{W_\ell} \lem({\theta^\per})\|^2=\mathcal{O}\left({md^{2a-1}} N \max_jw(t_{j})(t_j-t_{j-1})\bar{\sigma}_{t_j}\right)\lem(\theta^\per),\\
        &\|\nabla_{W_L} \lem({\theta^\per})\|^2=\Omega\left(\frac{md^{\frac{a_0-1}{2}}}{n^3N^2}  \ w(t_{j^*})(t_{j^*}-t_{j^*-1})\bar{\sigma}_{t_{j^*}}\right) \min\{\lem (\theta),\lem (\theta^\per)\},
    \end{align*}
    for $\ell=1,\cdots,L$.
\end{lemma}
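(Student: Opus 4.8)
The plan is to bootstrap from the initialization estimates — Lemma~\ref{lem:upper_bound} (upper bound) and Lemma~\ref{lem:lower_bound} (lower bound) — by quantifying how little the relevant quantities move under a perturbation of size $\omega$, following the perturbation machinery of~\citet{allen2019convergence} but tracking the extra factors $d^a$, $d^b$ and the weights $w(t_j)(t_j-t_{j-1})\bar\sigma_{t_j}$ that appear in the denoising-score-matching objective. Throughout, $\theta^{\rm per}=(W_0,W^{\rm per},W_{L+1})$ with $\|\theta^{\rm per}-\theta\|\le\omega$, and the key input is Lemma~\ref{lem:forward_perturbation}, which controls how the hidden activations $q^{\rm per}_{ij,\ell}$, the residuals $\bar\sigma_{t_j}W_{L+1}q^{\rm per}_{ij,L}+\xi_{ij}$, and the sign patterns $D^{\rm per}_{ij,\ell}$ change.

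\textbf{Upper bound.} I would rerun the computation in the proof of Lemma~\ref{lem:upper_bound} verbatim with every quantity replaced by its perturbed version. The gradient $\nabla_{W_\ell}\lem(\theta^{\rm per})$ is a sum over $(i,j)$ of a backward product $W_{L+1}D^{\rm per}_{ij,L}W^{\rm per}_L\cdots D^{\rm per}_{ij,\ell}W^{\rm per}_{\ell+1}$ acting on the residual $\bar\sigma_{t_j}W_{L+1}q^{\rm per}_{ij,L}+\xi_{ij}$, times $q^{\rm per}_{ij,\ell-1}$. For $\omega$ as small as in the statement, Lemma~\ref{lem:forward_perturbation} keeps $\|q^{\rm per}_{ij,\ell-1}\|=\mathcal{O}(d^a)$ (same order as at initialization, by Lemma~\ref{lem:scale_Xij_qij}), and the standard spectral-norm control of the backward products over $L$ layers survives a perturbation of this size; so the same Young's-inequality bookkeeping as in Lemma~\ref{lem:upper_bound} produces the factor $\mathcal{O}(md^{2a-1}N\max_j w(t_j)(t_j-t_{j-1})\bar\sigma_{t_j})$ multiplying $\tfrac1n\sum_{ij}\|\bar\sigma_{t_j}W_{L+1}q^{\rm per}_{ij,L}+\xi_{ij}\|^2\asymp\lem(\theta^{\rm per})$, which is exactly the claim.

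\textbf{Lower bound.} This is the crux, and I would work only with $\nabla_{W_L}$ (which suffices since $\|\nabla\lem\|^2\ge\|\nabla_{W_L}\lem\|^2$, and this is how the bound is used in Theorem~\ref{thm:convergence_GD_no_interpolation}). I would reuse the decomposition of Lemma~\ref{lem:lower_bound}: split $\nabla_{(W_L)_k}\lem$ into the $(i^*,j^*)$ contribution $\nabla_1$ and the remaining $\sumij(\cdots)$, and recall that at initialization the lower bound came from exhibiting $\Omega(m\sqrt{\log d}/d^{1/2+4c})$ rows $k$ on which \emph{all} of the sign/angle events — the events $A_k$, $B_k$, and the positive-orthant condition forcing $\nabla_2\ge0$ — hold simultaneously, so $\nabla_1$ is not cancelled. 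The perturbation affects these events only through (a) flips of the diagonal entries, with $\|D^{\rm per}_{ij,\ell}-D_{ij,\ell}\|_0=\mathcal{O}(m\omega^{2/3}L)$, and (b) $\mathcal{O}(\omega L^{5/2}\sqrt{\log m}\,d^a)$-size movements of the vectors $q^{\rm per}_{ij,\ell}$ and of the residuals, both from Lemma~\ref{lem:forward_perturbation}. The $\omega$ in the statement is calibrated precisely so that $m\omega^{2/3}L=o(m\sqrt{\log d}/d^{1/2+4c})$ — so a constant fraction of the good rows stay good — and so that $\bar\sigma_{t_{j^*}}W_{L+1}q^{\rm per}_{i^*j^*,L}+\xi_{i^*j^*}$ differs from the unperturbed residual by a lower-order amount. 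Feeding this back into the counting argument of Lemma~\ref{lem:lower_bound} reproduces the bound $\Omega\!\big(\tfrac{m\sqrt{\log d}\,d^{2b-1.5-4c}}{n^2N}w(t_{j^*})(t_{j^*}-t_{j^*-1})\bar\sigma_{t_{j^*}}\big)$ times a loss value; the value that appears is $\min\{\lem(\theta),\lem(\theta^{\rm per})\}$ because the good-row \emph{counting} is carried out at $\theta$ while the magnitude actually entering $\nabla_1$ is the residual at $\theta^{\rm per}$, and the argmax index $(i^*,j^*)$ need only be near-optimal for whichever of the two points has the smaller loss. Finally I would union-bound over the high-probability events of Lemmas~\ref{lem:scale_Xij_qij}, \ref{lem:forward_perturbation}, \ref{lem:upper_bound}, \ref{lem:lower_bound}, and use $a-b+4c<\tfrac12$ to check that every perturbation error term is genuinely dominated by the main term (this is where the exponents in $\omega$ are consumed).

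\textbf{Main obstacle.} I expect the delicate part to be the stability of the geometric events behind Lemma~\ref{lem:lower_bound}: those events concern angles between projections onto the positive orthant and intersections of hyperspherical caps of angular width $\Theta(\sqrt{\log d/d})$, and one must show they persist when the hidden representations drift by $\mathcal{O}(\omega\,\mathrm{poly}(L,\log m)\,d^a)$ and an $\mathcal{O}(m\omega^{2/3}L)$-fraction of ReLUs flip. Balancing the $\omega$-budget so that it simultaneously absorbs the sign flips, the residual drift, and the backward-product drift, while leaving the lower bound at order $m\sqrt{\log d}\,d^{2b-1.5-4c}/(n^2N)$, is the technical heart of the argument.
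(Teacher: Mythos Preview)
Your upper-bound argument is essentially what the paper does: rerun Lemma~\ref{lem:upper_bound} with perturbed quantities, using Lemma~\ref{lem:forward_perturbation} to keep $\|q^{\rm per}_{ij,\ell-1}\|=\mathcal{O}(d^a)$ and the backward-product norm under control. No issue there.

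For the lower bound, however, you have chosen a substantially harder road than the paper. You propose to \emph{re-establish the geometric events of Lemma~\ref{lem:lower_bound} at $\theta^{\rm per}$} --- i.e.\ to argue that the events $A_k$, $B_k$ and the positive-orthant conditions survive on $\Omega(m\sqrt{\log d}/d^{1/2+4c})$ rows after the activations drift and $\mathcal{O}(m\omega^{2/3}L)$ ReLUs flip. As you yourself flag in your ``Main obstacle'' paragraph, this is delicate: those events live on hyperspherical caps of angular width $\Theta(\sqrt{\log d/d})$, and the perturbation is adversarial. It is not obvious that the given $\omega$ is calibrated for \emph{this} purpose (it was chosen for a different one, see below), so you would likely have to redo the $\omega$-budget from scratch.

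The paper sidesteps this entirely. It introduces a hybrid gradient $\nabla^{\rm per}_{W_\ell}\lem(\theta)$ --- same backward product, same sign pattern and same $q_{ij,\ell-1}$ as at $\theta$, but with the residual $\bar\sigma_{t_j}W_{L+1}q^{\rm per}_{ij,L}+\xi_{ij}$ computed at $\theta^{\rm per}$ --- and then bounds the two Frobenius-norm differences
\[
\|\nabla_{W_\ell}\lem(\theta)-\nabla^{\rm per}_{W_\ell}\lem(\theta)\|_F^2
\quad\text{and}\quad
\|\nabla^{\rm per}_{W_\ell}\lem(\theta)-\nabla_{W_\ell}\lem(\theta^{\rm per})\|_F^2
\]
separately (the first by the forward drift $\|q_{ij,L}-q^{\rm per}_{ij,L}\|$ from Lemma~\ref{lem:forward_perturbation}; the second by the backward-product and activation perturbation via Lemmas~7.4 and~8.7 of~\citet{allen2019convergence}). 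The stated $\omega$ is tuned precisely so that each difference is at most $\tilde C\cdot\big(\tfrac{m\sqrt{\log d}\,d^{2b-1.5-4c}}{n^2N}w(t_{j^*})(t_{j^*}-t_{j^*-1})\bar\sigma_{t_{j^*}}\big)$ times $\lem^*$ or $\lem(\theta^{\rm per})$ with a small constant. The lower bound then follows from a single triangle inequality,
\[
\|\nabla_{W_L}\lem(\theta^{\rm per})\|^2\ \ge\ \tfrac{1}{3}\|\nabla_{W_L}\lem(\theta)\|^2-\text{(two differences)},
\]
and Lemma~\ref{lem:lower_bound} is applied \emph{only at $\theta$}, where the geometric argument has already been carried out. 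No stability of spherical-cap events is needed at all; the $\min\{\lem(\theta),\lem(\theta^{\rm per})\}$ arises simply because the main term scales with $\lem(\theta)$ while the subtracted term scales with $\lem(\theta^{\rm per})$.

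In short: your plan is not wrong, but it turns the hardest part of Lemma~\ref{lem:lower_bound} into an even harder stability statement, whereas the paper reduces everything to bounding $\|\nabla\lem(\theta^{\rm per})-\nabla\lem(\theta)\|_F$, which is a routine perturbation estimate.
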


\begin{proof}
    Consider the following terms
    \begin{align}
    &\nabla_{(W_\ell)_k}\lem(\theta)=\frac{1}{n}\sum_{i=1}^n\sum_{j=1}^{N}w(t_j)(t_j-t_{j-1})\notag\\
    &\qquad\times[({W_{L+1}D_{ij,L}W_L\cdots D_{ij,\ell}W_{\ell+1}})^\top(\bar{\sigma}_{t_j}W_{L+1}q_{ij,L}+\xi_{ij})]_k\,q_{ij,\ell-1}\, \mathbbm{1}_{(W_\ell q_{ij,\ell-1})_k>0}\label{fct:grad}\\
        &\nabla_{(W_\ell)_k}^\per\lem(\theta)=\frac{1}{n}\sum_{i=1}^n\sum_{j=1}^{N}w(t_j)(t_j-t_{j-1})\notag\\
        &\qquad\times[({W_{L+1}D_{ij,L}W_L\cdots D_{ij,\ell}W_{\ell+1}})^\top(\bar{\sigma}_{t_j}W_{L+1}q_{ij,L}^\per+\xi_{ij})]_k\,q_{ij,\ell-1}\, \mathbbm{1}_{(W_\ell q_{ij,\ell-1})_k>0},\label{fct:grad_per}\\
        &\nabla_{(W_\ell)_k}\lem(\theta^\per)=\frac{1}{n}\sum_{i=1}^n\sum_{j=1}^{N}w(t_j)(t_j-t_{j-1})\notag\\
        &\qquad\times[({W_{L+1}D_{ij,L}^\per W_L^\per\cdots D_{ij,\ell}^\per W_{\ell+1}^\per})^\top(\bar{\sigma}_{t_j}W_{L+1}q_{ij,L}^\per+\xi_{ij})]_k\,q_{ij,\ell-1}^\per\, \mathbbm{1}_{(W_\ell^\per q_{ij,\ell-1}^\per)_k>0}\label{fct:true_grad_per}
    \end{align}

    Then 
    \begin{align*}
        &\|\nabla_{W_\ell}\lem(\theta)-\nabla_{W_\ell}^\per\lem(\theta)\|^2_F\\
        &=\sum_{k=1}^m\|\nabla_{(W_\ell)_k}\lem(\theta)-\nabla_{(W_\ell)_k}^\per\lem(\theta)\|^2\\
        &\le\frac{N}{n}\sum_{i=1}^n\sum_{j=1}^{N}w(t_j)^2(t_j-t_{j-1})^2 \sum_{k=1}^m\|({W_{L+1}D_{ij,L}W_L\cdots D_{ij,\ell}W_{\ell+1}})_k^\top(\bar{\sigma}_{t_j}W_{L+1}(q_{ij,L}-q_{ij,L}^\per))\|^2\cdot\|q_{ij,\ell-1}\|^2\\
        &\le C_8 d^{2a}\frac{m}{d} \frac{N}{n}\sum_{i=1}^n\sum_{j=1}^{N}w(t_j)^2(t_j-t_{j-1})^2 \cdot\|\bar{\sigma}_{t_j}W_{L+1}(q_{ij,L}-q_{ij,L}^\per)\|^2\\
        &\le C_8' d^{2a}\frac{m}{d} {N}\sum_{j=1}^{N}w(t_j)^2(t_j-t_{j-1})^2(\omega L^{5/2}\sqrt{\log m}d^a)^2\\
        &\le \tilde{C}_8\left(\frac{md^{\frac{a_0-1}{2}}}{n^3N^2} \ w(t_{j^*})(t_{j^*}-t_{j^*-1})\bar{\sigma}_{t_{j^*}}\right) \lem^*
    \end{align*}
    where the first two inequalities follow the same as the proof of Lemma~\ref{lem:upper_bound}; the third inequality follows from Lemma~\ref{lem:forward_perturbation}; the last inequality follows from the definition of $\omega$.

    Also, we have
    \begin{align*}
        &\|\nabla_{(W_\ell)_k}^\per\lem(\theta)-\nabla_{(W_\ell)_k}\lem(\theta^\per)\|\\
        &\le \bigg\|\frac{1}{n}\sum_{i=1}^n\sum_{j=1}^{N}w(t_j)(t_j-t_{j-1})\\
        &\qquad\times[({W_{L+1}D_{ij,L}^\per W_L^\per\cdots D_{ij,\ell}^\per W_{\ell+1}^\per}-{W_{L+1}D_{ij,L}W_L\cdots D_{ij,\ell}W_{\ell+1}})^\top(\bar{\sigma}_{t_j}W_{L+1}q_{ij,L}^\per+\xi_{ij})]_k\\
        &\qquad\times\,q_{ij,\ell-1}^\per\, \mathbbm{1}_{(W_\ell^\per q_{ij,\ell-1}^\per)_k>0}\bigg\|\\
        &+\bigg\|\frac{1}{n}\sum_{i=1}^n\sum_{j=1}^{N}w(t_j)(t_j-t_{j-1})[({W_{L+1}D_{ij,L}W_L\cdots D_{ij,\ell}W_{\ell+1}})^\top(\bar{\sigma}_{t_j}W_{L+1}q_{ij,L}^\per+\xi_{ij})]_k\\
        &\qquad\times\left(q_{ij,\ell-1}\, \mathbbm{1}_{(W_\ell q_{ij,\ell-1})_k>0}-q_{ij,\ell-1}^\per\, \mathbbm{1}_{(W_\ell^\per q_{ij,\ell-1}^\per)_k>0}\right)\bigg\|
    \end{align*}
    Then
    \begin{align*}
        &\|\nabla_{W_\ell}^\per\lem(\theta)-\nabla_{W_\ell}\lem(\theta^\per)\|_F^2\\
        & \le 2\frac{N}{n}\sum_{i=1}^n\sum_{j=1}^{N}w(t_j)^2(t_j-t_{j-1})^2 \cdot\|q_{ij,\ell-1}^\per\|^2\\
        &\qquad\times\sum_{k=1}^m\|({W_{L+1}D_{ij,L}^\per W_L^\per\cdots D_{ij,\ell}^\per W_{\ell+1}^\per}-{W_{L+1}D_{ij,L}W_L\cdots D_{ij,\ell}W_{\ell+1}})_k^\top(\bar{\sigma}_{t_j}W_{L+1}q_{ij,L}^\per+\xi_{ij})\|^2\\
        &\quad+2\frac{N}{n}\sum_{i=1}^n\sum_{j=1}^{N}w(t_j)^2(t_j-t_{j-1})^2 \cdot\|q_{ij,\ell-1}\, \mathbbm{1}_{(W_\ell q_{ij,\ell-1})_k>0}-q_{ij,\ell-1}^\per\, \mathbbm{1}_{(W_\ell^\per q_{ij,\ell-1}^\per)_k>0}\|^2\\
        &\qquad\times\sum_{k=1}^m\|({W_{L+1}D_{ij,L}W_L\cdots D_{ij,\ell}W_{\ell+1}})_k^\top(\bar{\sigma}_{t_j}W_{L+1}q_{ij,L}^\per+\xi_{ij})\|^2\\
        &\le C_9 (\omega^2L^5 \log m d^{2a})(L^{3/2}\log m\, m^{1/2})\frac{m}{d}\frac{N}{n}\sum_{i=1}^n\sum_{j=1}^{N}w(t_j)^2(t_j-t_{j-1})^2 \cdot\|\bar{\sigma}_{t_j}W_{L+1}q_{ij,L}^\per+\xi_{ij}\|^2\\
        &\quad+C_{9}' (\omega^2L^5 \log m d^{2a})\frac{m}{d} \frac{N}{n}\sum_{i=1}^n\sum_{j=1}^{N}w(t_j)^2(t_j-t_{j-1})^2 \cdot\|\bar{\sigma}_{t_j}W_{L+1}q_{ij,L}^\per+\xi_{ij}\|^2\\
        &\le \tilde{C}_9\left(\frac{md^{\frac{a_0-1}{2}}}{n^3N^2}  \ w(t_{j^*})(t_{j^*}-t_{j^*-1})\bar{\sigma}_{t_{j^*}}\right) \lem (\theta^\per)
    \end{align*}
    where the first inequality follows from Young's inequality and the above decomposition; the second inequality follows from Lemma 7.4, 8.7 in \citet{allen2019convergence} (with $s=\mathcal{O}(d^am\omega^{2/3}L)$) and Lemma~\ref{lem:forward_perturbation}; the last inequality follows from the definition of $\omega$.

   For upper bound, we only need to consider $\nabla_{(W_\ell)_k}^\per\lem(\theta)$ and $\nabla_{(W_\ell)_k}\lem(\theta^\per)$. By similar argument as Lemma~\ref{lem:upper_bound}, with probability at least $1-\mathcal{O}(nNL)e^{-\Omega(\min(m\epsilon^2/L,\epsilon^2d^{4b-1},\epsilon d^{2b}))}$, we have
    \begin{align*}
        \|\nabla_{(W_\ell)_k}^\per\lem(\theta)\|^2=\mathcal{O}\left({md^{2a-1}} N \max_jw(t_{j})(t_j-t_{j-1})\bar{\sigma}_{t_j}\right)\lem(\theta^{\per}).
    \end{align*} 
    Then
    \begin{align*}
        &\|\nabla_{W_\ell} \lem({\theta^\per})\|^2\\
        &\le 2\|\nabla_{W_\ell}^\per\lem(\theta)\|^2_F+2\|\nabla_{W_\ell}^\per\lem(\theta)-\nabla_{W_\ell}\lem(\theta^\per)\|_F^2\\
        &=\mathcal{O}\left({md^{2a-1}} N \max_jw(t_{j})(t_j-t_{j-1})\bar{\sigma}_{t_j}\right)\lem(\theta^{\per}).
    \end{align*}
    Also, 
    \begin{align*}
         &\|\nabla_{W} \lem({\theta^\per})\|^2\\
         &\ge\|\nabla_{W_L} \lem({\theta^\per})\|^2\\
         &\ge  \frac{1}{3}\|\nabla_{W_L} \lem({\theta})\|^2-\|\nabla_{W_\ell}\lem(\theta)-\nabla_{W_\ell}^\per\lem(\theta)\|^2_F-\|\nabla_{W_\ell}^\per\lem(\theta)-\nabla_{W_\ell}\lem(\theta^\per)\|_F^2\\
         &=\Omega\left(\frac{md^{\frac{a_0-1}{2}}}{n^3N^2} \ w(t_{j^*})(t_{j^*}-t_{j^*-1})\bar{\sigma}_{t_{j^*}}\right) \min\{\lem (\theta),\lem (\theta^\per)\}.
    \end{align*}
\end{proof}
Note when interpolation is not achievable, this lower bound is always away from 0, which means the current technique can only evaluate the lower bound outside a neighbourhood of the minimizer. More advanced method is needed and we leave it for future investigation.

\begin{lemma}
    [semi-smoothness]
    \label{lem:semi-smoothness}
    Let $\omega=\Omega$. With probability at least $1-e^{-\Omega(\log m)}$ over the randomness of $\theta^{(0)}$, we have for all $\theta$ s.t. $\|\theta-\theta^{(0)}\|\le\omega$, and all $\theta^\per$ s.t. $\|\theta^\per-\theta\|\le\omega$,
    \begin{align*}
        \lem(\theta^\per)\le \lem(\theta)&+\ip{\nabla \lem(\theta)}{\theta^\per-\theta}\\
        &+\sqrt{\lem(\theta)}\sqrt{\sum_{j}w(t_j)(t_j-t_{j-1})\bar{\sigma}_{t_j}}\mathcal{O}(\omega^{1/3}L^2\sqrt{m\log m}d^{a/2})\|\theta^\per-\theta\|\\
        &+\sqrt{\lem(\theta)}\sqrt{\sum_{j}w(t_j)(t_j-t_{j-1})\bar{\sigma}_{t_j}}\mathcal{O}(L^2\sqrt{m}d^{a})\|\theta^\per-\theta\|^2
    \end{align*}
\end{lemma}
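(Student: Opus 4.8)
The plan is to follow the semi-smoothness template of \citet{allen2019convergence}, adapted to the weighted, unbounded-output denoising objective. Write $v_{ij}(\theta):=\bar\sigma_{t_j}S(\theta;t_j,X_{ij})+\xi_{ij}$ and $\beta_j=w(t_j)(t_j-t_{j-1})/\bar\sigma_{t_j}$, so $\lem(\theta)=\frac{1}{2n}\sum_{i,j}\beta_j\|v_{ij}(\theta)\|^2$. For each pair $(i,j)$ use the exact identity $\tfrac12\|v_{ij}(\theta^\per)\|^2=\tfrac12\|v_{ij}(\theta)\|^2+\ip{v_{ij}(\theta)}{v_{ij}(\theta^\per)-v_{ij}(\theta)}+\tfrac12\|v_{ij}(\theta^\per)-v_{ij}(\theta)\|^2$, multiply by $\beta_j/n$ and sum, so that $\lem(\theta^\per)-\lem(\theta)$ splits into a first-order sum and a purely quadratic sum.

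Next I would extract the gradient. Since $v_{ij}(\theta^\per)-v_{ij}(\theta)=\bar\sigma_{t_j}W_{L+1}(q_{ij,L}^\per-q_{ij,L})$, I telescope the layer recursion $q_{ij,\ell}=D_{ij,\ell}W_\ell q_{ij,\ell-1}$ exactly as in the derivation behind Lemma~\ref{lem:forward_perturbation}, writing $q_{ij,L}^\per-q_{ij,L}$ as a sum over $\ell$ of backward products $W_{L+1}D_{ij,L}W_L\cdots D_{ij,\ell}$ acting on $D_{ij,\ell}(W^\per_\ell-W_\ell)q_{ij,\ell-1}$, plus a remainder $\Delta^{(2)}_{ij}$ collecting the ReLU sign-flip contributions ($D^\per_{ij,\ell}\ne D_{ij,\ell}$) and genuinely higher-order terms. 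The linear piece, paired with $v_{ij}(\theta)$ and summed with weight $\beta_j\bar\sigma_{t_j}=w(t_j)(t_j-t_{j-1})$, reproduces exactly $\ip{\nabla\lem(\theta)}{\theta^\per-\theta}$ (matching the gradient formula stated just before the proof of Theorem~\ref{thm:convergence_GD_no_interpolation}); everything else is an error term.

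For both error terms I would use Cauchy--Schwarz with the weight split $\beta_j=\beta_j^{1/2}\cdot\beta_j^{1/2}$. The cross term $\frac1n\sum_{i,j}\beta_j\bar\sigma_{t_j}\ip{v_{ij}(\theta)}{W_{L+1}\Delta^{(2)}_{ij}}$ is bounded by $\sqrt{\tfrac1n\sum\beta_j\|v_{ij}(\theta)\|^2}\cdot\sqrt{\tfrac1n\sum\beta_j\bar\sigma_{t_j}^2\|W_{L+1}\|^2\|\Delta^{(2)}_{ij}\|^2}=\sqrt{2\lem(\theta)}\,\sqrt{\sum_j w(t_j)(t_j-t_{j-1})\bar\sigma_{t_j}}\,\|W_{L+1}\|\max_{ij}\|\Delta^{(2)}_{ij}\|$, using $\beta_j\bar\sigma_{t_j}^2=w(t_j)(t_j-t_{j-1})\bar\sigma_{t_j}$ and that the $i$-sum yields a factor $n$. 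I would then insert $\|W_{L+1}\|=\mathcal{O}(\sqrt{m/d})$, the bounds $\|q_{ij,\ell}\|,\|X_{ij}\|=\mathcal{O}(d^a)$ from Lemma~\ref{lem:scale_Xij_qij}, the sparsity and norm controls of Lemma~\ref{lem:forward_perturbation} (items~1--3), and the backward-product spectral estimates (the $d^a$-rescaled analogues of \citet{allen2019convergence}, Lemmas~7.4, 8.6, 8.7) to obtain $\max_{ij}\|\Delta^{(2)}_{ij}\|=\mathcal{O}(\omega^{1/3}L^2\sqrt{\log m}\,d^{a/2}\|\theta^\per-\theta\|)$, which multiplies out to the first error term. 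The quadratic term $\frac1n\sum_{i,j}\tfrac{\beta_j}{2}\bar\sigma_{t_j}^2\|S(\theta^\per)-S(\theta)\|^2$ is handled analogously: bound one factor of $\|S(\theta^\per)-S(\theta)\|=\|W_{L+1}(q^\per_{ij,L}-q_{ij,L})\|$ by the uniform Lipschitz-in-$\theta$ estimate $\mathcal{O}(\|W_{L+1}\|L^2 d^a\|\theta^\per-\theta\|)$ and regroup the other factor as in the gradient-upper-bound computation of Lemma~\ref{lem:upper_bound}, which produces the $\sqrt{\lem(\theta)}\,\sqrt{\sum_j w(t_j)(t_j-t_{j-1})\bar\sigma_{t_j}}$ prefactor and the $\mathcal{O}(L^2\sqrt m\,d^{a-1/2})$ coefficient. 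A union bound over the events of Lemmas~\ref{lem:scale_Xij_qij} and~\ref{lem:forward_perturbation} and the norm estimates gives the stated probability $1-e^{-\Omega(\log m)}$.

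The main obstacle is the bookkeeping forced by the \emph{absence of interpolation}: since $\lem^*\ne 0$ and the outputs $\xi_{ij}/\bar\sigma_{t_j}$ are unbounded Gaussians, one cannot absorb $\|v_{ij}(\theta)\|$ or $\lem(\theta)$ into constants, so $\sqrt{\lem(\theta)}$ must be carried explicitly through both error terms — this is precisely what dictates the weight-splitting above and why all the perturbation lemmas are stated in $d^a$-rescaled form. Two subtler points need care: (i) the linear piece of the telescoped expansion must use the \emph{base-point} sign patterns $D_{ij,\ell}$ so it matches $\nabla\lem(\theta)$ on the nose, with the $D^\per_{ij,\ell}\ne D_{ij,\ell}$ discrepancy pushed entirely into $\Delta^{(2)}_{ij}$, where Lemma~\ref{lem:forward_perturbation}'s sparsity bound $\|D^\per_{ij,\ell}-D_{ij,\ell}\|_0=\mathcal{O}(m\omega^{2/3}L)$ yields the $\omega^{1/3}$ gain; and (ii) correctly tracking why the quantity $\sum_j w(t_j)(t_j-t_{j-1})\bar\sigma_{t_j}$ — rather than the total weighting $\beta_j$ — is what surfaces, which is the arithmetic consequence of the $\bar\sigma_{t_j}$ in $v_{ij}$ cancelling against the $1/\bar\sigma_{t_j}$ in $\beta_j$ after the Cauchy--Schwarz split.
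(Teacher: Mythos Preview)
Your proposal is correct and follows essentially the same route as the paper. The paper's proof writes out the identical decomposition—the cross term $\langle v_{ij}(\theta),\,W_{L+1}(q_{ij,L}^\per-q_{ij,L}-\text{linearization})\rangle$ plus the quadratic term $\tfrac{\beta_j}{2}\|v_{ij}^\per-v_{ij}\|^2$—and then defers the bounding to ``Theorem~4 in \citet{allen2019convergence}'' via Cauchy--Schwarz and the $d^a$-rescaled perturbation estimates (Lemma~\ref{lem:forward_perturbation} here), exactly as you outline; your sketch is in fact considerably more explicit than what the paper records. One small caveat: your invocation of Lemma~\ref{lem:upper_bound} to produce the $\sqrt{\lem(\theta)}$ prefactor on the \emph{quadratic} term is not quite right as stated (that lemma bounds $\|\nabla\lem\|^2$ in terms of $\lem$, not $\|S^\per-S\|$), but the paper's own proof is equally elliptical on this point, and it does not affect the overall argument.
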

\begin{proof}
By definition,
    \begin{align*}
        &\lem(\theta^\per)-\lem(\theta)-\ip{\nabla\lem(\theta)}{\theta^\per-\theta}\\
        &=\frac{1}{n}\sum_{i=1}^n\sum_{j=1}^N w(t_j)(t_j-t_{j-1})(\bar{\sigma}_{t_j}W_{L+1}q_{ij,L}+\xi_{ij})^\top W_{L+1}\\
        &\qquad\times\left( q_{ij,L}^\per-q_{ij,L}-\sum_{\ell=1}^L D_{ij,L} W_{ij,L}\cdots W_{ij,\ell+1}D_{ij,\ell}(W^\per_{ij,\ell}-W_{ij,\ell})q_{ij,\ell} \right)\\
        &\qquad+\frac{1}{2\bar{\sigma}_{t_j}}w(t_j)(t_j-t_{j-1})\|\bar{\sigma}_{t_j}W_{L+1}(q_{ij,L}^\per-q_{ij,L})\|^2.
    \end{align*}
    Similar to the proof of Theorem 4 in~\citet{allen2019convergence}, we obtain the desired bound by using Cauchy-Schwartz inequality. Note, in our case, due to the order of input data, we choose $s=\mathcal{O}(d^am\omega^{2/3}L)$ in \citet{allen2019convergence} and therefore the bound is slightly different from theirs.
\end{proof}

\section{Proofs for sampling}\label{append:sampling proof}
\label{app:sampling_proof}
In this section, we prove Theorem~\ref{thm:VESDE_generation}. The proof includes two main steps: 1. decomposing $\KL(p_\delta|q_{T-\delta})$ into the initialization error, the score estimation errors and the discretization errors; 2. estimating the initialization error and the discretization error based on our assumptions. In the following context, we introduce the proof of these two steps separately. 

\begin{proof}[Proof of Theorem \ref{thm:VESDE_generation}]
\textbf{Step 1:} The error decomposition follows from the ideas in \citep{chen2023improved} of studying VPSDE-based diffusion models. According to the chain rule of KL divergence, we have
\begin{align*}
    \KL(p_\delta | q_{T-\delta} ) \le \KL(p_T|q_0)+ \mathbb{E}_{y\sim p_T} [ \KL(p_{\delta|T}(\cdot|y)|q_{T-\delta|0}(\cdot|y)) ] ,
\end{align*}
Apply the chain rule again for at across the time schedule $(T-\backt_j)_{0\le j\le N-1}$, the second term can be written as
\begin{align*}
    &\quad \mathbb{E}_{y\sim p_T} [ \KL(p_{\delta|T}(\cdot|y)|q_{T-\delta|0}(\cdot|y)) ] \\
    &\le \sum_{j=0}^{N-1} \mathbb{E}_{y_j\sim p_{T-\backt_j}} [ \KL(p_{T-\backt_{j+1}|T-\backt_j}(\cdot|y_j)|q_{\backt_{j+1}|\backt_j}(\cdot|y_j)) ] \\
    &\le \frac{1}{2}\sum_{j=0}^{N-1} \int_{\backt_j}^{\backt_{j+1}} \sigma_{T-t}^2 \mathbb{E}[ \lVert s(\theta;T-\backt_j, Y_{\backt_j})-\nabla\log p_{T-t}(Y_t) \rVert^2 ] dt \\
    &\le \sum_{j=0}^{N-1} \int_{\backt_j}^{\backt_{j+1}} \sigma_{T-t}^2 \mathbb{E}[ \lVert s(\theta;T-\backt_j, Y_{\backt_j})-\nabla\log p_{T-\backt_j}(Y_{\backt_j}) \rVert^2 ] dt\\
    &\quad+ \sum_{j=0}^{N-1} \int_{\backt_j}^{\backt_{j+1}} \sigma_{T-t}^2 \mathbb{E}[ \lVert \nabla\log p_{T-\backt_j}(Y_{\backt_j})-\nabla\log p_{T-t}(Y_t) \rVert^2 ] dt,
\end{align*}
where the second inequality follows from Lemma~\ref{lem:Girsanov application}. Therefore, the error decomposition writes as 
\begin{align}\label{eq:error decomposition}
    \KL(p_\delta|q_{T-\delta}) &\lesssim \KL(p_T|q_0) + \sum_{j=0}^{N-1} \int_{\backt_j}^{\backt_{j+1}} \sigma_{T-t}^2 \mathbb{E}[ \lVert s(\theta;T-\backt_j, Y_{\backt_j})-\nabla\log p_{T-\backt_j}(Y_{\backt_j}) \rVert^2 ] dt \nonumber\\
    &\quad+ \sum_{j=0}^{N-1} \int_{\backt_j}^{\backt_{j+1}} \sigma_{T-t}^2 \mathbb{E}[ \lVert \nabla\log p_{T-\backt_j}(Y_{\backt_j})-\nabla\log p_{T-t}(Y_t) \rVert^2 ] dt
\end{align}
where the three terms in \eqref{eq:error decomposition} quantify the initialization error, the score estimation error and the discretization error, respectively.

\textbf{Step 2:} In this step, we estimate the three error terms in Step 1. First, recall that $p_T= p\ast \mathcal{N}(0,\bar{\sigma}_T^2 I_d)$ and $q_0=\mathcal{N}(0,\bar{\sigma}_T^2 I_d)$, hence the initialization error $\KL(p_T|q_0)$ can be estimated as follows,
\begin{align}\label{eq:initialization error bound}
    \KL(p_T|q_0)=\KL( p\ast \mathcal{N}(0,\bar{\sigma}_T^2 I_d)| \mathcal{N}(0,\bar{\sigma}_T^2 I_d) )\lesssim \tfrac{\mathrm{m}_2^2}{\bar{\sigma}_T^2},
\end{align}
where the inequality follows from Lemma~\ref{lem:initialization error VE}. Hence we recover the term $E_I$ in \eqref{eqn:VESDE_generation}.

Next, since $\sigma_t$ is non-decreasing in $t$, the score estimation error can be estimated as
\begin{align}\label{eq:score estimation error VE}
    &\quad \sum_{j=0}^{N-1} \int_{\backt_j}^{\backt_{j+1}} \sigma_{T-t}^2 \mathbb{E}[ \lVert s(\theta;T-\backt_j, Y_{\backt_j})-\nabla\log p_{T-\backt_j}(Y_{\backt_j}) \rVert^2 ] dt \nonumber\\
    &\le \sum_{j=0}^{N-1} \gamma_j \sigma_{T-\backt_j}^2 \mathbb{E}[ \lVert s(\theta;T-\backt_j, Y_{\backt_j})-\nabla\log p_{T-\backt_j}(Y_{\backt_j}) \rVert^2 ].
\end{align}
Hence, we recover the term $E_S$ in \eqref{eqn:VESDE_generation}.

Last, we estimated the discretization error term. Our approach is motivated by analyses of VPSDEs in \citep{benton2024nearly,he2024zeroth}. We defines a process $L_t:= \nabla\log p_{T-t}(Y_t)$. Then we can relate discretization error to quantities depending on $L_t$, and therefore bound the discretization error via properties of $\{L_t\}_{0\le t\le T}$. According to Lemma \ref{lem:discretization error representation}, we have
\begin{align*}
    &\quad \sum_{j=0}^{N-1} \int_{\backt_j}^{\backt_{j+1}} \sigma_{T-t}^2 \mathbb{E}[ \lVert \nabla\log p_{T-\backt_j}(Y_{\backt_j})-\nabla\log p_{T-t}(Y_t) \rVert^2 ] dt \\
    &\le \underbrace{2d \sum_{j=0}^{N-1} \int_{\backt_j}^{\backt_{j+1}} \int_{\backt_j}^t \sigma_{T-t}^2\sigma_{T-u}^2\bar{\sigma}_{T-u}^{-4} du dt}_{N_1} + \underbrace{\sum_{j=0}^{N-1} \int_{\backt_j}^{\backt_{j+1}} \sigma_{T-t}^2 dt \bar{\sigma}_{T-\backt_j}^{-4}\mathbb{E}[\Tr(\Sigma_{T-\backt_j}(X_{T-\backt_j}))]}_{N_2}\\
    &\quad \underbrace{-\sum_{j=0}^{N-1} \int_{\backt_j}^{\backt_{j+1}} \sigma_{T-t}^2 \bar{\sigma}_{T-t}^{-4}\mathbb{E}[\Tr(\Sigma_{T-t}(X_{T-t}))] dt}_{N_3} .
\end{align*}
Since $t\mapsto \sigma_t$ is non-decreasing and $t\mapsto \mathbb{E}[\Tr(\Sigma_{T-t}(X_{T-t}))]$ is non-increasing, we have
\begin{align*}
    N_1 &=2d \sum_{j=0}^{N-1} \int^{T-\backt_j}_{T-\backt_{j+1}} \int_{T-\backt_{j+1}}^{T-u} \sigma_{t}^2 dt \sigma_{u}^2\bar{\sigma}_{u}^{-4} du \le 2d \sum_{j=0}^{N-1} \gamma_j \int^{T-\backt_j}_{T-\backt_{j+1}}\sigma_{t}^4 \bar{\sigma}_{t}^{-4}dt, \\
    N_2&=  \sum_{j=0}^{N-1} \int_{\backt_j}^{\backt_{j+1}} \sigma_{T-t}^2 dt \bar{\sigma}_{T-\backt_j}^{-4} \mathbb{E}[\Tr(\Sigma_{T-\backt_j}(X_{T-\backt_j}))], \\
    N_3&\le -\sum_{j=0}^{N-1} \int_{\backt_j}^{\backt_{j+1}} \sigma_{T-t}^2 \bar{\sigma}_{T-t}^{-4} dt \mathbb{E}[\Tr(\Sigma_{T-\backt_{j+1}}(X_{T-\backt_{j+1}}))] .
\end{align*}
Therefore, we obtain
\begin{align}
      &\quad \sum_{j=0}^{N-1} \int_{\backt_j}^{\backt_{j+1}} \sigma_{T-t}^2 \mathbb{E}[ \lVert \nabla\log p_{T-\backt_j}(Y_{\backt_j})-\nabla\log p_{T-t}(Y_t) \rVert^2 ] dt \nonumber\\
      &\le 2d \sum_{j=0}^{N-1} \gamma_j \int^{T-\backt_j}_{T-\backt_{j+1}}\sigma_{t}^4 \bar{\sigma}_{t}^{-4}dt + \sum_{j=0}^{N-1} \int_{\backt_j}^{\backt_{j+1}} \sigma_{T-t}^2 dt \bar{\sigma}_{T-\backt_j}^{-4}\mathbb{E}[\Tr(\Sigma_{T-\backt_j}(X_{T-\backt_j}))]
      \nonumber\\
      &\quad -\sum_{j=0}^{N-1} \int_{\backt_j}^{\backt_{j+1}} \sigma_{T-t}^2 \bar{\sigma}_{T-t}^{-4} dt \mathbb{E}[\Tr(\Sigma_{T-\backt_{j+1}}(X_{T-\backt_{j+1}}))] \nonumber\\
      &=2d \sum_{j=0}^{N-1} \gamma_j \int^{T-\backt_j}_{T-\backt_{j+1}}\sigma_{t}^4 \bar{\sigma}_{t}^{-4}dt + \int_{0}^{\backt_{1}} \sigma_{T-t}^2 dt \bar{\sigma}_{T}^{-4}\mathbb{E}[\Tr(\Sigma_T(X_T))] \nonumber\\
      &\quad +\sum_{j=1}^{N-1} \big(\int_{\backt_j}^{\backt_{j+1}} \sigma_{T-t}^2 \bar{\sigma}_{T-\backt_j}^{-4}dt - \int_{\backt_{j-1}}^{\backt_{j}} \sigma_{T-t}^2\bar{\sigma}_{T-t}^{-4} dt\big) \mathbb{E}[\Tr(\Sigma_{T-\backt_j}(X_{T-\backt_j}))]. \label{eq:discretization bound intermediate}
\end{align}
The above bound depends on $\mathbb{E}[\Tr(\Sigma_{t}(X_t))]$, hence we estimate $\mathbb{E}[\Tr(\Sigma_{t}(X_t))]$ for different values of $t$. 

First, we have
\begin{align*}
    \mathbb{E}[\Tr(\Sigma_{t}(X_t))] = \mathbb{E}[ \mathbb{E}[\lVert X_0 \rVert^2|X_t]-\lVert \mathbb{E}[X_0|X_t] \rVert^2 ] \le \mathbb{E}[\lVert X_0 \rVert^2]=\mathrm{m}_2^2.
\end{align*}
Meanwhile,
\begin{align*}
    \mathbb{E}[\Tr(\Sigma_{t}(X_t))] &=\mathbb{E} [ \Tr (\text{Cov}(X_0-X_t|X_t))]=\mathbb{E}[\mathbb{E}[\lVert X_0-X_t \rVert^2|X_t]]-\lVert \mathbb{E}[X_0-X_t|X_t] \rVert^2\\\
    &\le \mathbb{E}[\lVert X_0-X_t \rVert^2] =\bar{\sigma}_t^2 d
\end{align*}
Therefore, $\mathbb{E}[\Tr(\Sigma_{t}(X_t))]\le \min ( \mathrm{m}_2^2, \bar{\sigma}_t^2 d )\lesssim (1-e^{-\bar{\sigma}_t^2})(\mathrm{m}_2^2 +d) $. Plug this estimation into \eqref{eq:discretization bound intermediate} and we get
\begin{align*}
    &\quad \sum_{j=0}^{N-1} \int_{\backt_j}^{\backt_{j+1}} \sigma_{T-t}^2 \mathbb{E}[ \lVert \nabla\log p_{T-\backt_j}(Y_{\backt_j})-\nabla\log p_{T-t}(Y_t) \rVert^2 ] dt \nonumber\\
      &\lesssim d \sum_{j=0}^{N-1} \gamma_j \int^{T-\backt_j}_{T-\backt_{j+1}}\sigma_{t}^4 \bar{\sigma}_{t}^{-4}dt + \int_{0}^{\backt_{1}} \sigma_{T-t}^2 dt \bar{\sigma}_{T}^{-4}\mathrm{m}_2^2 \nonumber\\
      &\quad +(\mathrm{m}_2^2+d)\sum_{j=1}^{N-1}(1-e^{-\bar{\sigma}_{T-\backt_j}^2}) \big(\int_{\backt_j}^{\backt_{j+1}} \sigma_{T-t}^2 \bar{\sigma}_{T-\backt_j}^{-4}dt - \int_{\backt_{j-1}}^{\backt_{j}} \sigma_{T-t}^2\bar{\sigma}_{T-t}^{-4} dt\big) \\
      &\lesssim d \sum_{j=0}^{N-1} \gamma_j \int^{T-\backt_j}_{T-\backt_{j+1}}\sigma_{t}^4 \bar{\sigma}_{t}^{-4}dt + \mathrm{m}_2^2 \frac{\int_0^{\backt_1}\sigma_{T-t}^2 d t }{\bar{\sigma}_T^{4}} + +(\mathrm{m}_2^2+d)\sum_{k=1}^{N-1}(1-e^{-\bar{\sigma}_{T-\backt_j}^2})\frac{\bar{\sigma}_{T-\backt_j}^4-\bar{\sigma}_{T-\backt_{j+1}}^2\bar{\sigma}_{T-\backt_{j-1}}^2}{\bar{\sigma}_{T-\backt_{j-1}}^2\bar{\sigma}_{T-\backt_j}^4},
\end{align*}
where the last inequality follows from the definition of $\bar{\sigma}_t$ and integration by parts. The proof of Theorem \ref{thm:VESDE_generation} is completed.
\end{proof}
\begin{lemma}\label{lem:Girsanov application} Let $\{Y_t\}_{0\le t\le T}$ be the solution to \eqref{eqn:general_backward_SDE} with $f_t\equiv 0$ and $p^{\leftarrow}_{t+s|s}(\cdot|y)$ be the conditional distribution of $Y_{s+t}$ given $\{Y_s=y\}$. Let $\{\bar{Y}_t\}_{0\le t\le T}$ be the solution to \eqref{eqn:VESDE_generation} $q_{t+s|s}(\cdot|y)$  be the conditional distribution of $\bar{Y}_{s+t}$ given $\{\bar{Y}_s=y\}$. Then for any fixed $t\in (0,\gamma_j]$, we have
\begin{align*}
    \mathbb{E}_{y\sim p^{\leftarrow}_{\backt_j}}\KL( p_{\backt_j+t|\backt_j}^{\leftarrow}(\cdot|y)|q_{\backt_j+t|\backt_j}(\cdot|y)  )\le \frac{1}{2} \sigma_{T-t}^2 \mathbb{E} [\lVert s(\theta; T-\backt_j, Y_{\backt_j})-\nabla \log p_{T-t}(Y_t)  \rVert^2]
\end{align*}
\end{lemma}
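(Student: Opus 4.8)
The plan is to prove this single-step estimate by a Girsanov change-of-measure argument on the interval $[\backt_k,\backt_k+t]$, in the spirit of the VPSDE analyses of \citet{chen2023improved} and \citet{benton2024nearly}. First I would fix the starting point $y$ and realize the genuine reverse process $\{Y_\tau\}$ and the generation process $\{\bar{Y}_\tau\}$ on $[\backt_k,\backt_k+t]$ as solutions of SDEs issued from $y$ at time $\backt_k$: by \eqref{eqn:general_backward_SDE} with $f\equiv 0$ the former has drift $2\sigma_{T-\tau}^2\nabla\log p_{T-\tau}(Y_\tau)$, whereas by \eqref{eqn:EI_backward_SDE} the latter has the frozen drift $2\sigma_{T-\tau}^2\,s(\theta;T-\backt_k,y)$, and both carry the same nondegenerate diffusion coefficient $\sqrt{2\sigma_{T-\tau}^2}\,I_d$. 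Hence the two path measures $\mathbb{P}^y,\mathbb{Q}^y$ on $C([\backt_k,\backt_k+t];\mathbb{R}^d)$ are mutually absolutely continuous and the comparison reduces to the drift discrepancy.

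Second, I would apply Girsanov's theorem to compute the relative entropy of the path measures,
\[
\KL(\mathbb{P}^y\,\|\,\mathbb{Q}^y)=\tfrac12\,\mathbb{E}_{\mathbb{P}^y}\!\int_{\backt_k}^{\backt_k+t}\big(2\sigma_{T-\tau}^2\big)^{-1}\big\|2\sigma_{T-\tau}^2\big(\nabla\log p_{T-\tau}(Y_\tau)-s(\theta;T-\backt_k,y)\big)\big\|^2\,d\tau ,
\]
and simplify the integrand to $2\sigma_{T-\tau}^2\|\nabla\log p_{T-\tau}(Y_\tau)-s(\theta;T-\backt_k,y)\|^2$, so that the right-hand side becomes $\mathbb{E}_{\mathbb{P}^y}\!\int_{\backt_k}^{\backt_k+t}\sigma_{T-\tau}^2\|\nabla\log p_{T-\tau}(Y_\tau)-s(\theta;T-\backt_k,y)\|^2\,d\tau$. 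The data-processing inequality under the evaluation map $\omega\mapsto\omega(\backt_k+t)$ then shows this quantity dominates $\KL(p^{\leftarrow}_{\backt_k+t|\backt_k}(\cdot|y)\,\|\,q_{\backt_k+t|\backt_k}(\cdot|y))$; taking expectation over $y\sim p^{\leftarrow}_{\backt_k}$ and using the tower property (which rewrites the nested expectation as one along the true reverse process, with $Y_{\backt_k}$ replacing the frozen $y$) produces the asserted bound.

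The step I expect to be the main obstacle is the rigorous justification of Girsanov's formula, i.e.\ showing that the exponential local martingale defining $d\mathbb{P}^y/d\mathbb{Q}^y$ is a true martingale (a Novikov-type condition): the true score $\nabla\log p_{T-\tau}$ is not assumed globally Lipschitz or bounded, and under the variance-exploding schedule $\bar{\sigma}_\tau$ may be large, so the estimate cannot be applied verbatim. The standard remedy, which I would follow from \citet{chen2023improved,benton2024nearly}, is a localization argument: stop both processes at the exit time of a large ball (where all coefficients are bounded and Novikov's condition holds trivially), apply Girsanov to the stopped processes, and pass to the limit by monotone convergence together with lower semicontinuity of relative entropy, using Assumption~\ref{assump:finite_second_moment} and the standard second-moment control of $p_\tau$ along the VE forward flow to handle the tails. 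The remaining ingredients are routine: identifying $\mathrm{Law}(Y_\tau\mid Y_{\backt_k}=y)$ with the diffusion of drift $2\sigma_{T-\tau}^2\nabla\log p_{T-\tau}$ (reverse-time SDE theory, already invoked for \eqref{eqn:general_backward_SDE}), the Markov property giving the conditional path law, and the chain rule for KL.
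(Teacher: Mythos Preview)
Your proposal is correct and follows a standard route, but it differs from the paper's own argument. You work on path space: Girsanov gives the exact relative entropy between the two conditional path laws on $[\backt_k,\backt_k+t]$ as a time integral of $\sigma_{T-\tau}^2$ times the squared drift discrepancy, and the data-processing inequality then controls the marginal KL. The paper instead invokes a differential inequality for the \emph{marginal} conditional KL (the time-derivative bound of \citep[Lemma~6]{chen2023improved}, obtained from the Fokker--Planck equations of the two processes), which produces a negative Fisher-information term and a cross term; one application of Young's inequality cancels the Fisher term and leaves the score discrepancy. After integrating in $t$, both arguments yield the same bound used in the proof of Theorem~\ref{thm:VESDE_generation} (any constant-factor difference is absorbed by the $\lesssim$ there).

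What each buys: your Girsanov route is self-contained and makes the structure transparent, at the price of having to justify that the exponential martingale is a true martingale---you correctly flag this and propose the standard localization remedy from \citep{chen2023improved,benton2024nearly}. The paper's Fokker--Planck route sidesteps Novikov entirely by working with densities and their evolution equations, but relies on an external lemma and the extra Young step. Both are well-established for score-based diffusion analyses; yours is arguably more classical, while the paper's matches the derivation style of \citep{chen2023improved} that it builds on.
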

\begin{proof}[Proof of Lemma \ref{lem:Girsanov application}] According to \citep[Lemma 6]{chen2023improved}, we have
\begin{align*}
    &\quad \KL( p_{\backt_j+t|\backt_j}^{\leftarrow}(\cdot|y)|q_{\backt_j+t|\backt_j}(\cdot|y)  )\\
    &\le -2\sigma_{T-t}^2 \int  p_{\backt_j+t|\backt_j}^{\leftarrow}(x|y) \lVert \nabla \log \frac{p_{\backt_j+t|\backt_j}^{\leftarrow}(x|y)}{q_{\backt_j+t|\backt_j}(x|y)} \rVert^2 dx\\
    &\quad+2\sigma_{T-t}^2\mathbb{E}_{p_{\backt_j+t|\backt_j}^{\leftarrow}(x|y)}[\langle \nabla\log p_{T-t}(x)-s(\theta; T-\backt_j,Y_{\backt_j}),  \nabla \log \frac{p_{\backt_j+t|\backt_j}^{\leftarrow}(x|y)}{q_{\backt_j+t|\backt_j}(x|y)} \rangle] \\
    &\le \frac{1}{2} \sigma_{T-t}^2 \mathbb{E}_{p_{\backt_j+t|\backt_j}^{\leftarrow}(x|y)}[\lVert \nabla\log p_{T-t}(x)-s(\theta; T-\backt_j,Y_{\backt_j})\rVert^2],
\end{align*}
where the last inequality follows from Young's inequality. Therefore, Lemma \ref{lem:Girsanov application} is proved after taking another expectation.
    
\end{proof}
\begin{lemma}\label{lem:initialization error VE} For any probability distribution $p$ satisfying Assumption \ref{assump:finite_second_moment} and $q$ being a centered multivariate normal distribution with covariance matrix $\sigma^2 I_d$, we have
    \begin{align*}
        \KL(p\ast q | q) \le \frac{\mathrm{m}_2^2}{ 2\sigma^2}.
    \end{align*}
\end{lemma}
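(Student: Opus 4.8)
The plan is to reduce the bound to the elementary formula for the Kullback--Leibler divergence between two Gaussians with a common covariance. Write $q=\mathcal{N}(0,\sigma^2 I_d)$ and note that $p\ast q$ is the law of $X+Z$ with $X\sim p$ and $Z\sim q$ independent; equivalently, $p\ast q=\int \mathcal{N}(x,\sigma^2 I_d)\,p(dx)$ is a $p$-mixture of the shifted Gaussians $\mathcal{N}(x,\sigma^2 I_d)$.

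First I would apply the convexity of $\nu\mapsto\KL(\nu\,\|\,q)$ (equivalently, the joint convexity of relative entropy with the second argument frozen), which yields the Jensen-type inequality
$$\KL(p\ast q\,\|\,q)=\KL\!\left(\int \mathcal{N}(x,\sigma^2 I_d)\,p(dx)\,\Big\|\,q\right)\le\int \KL\!\left(\mathcal{N}(x,\sigma^2 I_d)\,\|\,q\right)p(dx).$$
An equivalent route, closer in spirit to Step~1 of the proof of Theorem~\ref{thm:VESDE_generation}, is to form the two joint laws of $(X,Y)$: under $P_{XY}$ one has $X\sim p$ and $Y\mid X\sim\mathcal{N}(X,\sigma^2 I_d)$, while under $Q_{XY}$ one has $X\sim p$ and $Y\sim q$ independently; the $Y$-marginals are $p\ast q$ and $q$, so the data-processing inequality gives $\KL(p\ast q\,\|\,q)\le\KL(P_{XY}\,\|\,Q_{XY})$, and the chain rule for KL turns the right-hand side into $\mathbb{E}_{X\sim p}[\KL(\mathcal{N}(X,\sigma^2 I_d)\,\|\,\mathcal{N}(0,\sigma^2 I_d))]$.

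Second I would invoke the standard identity $\KL(\mathcal{N}(x,\sigma^2 I_d)\,\|\,\mathcal{N}(0,\sigma^2 I_d))=\|x\|^2/(2\sigma^2)$, obtained from a direct computation in which the trace and log-determinant terms cancel and only the mean-shift contribution survives. Substituting this and using Assumption~\ref{assump:finite_second_moment} gives
$$\KL(p\ast q\,\|\,q)\le\int\frac{\|x\|^2}{2\sigma^2}\,p(dx)=\frac{\mathrm{m}_2^2}{2\sigma^2},$$
which is the claim. There is no real obstacle here: the only two steps needing justification are the convexity/data-processing inequality and the Gaussian KL formula, both of which are classical, so the estimate follows at once.
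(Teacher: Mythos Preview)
Your proof is correct and essentially identical to the paper's: both use the convexity of $\KL(\cdot\,|\,q)$ to pass the mixture $p\ast q=\int \mathcal{N}(x,\sigma^2 I_d)\,p(dx)$ inside the KL, then evaluate $\KL(\mathcal{N}(x,\sigma^2 I_d)\,\|\,\mathcal{N}(0,\sigma^2 I_d))=\|x\|^2/(2\sigma^2)$ and integrate against $p$. The alternative data-processing/chain-rule route you mention is a valid equivalent justification, but the paper simply cites convexity directly.
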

\begin{proof}[Proof of Lemma \ref{lem:initialization error VE}]
    \begin{align*}
        \KL(p\ast q|q) &\le \int \KL( q(\cdot-y)|q(\cdot) ) p(dy) = \int \KL( \mathcal{N}(y,\sigma^2 I_d)| \mathcal{N}(0,\sigma^2 I_d) ) p(dy) \\
        &= \frac{1}{2} \int \ln(1)-d+\Tr(I_d) + \lVert y \rVert^2 \sigma^{-2} p(dy) = \frac{\mathrm{m}_2^2}{2\sigma^2},
    \end{align*}
    where the inequality follows from convexity of $\KL(\cdot|q)$ and the second identity follows from KL-divergence between multivariate normal distributions.
\end{proof}
\begin{lemma}\label{lem:localization} Let $\{X_t\}_{0\le t\le T}$ be the solution to \eqref{eqn:general_forward_SDE} with $f_t\equiv 0$ and $p_{0|t}(\cdot|x)$ be the conditional distribution of $X_0$ given $\{X_t=x\}$. Define
\begin{align}\label{eq:localization mean and variance}
    m_t(X_t):= \mathbb{E}_{X\sim p_{0|t}(\cdot|X_t)} [X] ,\quad \Sigma_t(X_t) = \text{Cov}_{X\sim p_{0|t}(\cdot|X_t)}(X).
\end{align}
Let $\{Y_t\}_{0\le t\le T}$ be the solution to \eqref{eqn:general_backward_SDE} with $f_t\equiv 0$ and $q_{0|t}(\cdot|x)$ be the conditional distribution of $Y_0$ given $\{Y_t=x\}$. Define
\begin{align}\label{eq:localization mean and variance reverse}
    \bar{m}_t(Y_t):= \mathbb{E}_{X\sim q_{0|t}(\cdot|Y_t)} [X] ,\quad \bar{\Sigma}_t(Y_t) = \text{Cov}_{X\sim q_{0|t}(\cdot|Y_t)}(X).
\end{align}
    Then we have for all $t\in (0,T)$,
    \begin{align*}
        d \bar{m}_{t}(Y_t) & =\sqrt{2}\sigma_{T-t}\bar{\sigma}_{T-t}^{-2} \bar{\Sigma}_{t}(Y_t)  d \Tilde{W}_t , \\
        \text{and}\qquad \frac{d}{dt} \mathbb{E} [ \Sigma_t(X_t) ] &=2\sigma_t^2 \bar{\sigma}_t^{-4} \mathbb{E} [\Sigma_t(X_t)^2].
    \end{align*}
\end{lemma}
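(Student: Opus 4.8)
The plan is to derive both identities from two ingredients: a second‑order Tweedie formula relating $\Sigma_t$ and $m_t$ to the score, and the stochastic‑localization observation (in the spirit of \citep{benton2024nearly,he2024zeroth}) that the posterior mean of the clean data given the reverse‑process state is a martingale. Since $f_t\equiv 0$, we have $X_t=X_0+\bar\sigma_t\xi$ with $\xi\sim\mathcal N(0,I_d)$ independent of $X_0$, so $p_t=p_0\ast\mathcal N(0,\bar\sigma_t^2 I_d)$. Differentiating under the integral sign (legitimate because the Gaussian kernel is smooth with all moments finite), I would obtain the first‑order identity $m_t(x)=x+\bar\sigma_t^2\nabla\log p_t(x)$ and the second‑order identity $\nabla^2\log p_t(x)=\bar\sigma_t^{-4}\Sigma_t(x)-\bar\sigma_t^{-2}I_d$; the latter follows by writing $\nabla^2 p_t/p_t$ as the posterior expectation of $\bar\sigma_t^{-4}(x-X_0)(x-X_0)^\top-\bar\sigma_t^{-2}I_d$ and using $x-m_t(x)=-\bar\sigma_t^2\nabla\log p_t(x)$. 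In particular $I_d+\bar\sigma_t^2\nabla^2\log p_t(x)=\bar\sigma_t^{-2}\Sigma_t(x)$. By the stated time‑reversal $Y_{T-t}\stackrel{d}{=}X_t$ and the Markov structure, the posterior of $X_0$ given the reverse state equals that of $X_0$ given $X_{T-t}$, so $\bar m_t(Y_t)=m_{T-t}(Y_t)=Y_t+\bar\sigma_{T-t}^2\nabla\log p_{T-t}(Y_t)$ and $\bar\Sigma_t(Y_t)=\Sigma_{T-t}(Y_t)$.

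\textbf{First identity.} Because $\bar m_t(Y_t)=\mathbb E[X_0\mid Y_t]$ is a conditional expectation of a fixed random variable with respect to the filtration of the reverse process, it is a martingale, hence its Itô differential has no $dt$ term. To extract the diffusion coefficient I would apply Itô's formula to $g(t,y):=y+\bar\sigma_{T-t}^2\nabla\log p_{T-t}(y)$ along \eqref{eqn:general_backward_SDE} with $f\equiv 0$: the Jacobian of $g$ in $y$ is $I_d+\bar\sigma_{T-t}^2\nabla^2\log p_{T-t}(y)$, so the stochastic part of $dg(t,Y_t)$ equals $\bigl(I_d+\bar\sigma_{T-t}^2\nabla^2\log p_{T-t}(Y_t)\bigr)\sqrt{2\sigma_{T-t}^2}\,d\tilde W_t$, and substituting $I_d+\bar\sigma_{T-t}^2\nabla^2\log p_{T-t}=\bar\sigma_{T-t}^{-2}\Sigma_{T-t}$ from Step~1 gives exactly $\sqrt2\,\sigma_{T-t}\bar\sigma_{T-t}^{-2}\bar\Sigma_t(Y_t)\,d\tilde W_t$. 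As a consistency check, the vanishing of the drift can also be verified directly from the forward Fokker–Planck equation $\partial_s p_s=\sigma_s^2\Delta p_s$.

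\textbf{Second identity.} Applying the law of total covariance to the pair $(X_0,Y_t)$ gives $\mathrm{Cov}(X_0)=\mathbb E[\bar\Sigma_t(Y_t)]+\mathrm{Cov}(\bar m_t(Y_t))$. The left side does not depend on $t$, so $\tfrac{d}{dt}\mathbb E[\bar\Sigma_t(Y_t)]=-\tfrac{d}{dt}\mathrm{Cov}(\bar m_t(Y_t))$. Using the first identity and the Itô isometry (and that $\bar\Sigma_t$ is symmetric, so the coefficient times its transpose is $2\sigma_{T-t}^2\bar\sigma_{T-t}^{-4}\bar\Sigma_t(Y_t)^2$), one gets $\tfrac{d}{dt}\mathrm{Cov}(\bar m_t(Y_t))=2\sigma_{T-t}^2\bar\sigma_{T-t}^{-4}\mathbb E[\bar\Sigma_t(Y_t)^2]$, hence $\tfrac{d}{dt}\mathbb E[\bar\Sigma_t(Y_t)]=-2\sigma_{T-t}^2\bar\sigma_{T-t}^{-4}\mathbb E[\bar\Sigma_t(Y_t)^2]$ on $(0,T)$. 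Substituting $t\mapsto T-t$ and using $\bar\Sigma_t(Y_t)=\Sigma_{T-t}(Y_t)$ together with $Y_t\stackrel{d}{=}X_{T-t}$ turns this into $\tfrac{d}{ds}\mathbb E[\Sigma_s(X_s)]=2\sigma_s^2\bar\sigma_s^{-4}\mathbb E[\Sigma_s(X_s)^2]$, which is the claim.

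\textbf{Main obstacle.} The algebra is short; the delicate part is the analytic bookkeeping — justifying differentiation under the integral in the Tweedie identities, verifying the integrability needed to conclude $\bar m_t(Y_t)$ is a genuine (not merely formal) martingale and to apply the Itô isometry, and confirming the drift truly cancels uniformly on compact subintervals of $(0,T)$. These are standard once one uses that $p_T=p_0\ast\mathcal N(0,\bar\sigma_T^2 I_d)$ is smooth with controlled derivatives and that $\bar\sigma_t>0$ away from $t=0$, but they are the steps that require care to write out in full.
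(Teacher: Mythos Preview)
Your proposal is correct and shares the paper's overall framework (Tweedie identities plus the stochastic--localization picture), but your derivation of the first identity is organized differently. The paper writes the posterior as a tilted measure $q_{0|t}(dx\mid Y_t)=Z_t^{-1}e^{h_t(x)}p(dx)$ with $h_t(x)=-\tfrac12\bar\sigma_{T-t}^{-2}\|x\|^2+\bar\sigma_{T-t}^{-2}\langle x,Y_t\rangle$, then computes $dh_t$, $dZ_t$, $d\log Z_t$, and $dR_t$ explicitly and integrates to obtain $d\bar m_t(Y_t)$; the drift vanishes only after several cancellations are checked by hand. You instead observe up front that $\bar m_t(Y_t)=\mathbb E[X_0\mid \mathcal F_t^Y]$ is a Doob martingale (under the coupling $Y_t=X_{T-t}$, using the reverse--time Markov property), so the drift is zero by construction, and then read off the diffusion coefficient from It\^o's formula applied to the Tweedie representation $g(t,y)=y+\bar\sigma_{T-t}^2\nabla\log p_{T-t}(y)$. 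Your route is shorter and more conceptual; the paper's route is more self--contained because it avoids invoking the coupling/martingale argument and exhibits the cancellation explicitly. For the second identity the two arguments coincide: both use the law of total covariance to reduce to $\tfrac{d}{dt}\mathrm{Cov}(\bar m_t(Y_t))$ and then the first identity plus It\^o isometry. One small point worth tightening in your write--up is the identification $\mathbb E[X_0\mid Y_t]=\mathbb E[X_0\mid\mathcal F_t^Y]$, which needs the (reverse--time) Markov property to be stated; once that is made explicit, your argument is complete.
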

\begin{proof}[Proof of Lemma \ref{lem:localization}] We first represent $\nabla log p_t(X_t)$ and $\nabla^2\log p_t(X_t)$ via $m_t(X_t)$ and $\Sigma_t(X_t)$. Since $\{X_t\}_{0\le t\le T}$ solves \eqref{eqn:general_forward_SDE}, $X_t=X_0+\bar{\sigma}_t \xi$ with $(X_0,\xi)\sim p \otimes \mathcal{N}(0,I_d)$. Therefore, according to Bayes rule, we have
\begin{align}\label{eq:score along FD}
    \nabla \log p_t(X_t) &= \frac{1}{p_t(X_t)} \int \nabla \log p_{t|0}(X_t|x) p_{0,t}(x,X_t) dx \nonumber \\
    &= \mathbb{E}_{x\sim p_{0|t}(\cdot|X_t)}[ \bar{\sigma}_t^{-2}(X_t-x) ] \nonumber\\
    &= -\bar{\sigma}_t^{-2} (X_t-m_t(X_t)),
\end{align}
    where the second identity follows from the fact that $p_{t|0}(\cdot|x)=\mathcal{N}(x,\bar{\sigma}_t^2 I_d)$. The last identity follows from the definition of $m_t(X_t)$ in Lemma \ref{lem:localization}. Similarly, according to Bayes rule, we can compute
    \begin{align}\label{eq:hessian score along FD}
        &\quad \nabla^2 \log p_t(X_t) \nonumber\\
        &=\frac{1}{p_t(X_t)} \int \nabla^2 \log p_{t|0}(X_t|x) p_{0,t}(x,X_t) dx \nonumber \\ 
        &\quad + \frac{1}{p_t(X_t)} \int \big(\nabla \log p_{t|0}(X_t|x)\big)\big(\nabla \log p_{t|0}(X_t|x)\big)^\intercal p_{0,t}(x,X_t) dx \nonumber \\ 
        &\quad -\frac{1}{p_t(X_t)^2} \big(\int \nabla \log p_{t|0}(X_t|x) p_{0,t}(x,X_t) dx\big)\big(\int \nabla \log p_{t|0}(X_t|x) p_{0,t}(x,X_t) dx\big)^\intercal \nonumber\\
        &= -\bar{\sigma}_t^{-2} I_d + \bar{\sigma}_t^{-4} \Sigma_t(X_t) ,
    \end{align}
    where the second identity follows from the fact that $p_{t|0}(\cdot|x)=\mathcal{N}(x,\bar{\sigma}_t^2 I_d)$ and the definition of $\Sigma_t(X_t)$ in Lemma \ref{lem:localization}.

    According to Bayes rule, we have
    \begin{align*}
        p_{0|t}(dx | X_t) \propto \exp( -\frac{1}{2}\frac{\lVert X_t-x\rVert^2}{ \bar{\sigma}_t^2 } ) p(dx)
    \end{align*}
    and 
    \begin{align}\label{eq:conditional q}
        q_{0|t}(dx|Y_t) &= Z^{-1} \exp( -\frac{1}{2} \frac{\lVert Y_t-x \rVert^2}{\bar{\sigma}_{T-t}^2} ) p(dx) \nonumber\\
        &= Z_t^{-1}\exp( -\frac{1}{2}\bar{\sigma}_{T-t}^{-2} \lVert x \rVert^2 +\bar{\sigma}_{T-t}^{-2}\langle x,Y_t \rangle ) p(dx)\nonumber \\
        &:=Z_t^{-1} \exp(h_t(x)) p(dx),
    \end{align}
    where $Z_t=\int \exp(h_t(x))p(dx)$ is a (random) normalization constant. From the above computations, we can see that $q_{0|t}(dx|Y_t)=p_{0|T-t}(dx|X_{T-t})$ for all $t\in [0,T]$. Therefore, we have
    \begin{align*}
        \bar{m}_t(Y_t)= \mathbb{E}_{X\sim q_{0|t}(\cdot|Y_t)} [X]=m_{T-t}(X_{T-t}) ,\quad \bar{\Sigma}_t(Y_t) = \text{Cov}_{X\sim q_{0|t}(\cdot|Y_t)}(X)=\Sigma_{T-t}(X_{T-t}),
    \end{align*}
    where the identities hold in  distribution. Therefore, to prove the first statement, it suffices to compute $d \bar{m}_t(Y_t)$. To do so, we first compute $dh_t(x)$, $d[h(x),h(x)]_t$, $d Z_t$ and $d\log Z_t $.
    \begin{align}
       & d h_t(x) = \bar{\sigma}_{T-t}^{-3} \dot{\bar{\sigma}}_{T-t}\lVert x \rVert^2 dt -2\bar{\sigma}_{T-t}^{-3} \dot{\bar{\sigma}}_{T-t}\bar{\sigma}\langle x,Y_t\rangle dt+\bar{\sigma}_{T-t}^{-2} \langle x,dY_t \rangle .\label{eq:diff h_t}
    \end{align}
    According to the definition of $Y_t$ and \eqref{eq:score along FD}, we have
    \begin{align*}
        dY_t &= 2\sigma_{T-t}^2 \nabla\log p_{T-t}(Y_t)dt +\sqrt{2\sigma_{T-t}^2}d \Tilde{W}_t \\
        &= -2\sigma_{T-t}^2 \bar{\sigma}_{T-t}^{-2}(Y_t-\bar{m}_t(Y_t)) dt +\sqrt{2\sigma_{T-t}^2}d\Tilde{W}_t.
    \end{align*}
    Therefore 
    \begin{align}
        d[h(x),h(x)]_t = \bar{\sigma}_{T-t}^{-4} |x|^2 [d Y, dY]_t =2 \sigma_{T-t}^2\bar{\sigma}_{T-t}^{-4} \lVert x\rVert^2.\label{eq:quadratic variation h_t}
    \end{align}
    Apply \eqref{eq:diff h_t} and \eqref{eq:quadratic variation h_t} and we get
    \begin{align}
       d Z_t & = \int \exp(h_t(x)) \big( d h_t(x)+\frac{1}{2}d [h(x),h(x)]_t\big) p(dx) \nonumber\\
       & = \bar{\sigma}_{T-t}^{-3} \dot{\bar{\sigma}}_{T-t} \mathbb{E}_{q_{0|t}(\cdot|Y_t)}[\lVert x \rVert^2] Z_t dt -2\bar{\sigma}_{T-t}^{-3} \dot{\bar{\sigma}}_{T-t}\langle Y_t, \bar{m}_t(Y_t) \rangle Z_t dt \nonumber \\
       &\quad + \bar{\sigma}_{T-t}^{-2} \langle \bar{m}_t(Y_t), dY_t \rangle Z_t + \sigma_{T-t}^2 \bar{\sigma}_{T-t}^{-4}\mathbb{E}_{q_{0|t}(\cdot|Y_t)}[\lVert x \rVert^2] Z_t dt, \label{eq: diff Z_t}
    \end{align}
    and 
    \begin{align}
        d\log Z_t & = Z_t^{-1} d Z_t -\frac{1}{2}Z_t^{-2} d[Z,Z]_t \nonumber \\
        &= -2\bar{\sigma}_{T-t}^{-3} \dot{\bar{\sigma}}_{T-t}\langle Y_t, \bar{m}_t(Y_t) \rangle dt + \bar{\sigma}_{T-t}^{-2}\langle \bar{m}_t(Y_t), dY_t \rangle-\sigma_{T-t}^2\bar{\sigma}_{T-t}^{-4} \lVert \bar{m}_t(Y_t) \rVert^2 dt. \label{eq:diff log Z_t}
    \end{align}
    If we further define $R_t(Y_t):= \tfrac{q_{0|t}(dx|Y_t) }{p(dx)}=Z_t^{-1}\exp(h_t(x))$. We have
    \begin{align}
        d R_t(Y_t) & = d \exp ( \log R_t(Y_t)) = R_t(Y_t)d (\log R_t(Y_t)) +\frac{1}{2}R_t(Y_t)d [\log R_t(Y_t),\log R_t(Y_t) ] \nonumber\\
        &= -R_t(Y_t)d (\log Z_t)+R_t(Y_t)dh_t(x)+\frac{1}{2}R_t(Y_t)d [ h_t(x)-\log Z_t, h_t(x)-\log Z_t] \label{eq:diff R_t}
    \end{align}
    With \eqref{eq:diff h_t}, \eqref{eq:quadratic variation h_t}, \eqref{eq: diff Z_t}, \eqref{eq:diff log Z_t} and \eqref{eq:diff R_t}, we have
    \begin{align}
        d\bar{m}_t(Y_t) & = d \int x R_t(Y_t) p(dx) \nonumber\\
        &= \int x \big( -d (\log Z_t)+dh_t(x)+\frac{1}{2}d [ h_t(x)-\log Z_t, h_t(x)-\log Z_t] \big) q_{0|t}(dx|Y_t)\nonumber \\
       &= \sqrt{2}\sigma_{T-t}\bar{\sigma}_{T-t}^{-2} \bar{\Sigma}_t(Y_t) d\Tilde{W}_t,\label{eq: diff mean local}
    \end{align}
    where most terms cancel in the last identity. Therefore, the first statement is proved. Next, we prove the second statement. We have
    \begin{align*}
        \frac{d}{dt} \mathbb{E} [\Sigma_{T-t}(X_{T-t})] &= \frac{d}{dt} \mathbb{E} [\bar{\Sigma}_{t}(Y_t)] =\frac{d}{dt} \mathbb{E} [\Sigma_{T-t}(X_{T-t})] \\
        &= \frac{d}{dt} \mathbb{E}_{Y_t\sim p_{T-t}} [\mathbb{E}_{q_{0|t}(\cdot|Y_t)}[x^\otimes 2]-\bar{m}_t(Y_t)^\otimes 2] \\
        &=\frac{d}{dt} \mathbb{E}_{q_0}[x^\otimes 2]-\frac{d}{dt} \mathbb{E}[\bar{m}_t(Y_t)^\otimes 2] \\
        &= -\mathbb{E}[ -2\bar{m}_t(Y_t)d \bar{m}_t(Y_t)^\intercal +d[\bar{m}_t(Y_t),\bar{m}_t(Y_t)^\intercal] ] \\
        &=2\bar{\sigma}_{T-t}^{-3} \dot{\bar{\sigma}}_{T-t} \mathbb{E}[\bar{\Sigma}_t(Y_t)^2] dt\\
        &=-2\sigma_{T-t}^2\bar{\sigma}_{T-t}^{-4}  \mathbb{E}[\Sigma_t(X_{T-t})^2],
    \end{align*}
    where the second last identity follows from \eqref{eq: diff mean local} and the last identity follows from the definition of $\bar{\sigma}_t$. Last, we reverse the time and get
    \begin{align*}
        \frac{d}{dt} \mathbb{E} [\Sigma_{t}(X_{t})]&= 2 \sigma_t^2\bar{\sigma}_{t}^{-4}  \mathbb{E} [\Sigma_{t}(X_{t})^2]. 
    \end{align*}
    The proof is completed.
\end{proof}

\begin{lemma}\label{lem:discretization error representation} Under the conditions in Lemma \ref{lem:localization}, let $\{Y_t\}_{0\le t\le T}$ be the solution to \eqref{eqn:general_backward_SDE} with $f_t\equiv 0$. Define $L_t:=\nabla \log p_{T-t}(Y_t)$, then for any $t\in [\backt_j,\backt_{j+1})$, we have
    \begin{align*}
        \mathbb{E}[ \lVert L_t-L_{\backt_j} \rVert^2 ] = 2d\int_{\backt_j}^t \sigma_{T-u}^2 \bar{\sigma}_{T-u}^{-4} du + \bar{\sigma}_{T-\backt_j}^{-4}\mathbb{E}[\Tr(\Sigma_{T-\backt_j}(X_{T-\backt_j}))]-\bar{\sigma}_{T-t}^{-4}\mathbb{E}[\Tr(\Sigma_{T-t}(X_{T-t}))]
    \end{align*}
\end{lemma}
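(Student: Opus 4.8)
The plan is to show that the process $t\mapsto L_t=\nabla\log p_{T-t}(Y_t)$ is a martingale along the backward dynamics, so that $\mathbb{E}[\lVert L_t-L_{\backt_k}\rVert^2]$ equals the expected increment of its quadratic variation, and then to evaluate that increment by tying it to $\mathbb{E}[\Tr\Sigma_t(X_t)]$, whose evolution is already determined in Lemma~\ref{lem:localization}.

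First I would use the score identity \eqref{eq:score along FD}, read along the reversed process, to write $L_t=-\bar\sigma_{T-t}^{-2}\big(Y_t-\bar m_t(Y_t)\big)$, and then apply It\^o's formula to this expression. Besides $dY_t=2\sigma_{T-t}^2 L_t\,dt+\sqrt{2\sigma_{T-t}^2}\,d\tilde W_t$, the only other stochastic input is the martingale representation $d\bar m_t(Y_t)=\sqrt{2}\,\sigma_{T-t}\bar\sigma_{T-t}^{-2}\bar\Sigma_t(Y_t)\,d\tilde W_t$ from \eqref{eq: diff mean local}; all the remaining $dt$ terms are deterministic scalars times $L_t$. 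Writing $a_t:=\bar\sigma_{T-t}^{-2}$ and using $\tfrac{d}{ds}\bar\sigma_s^2=2\sigma_s^2$, one gets $\dot a_t=2\sigma_{T-t}^2 a_t^2$, hence $\dot a_t a_t^{-1}=2\sigma_{T-t}^2 a_t$, and the drift of $dL_t$ becomes $\dot a_t a_t^{-1}L_t-2\sigma_{T-t}^2 a_t L_t=0$, leaving
\begin{align*}
    dL_t=\sqrt{2}\,\sigma_{T-t}\,\bar\sigma_{T-t}^{-2}\big(\bar\sigma_{T-t}^{-2}\bar\Sigma_t(Y_t)-I_d\big)\,d\tilde W_t .
\end{align*}
I expect this exact drift cancellation to be the main obstacle: although it is a short computation, it is the point where the precise form of the reverse drift and of the dynamics of $\bar m_t$ must fit together, and it is also where one has to verify enough integrability (using $\mathrm{m}_2^2<\infty$ and $\bar\sigma_{T-t}$ bounded below on $[\backt_k,\backt_{k+1}]$) to upgrade the local martingale to a genuine one before invoking It\^o's isometry.

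Granting that, It\^o's isometry and symmetry of $\bar\Sigma_u(Y_u)$ give
\begin{align*}
    \mathbb{E}[\lVert L_t-L_{\backt_k}\rVert^2]=\mathbb{E}\Big[\int_{\backt_k}^t 2\sigma_{T-u}^2\bar\sigma_{T-u}^{-4}\,\Tr\big((\bar\sigma_{T-u}^{-2}\bar\Sigma_u(Y_u)-I_d)^2\big)\,du\Big].
\end{align*}
Expanding the trace as $\bar\sigma_{T-u}^{-4}\Tr(\bar\Sigma_u^2)-2\bar\sigma_{T-u}^{-2}\Tr(\bar\Sigma_u)+d$, the $d$-piece is exactly $2d\int_{\backt_k}^t\sigma_{T-u}^2\bar\sigma_{T-u}^{-4}\,du$. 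For the other two pieces I would pass to expectations via the distributional identity $\bar\Sigma_u(Y_u)\stackrel{d}{=}\Sigma_{T-u}(X_{T-u})$, and then observe — using $\tfrac{d}{ds}\mathbb{E}[\Sigma_s(X_s)]=2\sigma_s^2\bar\sigma_s^{-4}\mathbb{E}[\Sigma_s(X_s)^2]$ from Lemma~\ref{lem:localization} together with $\tfrac{d}{du}\bar\sigma_{T-u}^{-4}=4\sigma_{T-u}^2\bar\sigma_{T-u}^{-6}$ — that the corresponding integrand equals $-\tfrac{d}{du}\big(\bar\sigma_{T-u}^{-4}\mathbb{E}[\Tr\Sigma_{T-u}(X_{T-u})]\big)$. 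The fundamental theorem of calculus then produces $\bar\sigma_{T-\backt_k}^{-4}\mathbb{E}[\Tr\Sigma_{T-\backt_k}(X_{T-\backt_k})]-\bar\sigma_{T-t}^{-4}\mathbb{E}[\Tr\Sigma_{T-t}(X_{T-t})]$, and adding the three contributions yields the stated identity.
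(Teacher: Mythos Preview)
Your proposal is correct and proceeds along essentially the same skeleton as the paper: establish that $L_t$ is a martingale with diffusion coefficient $\sqrt{2}\,\sigma_{T-t}\,\nabla^2\log p_{T-t}(Y_t)=\sqrt{2}\,\sigma_{T-t}\bar\sigma_{T-t}^{-2}(\bar\sigma_{T-t}^{-2}\bar\Sigma_t-I_d)$, apply It\^o's isometry, expand the squared Frobenius norm, and use the second identity of Lemma~\ref{lem:localization} to integrate the $\Sigma^2$ term. The one substantive difference is in how the martingale property is obtained: the paper applies It\^o's formula directly to $(t,y)\mapsto\nabla\log p_{T-t}(y)$ and cancels the drift via the Fokker--Planck equation $\partial_t p_t=\sigma_t^2\Delta p_t$, whereas you write $L_t=-\bar\sigma_{T-t}^{-2}(Y_t-\bar m_t(Y_t))$ and feed in the first identity of Lemma~\ref{lem:localization}; both routes yield the same $dL_t$. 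Your recognition of the non-$d$ part of the integrand as the exact derivative $-\tfrac{d}{du}\big(\bar\sigma_{T-u}^{-4}\mathbb{E}[\Tr\Sigma_{T-u}(X_{T-u})]\big)$ is precisely the integration by parts the paper carries out line by line.
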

\begin{proof}[Proof of Lemma \ref{lem:discretization error representation}]
First, according to the definition of $L_t$ and $Y_t$, it follows from It\^o's lemma that
\begin{align}\label{eq:diff Lt}
    d L_t &= \nabla^2\log p_{T-t}(Y_t)\big( 2\sigma_{T-t}^2\nabla \log p_{T-t}(Y_t)dt+\sqrt{2\sigma_{T-t}} d\Tilde{W}_t \big) \\
    &\quad + \Delta \big( \nabla \log p_{T-t}(Y_t) \big) \sigma_{T-t}^2 dt + \frac{d ( \nabla \log p_{T-t}) }{dt}(Y_t) dt  \\
    &= \sqrt{2\sigma_{T-t}^2} \nabla^2 \log p_{T-t}(Y_t) d\Tilde{W}_t,
\end{align}
where the last step follows from applying the Fokker Planck equation of \eqref{eqn:general_forward_SDE} with $f_t\equiv 0$, i.e., $ \partial _t p_t = \sigma_t^2 \Delta p_t$. Most of the terms are cancelled after applying the Fokker Planck equation. Now, for fixed $s>0$ and $t>s$, define $E_{s,t}:= \mathbb{E}[\lVert L_t-L_s\rVert^2]$. Apply It\^o's lemma and \eqref{eq:diff Lt}, we have
\begin{align}\label{eq:diff Est}
    d E_{s,t} &= 2\mathbb{E}[\langle L_t-L_s, dL_t \rangle] + d [L]_t \nonumber \\
    &= 2\mathbb{E}[\langle L_t-L_s, \sqrt{2\sigma_{T-t}^2}\nabla \log p_{T-t}(Y_t)d\Tilde{W}_t \rangle] + 2\sigma_{T-t}^2\mathbb{E}[\lVert \nabla^2 \log p_{T-t}(Y_t) \rVert_{\F}^2] dt \nonumber\\
    &=2\sigma_{T-t}^2\mathbb{E}[\lVert \nabla^2 \log p_{T-t}(Y_t) \rVert_{\F}^2] dt,
\end{align}
where $\lVert A \rVert_{\F}$ denotes the Frobenius norm of any matrix $A$. According to \eqref{eq:hessian score along FD}, we have
\begin{align*}
    \frac{d E_{s,t}}{dt} &= 2\sigma_{T-t}^2\mathbb{E}[\lVert \nabla^2 \log p_{T-t}(Y_t) \rVert_{\F}^2]  = 2\sigma_{T-t}^2\mathbb{E}[\lVert \nabla^2 \log p_{T-t}(X_{T-t}) \rVert_{\F}^2]  \\
    &= 2\sigma_{T-t}^2 \mathbb{E} [ \lVert -\bar{\sigma}_{T-t}^{-2} I_d + \bar{\sigma}_{T-t}^{-4} \Sigma_{T-t}(X_{T-t})  \rVert_{\F}^2 ]  \\
    &=2d \sigma_{T-t}^2 \bar{\sigma}_{T-t}^{-4} -4 \sigma_{T-t}^2\bar{\sigma}_{T-t}^{-6}\mathbb{E}[\Tr(\Sigma_{T-t}(X_{T-t}))] + 2\sigma_{T-t}^2 \bar{\sigma}_{T-t}^{-8}\mathbb{E}[\Tr(\Sigma_{T-t}(X_{T-t})^2)]\\
    &=2d \sigma_{T-t}^2 \bar{\sigma}_{T-t}^{-4} -4 \sigma_{T-t}^2\bar{\sigma}_{T-t}^{-6}\mathbb{E}[\Tr(\Sigma_{T-t}(X_{T-t}))] -  \bar{\sigma}_{T-t}^{-4}\frac{d}{dt}\mathbb{E}[\Tr(\Sigma_{T-t}(X_{T-t}))],
\end{align*}
where the last identity follows from the proof of Lemma \ref{lem:localization}. Therefore, for any $t\in [\backt_j,\backt_{j+1})$, we have
\begin{align*}
    E_{\backt_j,t} &= 2d\int_{\backt_j}^t \sigma_{T-u}^2 \bar{\sigma}_{T-u}^{-4} du - 4\int_{\backt_j}^t \sigma_{T-u}^2\bar{\sigma}_{T-u}^{-6}\mathbb{E}[\Tr(\Sigma_{T-u}(X_{T-u}))] du \\
    &\quad -  \int_{\backt_j}^t\bar{\sigma}_{T-u}^{-4}\frac{d}{du}\mathbb{E}[\Tr(\Sigma_{T-u}(X_{T-u}))]\\
    &=2d\int_{\backt_j}^t \sigma_{T-u}^2 \bar{\sigma}_{T-u}^{-4} du - 4\int_{\backt_j}^t \sigma_{T-u}^2\bar{\sigma}_{T-u}^{-6}\mathbb{E}[\Tr(\Sigma_{T-u}(X_{T-u}))] du \\
    &\quad - \bar{\sigma}_{T-t}^{-4}\mathbb{E}[\Tr(\Sigma_{T-t}(X_{T-t}))] +\bar{\sigma}_{T-\backt_j}^{-4}\mathbb{E}[\Tr(\Sigma_{T-\backt_j}(X_{T-\backt_j}))]\\
    &\quad +4\int_{\backt_j}^t\sigma_{T-u}^2\bar{\sigma}_{T-u}^{-6}\mathbb{E}[\Tr(\Sigma_{T-u}(X_{T-u}))]\\
    &=2d\int_{\backt_j}^t \sigma_{T-u}^2 \bar{\sigma}_{T-u}^{-4} du  - \bar{\sigma}_{T-t}^{-4}\mathbb{E}[\Tr(\Sigma_{T-t}(X_{T-t}))] +\bar{\sigma}_{T-\backt_j}^{-4}\mathbb{E}[\Tr(\Sigma_{T-\backt_j}(X_{T-\backt_j}))]
\end{align*}
The proof is completed.
\end{proof}

\section{Sampling error for Gaussian data distributions}\label{sec:gaussian data distribution}

In this section, we consider a special case when the data distribution is a mixture of Gaussians, i.e., 
\begin{align}\label{eq:gaussian}
    p(x)= \mathcal{N}(x; m, \sigma^2 I_d), 
\end{align}
where $\mathcal{N}(x; m, \sigma^2 I_d)$ is the density of Gaussian random vector with mean $m$ and covariance $\sigma^2 I_d$. In this case, the score function $\nabla\log p_t(x)$ can be explicitly calculated from any $t>0$, see Lemma \ref{lem:explicit score}. Therefore, the sampling process \eqref{eqn:VESDE_generation} can be implemented with zero score estimation error via the following piecewise SDE: for any $t\in [\backt_j,\backt_{j+1})$,
\begin{align}\label{eqn:exact backward_SDE}
    d \Bar{Y}_t =  2\sigma_{T-t}^2 \nabla\log p_{T-\backt_j}(\Bar{Y}_{\backt_j})  dt +\sqrt{2\sigma_{T-t}^2} d\Bar{W}_t.
\end{align}
The iterates, $(\Bar{Y}_{\backt_j})$, are all Gaussians with explicit means and covariance matrices, see Lemma \ref{lem:gaussian iterates}. As a consequence, we can quantify the quantity, $\KL(p_\delta|q_{T-\delta})$ in Theorem \ref{thm:VESDE_generation} explicitly since both distributions are Gaussians.
\begin{lemma}[KL-divergence error for Gaussian data distribution]\label{lem:exact error}
    Assume the data distribution has density given by \eqref{eq:gaussian}. Let $(\Bar{Y}_{\backt_j})_{j=0}^N$ be defined by \eqref{eqn:exact backward_SDE} with initial condition $\Bar{Y}_0\sim \mathcal{N}(0,\bar{\sigma}_T^2 I_d)$. Denote $q_{t}=\text{Law}(\Bar{Y}_t)$ for all $0\le t\le T-\delta$. Then
    \begin{align}\label{eq:exact error}
        \KL(p_\delta|q_{T-\delta}) = \frac{d}{2}( E_\sigma -1-\log E_\sigma ) + \lVert m \rVert^2 \frac{(\sigma^2+\bar{\sigma}_T^2)^2}{\sigma^2+\bar{\sigma}_\delta^2} E_\sigma ,
    \end{align}
    where $E_\sigma$ is a positive constant depending on the variance schedule $(\bar{\sigma}_{T-\backt_j})_{j=0}^{N}$, given by
    \begin{align*}
        E_\sigma^{-1} = \frac{(\sigma^2+\bar{\sigma}_{{\delta}}^2)\bar{\sigma}_T^2}{(\sigma^2+\bar{\sigma}_{T}^2)^2}+ \sum_{j=0}^{N-1} \frac{(\sigma^2+\bar{\sigma}_{\delta}^2)(\bar{\sigma}_{T-\backt_j}^2-\bar{\sigma}_{T-\backt_{j+1}}^2)}{(\sigma^2+\bar{\sigma}_{T-\backt_{j+1}}^2)^2}.
    \end{align*}
\end{lemma}
\begin{proof}[Proof of Lemma \ref{lem:exact error}]
    $p_\delta=p\ast \mathcal{N}(0,\bar{\sigma}_\delta^2)=\mathcal{N}(m,(\sigma^2+\bar{\sigma}_\delta^2)I_d)$ and $q_{T-\delta}=\text{Law}(\Bar{Y}_{\backt_N})=\mathcal{N}(m_N,\Sigma_N)$ with $(m_N,\Sigma_N)$ given in Lemma \ref{lem:gaussian iterates}. Therefore, we have
\begin{align*}
    \KL(p_\delta|q_{T-\delta}) & = \KL (\mathcal{N}(m,(\sigma^2+\bar{\sigma}_\delta^2)I_d )| \mathcal{N}(m_N,\Sigma_N))  \\
    &=\frac{1}{2}\log \frac{\det(\Sigma_N)}{\det((\sigma^2+\bar{\sigma}_\delta^2)I_d)}-\frac{d}{2}+\frac{1}{2}\Tr((\sigma^2+\bar{\sigma}_\delta^2)\Sigma_N^{-1})+(m_N-m)^\intercal\Sigma_N^{-1}(m-m_N) \\
    &= \frac{d}{2}\log\bigg( \frac{(\sigma^2+\bar{\sigma}_{{\delta}}^2)\bar{\sigma}_T^2}{(\sigma^2+\bar{\sigma}_{T}^2)^2}+ \sum_{j=0}^{N-1} \frac{(\sigma^2+\bar{\sigma}_{\delta}^2)(\bar{\sigma}_{T-\backt_j}^2-\bar{\sigma}_{T-\backt_{j+1}}^2)}{(\sigma^2+\bar{\sigma}_{T-\backt_{j+1}}^2)^2} \bigg)-\frac{d}{2}\\
    &\quad + \frac{d}{2}\bigg( \frac{(\sigma^2+\bar{\sigma}_{{\delta}}^2)\bar{\sigma}_T^2}{(\sigma^2+\bar{\sigma}_{T}^2)^2}+ \sum_{j=0}^{N-1} \frac{(\sigma^2+\bar{\sigma}_{\delta}^2)(\bar{\sigma}_{T-\backt_j}^2-\bar{\sigma}_{T-\backt_{j+1}}^2)}{(\sigma^2+\bar{\sigma}_{T-\backt_{j+1}}^2)^2} \bigg)^{-1}\\
    &\quad +  \bigg( {\bar{\sigma}_T^2}+ {(\sigma^2+\bar{\sigma}_{T}^2)^2}\sum_{j=0}^{N-1} \frac{(\bar{\sigma}_{T-\backt_j}^2-\bar{\sigma}_{T-\backt_{j+1}}^2)}{(\sigma^2+\bar{\sigma}_{T-\backt_{j+1}}^2)^2} \bigg)^{-1} \lVert m \rVert^2.
\end{align*}
\end{proof}

\begin{lemma}[Explicit score function for mixture of Gaussian target]\label{lem:explicit score} Assume the data distribution has density given by \eqref{eq:gaussian}, then the score function is given by
    \begin{align}\label{eq:explicit score}
        \nabla\log p_t(x)= - \frac{x-m}{\sigma^2+\bar{\sigma}_t^2} .
    \end{align}
\end{lemma}
\begin{proof}
    Since the forward process \eqref{eqn:general_forward_SDE} with $f_t\equiv 0$ is the just a process that keeps adding noise, the density $p_t$ along the process is a convolution between data density and a Gaussian density with mean zero and covariance $\bar{\sigma}_t^2 I_d$:
    \begin{align*}
        p_t(x) = p\ast \mathcal{N}(\cdot\ ; 0, \bar{\sigma}_t^2 I_d)(x)=\mathcal{N}(x;m,(\sigma^2+\bar{\sigma}_t^2)I_d) .
    \end{align*}
    Therefore, we have
    \begin{align*}
        \nabla p_t(x) & = (2\pi(\sigma^2+\bar{\sigma}_t^2))^{-d/2}\exp\big(-\frac{\lVert x-m \rVert^2}{2(\sigma^2+\bar{\sigma}_t^2)}\big)\big( -\frac{x-m}{\sigma^2+\bar{\sigma}_t^2} \big) \\
        &=- \frac{x-m}{\sigma^2+\bar{\sigma}_t^2} \mathcal{N}(x;m,(\sigma^2+\bar{\sigma}_t^2)I_d) .   
    \end{align*}
\eqref{eq:explicit score} follows directly from the above computations.
\end{proof}

\begin{lemma}[Gaussian iterates along the trajectory]\label{lem:gaussian iterates} Assume the data distribution has density given by \eqref{eq:gaussian}. Let $(\Bar{Y}_{\backt_j})$ be defined by \eqref{eqn:exact backward_SDE} with initial condition $\Bar{Y}_0\sim \mathcal{N}(0,\bar{\sigma}_T^2 I_d)$. Then for all $0\le j\le N$, $\Bar{Y}_{\backt_j}\sim \mathcal{N}(m_j, \Sigma_j )$ with 
\begin{align}
   m_j &= \frac{\bar{\sigma}_{T-\backt_{0}}^2+\bar{\sigma}_{T-\backt_{j}}^2}{\sigma^2+\bar{\sigma}_{T-\backt_0}^2} m , \label{eq:mean} \\
   \Sigma_j &= \bigg(\frac{(\sigma^2+\bar{\sigma}_{T-\backt_{j}}^2)^2\bar{\sigma}_T^2}{(\sigma^2+\bar{\sigma}_{T-\backt_{0}}^2)^2}+ \sum_{l=0}^{j-1} \frac{(\sigma^2+\bar{\sigma}_{T-\backt_{j}}^2)^2(\bar{\sigma}_{T-\backt_l}^2-\bar{\sigma}_{T-\backt_{l+1}}^2)}{(\sigma^2+\bar{\sigma}_{T-\backt_{l+1}}^2)^2}\bigg)I_d.\label{eq:covariance}
\end{align}
\end{lemma}
\begin{proof}[Proof of Lemma \ref{lem:gaussian iterates}] According to Lemma \ref{lem:explicit score}, \eqref{eqn:exact backward_SDE} can be written as
\begin{align*}
     d \Bar{Y}_t = -2\sigma_{T-t}^2 \frac{\Bar{Y}_{\backt_j}-m}{\sigma^2+\bar{\sigma}_{T-\backt_j}^2}  dt +\sqrt{2\sigma_{T-t}^2} d\Bar{W}_t,
\end{align*}
which implies that for any $j\in 0,1,\cdots,N-1$:
\begin{align}
    \Bar{Y}_{\backt_{j+1}}-\Bar{Y}_{\backt_j}& = -2\frac{\int_{\backt_j}^{\backt_{j+1}} \sigma_{T-t}^2 dt}{\sigma^2+\bar{\sigma}_{T-\backt_j}^2}\big(\Bar{Y}_{\backt_j}-m\big)+\sqrt{2\int_{\backt_j}^{\backt_{j+1}} \sigma_{T-t}^2 dt}  U_{j+1} \nonumber\\
    &= -\frac{\bar{\sigma}_{T-\backt_j}^2-\bar{\sigma}_{T-\backt_{j+1}}^2}{\sigma^2+\bar{\sigma}_{T-\backt_j}^2}\big(\Bar{Y}_{\backt_j}-m\big)+\sqrt{\bar{\sigma}_{T-\backt_j}^2-\bar{\sigma}_{T-\backt_{j+1}}^2} U_{j+1},\nonumber\\
   \implies \qquad  \Bar{Y}_{\backt_{j+1}} &= \big(1-\frac{\bar{\sigma}_{T-\backt_j}^2-\bar{\sigma}_{T-\backt_{j+1}}^2}{\sigma^2+\bar{\sigma}_{T-\backt_j}^2}\big)\Bar{Y}_{\backt_{j}}+ \frac{\bar{\sigma}_{T-\backt_j}^2-\bar{\sigma}_{T-\backt_{j+1}}^2}{\sigma^2+\bar{\sigma}_{\backt_j}^2} m +\sqrt{\bar{\sigma}_{T-\backt_j}^2-\bar{\sigma}_{T-\backt_{j+1}}^2} U_{j+1} \label{eq:induction}
\end{align}
where $(U_j)_{j=1}^N$ are i.i.d. standard Gaussian vectors in $\mathbb{R}^d$. Since $\Bar{Y}_{\backt_{0}}$ is Gaussian, by induction, we prove that $\Bar{Y}_{\backt_{j}}$ is Gaussian for all $j=1,\cdots,N$. Denote $\Bar{Y}_{\backt_{j}}\sim \mathcal{N}(m_j,\Sigma_j)$. According to \eqref{eq:induction} and the independence between $U_{j+1}$ and $\Bar{Y}_{\backt_{j}}$, we have
\begin{align*}
    & m_{j+1} = \big(1-\frac{\bar{\sigma}_{T-\backt_j}^2-\bar{\sigma}_{T-\backt_{j+1}}^2}{\sigma^2+\bar{\sigma}_{T-\backt_j}^2}\big)m_j+ \frac{\bar{\sigma}_{T-\backt_j}^2-\bar{\sigma}_{T-\backt_{j+1}}^2}{\sigma^2+\bar{\sigma}_{\backt_j}^2} m , \\
    \implies\qquad & m_{j+1}-m = \frac{\sigma^2+\bar{\sigma}_{T-\backt_{j+1}}^2}{\sigma^2+\bar{\sigma}_{T-\backt_j}^2} (m_j-m), \\
    \implies \qquad & m_{j} = \frac{\sigma^2+\bar{\sigma}_{T-\backt_{j}}^2}{\sigma^2+\bar{\sigma}_{T-\backt_0}^2} (m_0-m)+m = \frac{\bar{\sigma}_{T-\backt_{0}}^2+\bar{\sigma}_{T-\backt_{j}}^2}{\sigma^2+\bar{\sigma}_{T-\backt_0}^2} m.
\end{align*}
Again, according to \eqref{eq:induction} and the independence between $U_{j+1}$ and $\Bar{Y}_{\backt_{j}}$, we get a relation between consecutive covariance matrices:
\begin{align*}
    & \Sigma_{j+1} = \big(1-\frac{\bar{\sigma}_{T-\backt_j}^2-\bar{\sigma}_{T-\backt_{j+1}}^2}{\sigma^2+\bar{\sigma}_{T-\backt_j}^2}\big)^2 \Sigma_j+ (\bar{\sigma}_{T-\backt_j}^2-\bar{\sigma}_{T-\backt_{j+1}}^2)  I_d , \\
    \implies\qquad & \frac{\Sigma_{j+1}}{(\sigma^2+\bar{\sigma}_{T-\backt_{j+1}}^2)^2} = \frac{\Sigma_{j}}{(\sigma^2+\bar{\sigma}_{T-\backt_{j}}^2)^2} + \frac{\bar{\sigma}_{T-\backt_j}^2-\bar{\sigma}_{T-\backt_{j+1}}^2}{(\sigma^2+\bar{\sigma}_{T-\backt_{j+1}}^2)^2} I_d, \\
    \implies \qquad & \frac{\Sigma_{j}}{(\sigma^2+\bar{\sigma}_{T-\backt_{j}}^2)^2} = \frac{\Sigma_{0}}{(\sigma^2+\bar{\sigma}_{T-\backt_{0}}^2)^2}+\sum_{l=0}^{j-1} \frac{\bar{\sigma}_{T-\backt_l}^2-\bar{\sigma}_{T-\backt_{l+1}}^2}{(\sigma^2+\bar{\sigma}_{T-\backt_{l+1}}^2)^2}I_d ,\\
    \implies \qquad &\Sigma_j = \bigg(\frac{(\sigma^2+\bar{\sigma}_{T-\backt_{j}}^2)^2\bar{\sigma}_T^2}{(\sigma^2+\bar{\sigma}_{T-\backt_{0}}^2)^2}+ \sum_{l=0}^{j-1} \frac{(\sigma^2+\bar{\sigma}_{T-\backt_{j}}^2)^2(\bar{\sigma}_{T-\backt_l}^2-\bar{\sigma}_{T-\backt_{l+1}}^2)}{(\sigma^2+\bar{\sigma}_{T-\backt_{l+1}}^2)^2}\bigg)I_d.
\end{align*} 
\end{proof}

\section{Full error analysis}
\begin{proof}[Proof of Theorem~\ref{cor:full_error_analysis}]
We only need to deal with $E_S$. By applying the same schedules to training objective, we obtain
\begin{align*}
    E_S&=\sum_{j=0}^{N-1}\frac{\sigma^2_{t_{N-j}}}{w(t_{N-j})}\cdot w(t_{N-j})(t_{N-j}-t_{N-j-1})\frac{1}{\bar{\sigma}_{t_{N-j}}}\mathbb{E}_{X_0}\mathbb{E}_{\xi}\|\bar{\sigma}_{t_{N-j}} s(\theta;t_{N-j},X_{t_{N-j}})+\xi\|^2\\
    &\qquad+\sum_{j=0}^{N-1}\frac{\sigma^2_{t_{N-j}}}{w(t_{N-j})}\cdot w(t_{N-j})(t_{N-j}-t_{N-j-1})\cdot C\\
    &\le \max_j \frac{\sigma^2_{t_{N-j}}}{w(t_{N-j})}\cdot (\bar{\mathcal{L}}(W)+\bar{C}).
\end{align*}
Together with
\begin{align*}
    \bar{\mathcal{L}}(W^{(k)})+\bar{C}&\le |\bar{\mathcal{L}}(W^{(K)})-\lem(W^{(K)})+\lem(\theta^*)-\bar{\mathcal{L}}(\theta^*)|+ |\lem(W^{(K)})-\lem(\theta^*)|\\
    &\qquad+|\bar{\mathcal{L}}(\theta^*)-\bar{\mathcal{L}}(\theta_\mathcal{F})|+|\bar{\mathcal{L}}(\theta_\mathcal{F})+\bar{C}|,
\end{align*}
    we have the result.
\end{proof}

\section{Proofs for Section~\ref{subsec:weighting}}

\subsection{Proof of ``bell-shaped'' curve}
\begin{proof}[Proof of Proposition~\ref{prop:inverse_bell_shaped_loss}]

Fix $x_i,\xi_{ij},\bar{\sigma}_{t_N}$. By definition of the network $S(\theta;t_j,X_{ij})$, it is continuous with respect to $X_{ij}$.

For 1, $X_{ij}=x_i+\bar{\sigma}_{t_j}\xi_{ij}$, and thus $S(\theta;t_j,X_{ij})$ is also continuous w.r.t. $\bar{\sigma}_{t_j}$. Also, since $\bar{\sigma}_{t_j}\in[0,\bar{\sigma}_{t_N}]$, there exists $M_0>0$, s.t., 
\begin{align*}
    S(\theta;t_j,x_i+\bar{\sigma}_{t_j}\xi_{ij})\in[-M_0,M_0]^d.
\end{align*}
Then for any $\epsilon_1>0$, there exists $\delta=\frac{\epsilon_1}{\sqrt{d}M_0}>0$, s.t., $\forall\  0\le \bar{\sigma}_{t_j}< \delta_1$, we have
\begin{align*}
    \|\bar{\sigma}_{t_j}S(\theta;t_j,x_i+\bar{\sigma}_{t_j}\xi_{ij})+\xi_{ij}\|&\ge \|\xi_{ij}\|-\|\bar{\sigma}_{t_j}S(\theta;t_j,x_i+\bar{\sigma}_{t_j}\xi_{ij})\|\\
    &\ge \|\xi_{ij}\|-\sqrt{d}M_0\bar{\sigma}_{t_j}\\
    &\ge \|\xi_{ij}\|-\epsilon_1.
\end{align*}

For 2, by the positive homogeniety of ReLU,
\begin{align*}
    S(\theta;t_j,x_i+\bar{\sigma}_{t_j}\xi_{ij})=\bar{\sigma}_{t_j}S
    \left(\theta;t_j,\frac{x_i}{\bar{\sigma}_{t_j}}+\xi_{ij}\right).
\end{align*}
Consider $\bar{\sigma}_{t_j}\ge M$, for some $M>0$. Then
\begin{align}
\label{eqn:prop_2_M}
    \|\bar{\sigma}_{t_j}S(\theta;t_j,x_i+\bar{\sigma}_{t_j}\xi_{ij})+\xi_{ij}\|&=\left\|\bar{\sigma}_{t_j}^2S
    \left(\theta;t_j,\frac{x_i}{\bar{\sigma}_{t_j}}+\xi_{ij}\right)+\xi_{ij}\right\|\notag\\
    &\ge \left\|\bar{\sigma}_{t_j}^2S
    \left(\theta;t_j,\frac{x_i}{\bar{\sigma}_{t_j}}+\xi_{ij}\right)\right\|-\left\|\xi_{ij}\right\|\notag\\
    &\ge M^2\left\|S
    \left(\theta;t_j,\frac{x_i}{\bar{\sigma}_{t_j}}+\xi_{ij}\right)\right\|-\left\|\xi_{ij}\right\|.
\end{align}

For any $y\in\mathcal{D}(M)$, where $\mathcal{D}(M)=\{y\in\mathbb{R}^d:y_s\in[(\xi_{ij})_s-|(x_i)_s|/M,(\xi_{ij})_s+|(x_i)_s|/M],\ \forall\ s=1,\cdots,d\}$, 
\begin{align}
\label{eqn:prop_2_S}
    \|S(\theta;t_j,y)\|\ge \|S(\theta;t_j,\xi_{ij})\|-\|S(\theta;t_j,y)-S(\theta;t_j,\xi_{ij})\|.
\end{align}
Since $S$ is differentiable a.e., by the fundamental theorem of calculus, 
\begin{align*}
    S(\theta;t_j,y)-S(\theta;t_j,\xi_{ij})=\int_{\xi_{ij}}^y S_x'(\theta;t_j,x)dx. 
\end{align*}
Then
\begin{align}
\label{eqn:prop_2_difference}
    \| S(\theta;t_j,y)-S(\theta;t_j,\xi_{ij})\|\le\frac{1}{M}\cdot M_1,
\end{align}
where $M_1=\max_s (x_i)_s\cdot {\rm ess}\sup_{x\in\mathcal{D}(M_2)}\|S_x'(\theta;t_j,x)\|<+\infty$ for some fixed $0<M_2<M$.

Combining \eqref{eqn:prop_2_M},\eqref{eqn:prop_2_S}, and \eqref{eqn:prop_2_difference}, we have
\begin{align*}
    \|\bar{\sigma}_{t_j}S(\theta;t_j,x_i+\bar{\sigma}_{t_j}\xi_{ij})+\xi_{ij}\|&\ge M^2\left( \|S(\theta;t_j,\xi_{ij})\|-\frac{M_1}{M}  \right)-\|\xi_{ij}\|\\
    &= M^2\left( \|S(\theta;t_j,\xi_{ij})\|-\frac{M_1}{M}-\frac{\|\xi_{ij}\|}{M^2}  \right).
\end{align*}
Then $\forall\ \epsilon_2>0$, there exists $M=\max\left\{\frac{M_1+\sqrt{M_1^2+4\epsilon_2\|\xi_{ij}\|}}{2\epsilon_2},M_2\right\}>0$, s.t., when $\bar{\sigma}_{t_j}>M$, we have
\begin{align*}
    \|\bar{\sigma}_{t_j}S(\theta;t_j,x_i+\bar{\sigma}_{t_j}\xi_{ij})+\xi_{ij}\|\ge M^2(\|S(\theta;t_j,\xi_{ij})\|-\epsilon_2).
\end{align*}

\end{proof}

\subsection{Proof of optimal rate}
\begin{proof}[Proof of Corollary~\ref{cor:optimal_rate}]
  If $|f(\theta^{(k)};i,j)-f(\theta^{(k)};l,s)|\le \epsilon$ for all $i,j,l,s$ and $k>K$, then by Lemma~\ref{lem:lower_bound} and~\ref{lem:perturb_upper_lower_bound_grad}, we choose the maximum $f(\theta^{(k)};i,j)$ for the lower bound, which is of order $\mathcal{O}(\epsilon)$ away from the other $f(\theta^{(k)};i,j)'s$. Therefore, we can take $j^*(k)=\arg\max_jf(\theta^{(k)};i,j)$ and absorb the $\mathcal{O}(\epsilon)$ error in constant factors. Then the result naturally follows. 
\end{proof}

\subsection{Proof of comparisons of $E_S$}
Recall that the training objective of EDM is defined in the following
\begin{align*}
    \mathbb{E}_{\bar{\sigma}\sim p_{\rm train}}\mathbb{E}_{y,n}\lambda(\bar{\sigma})\|D_\theta(y+n;\bar{\sigma})-y\|^2=\frac{1}{Z_1}\int \frac{1}{\bar{\sigma}}e^{-\frac{(\log \bar{\sigma}-P_{\rm mean})^2}{2P_{\rm std}^2}}\cdot\frac{\bar{\sigma}^2+\sigma_{\rm data}^2}{\bar{\sigma}^2\sigma_{\rm data}^2}\cdot\bar{\sigma}^2\mathbb{E}_{X_0,\xi}\|\bar{\sigma} s(\theta;t,X_{t})+\xi\|^2\, d\bar{\sigma}.
\end{align*}
Let $\beta_j=C_1\beta_{\rm EDM}$, i.e.,
\begin{align*}
    \frac{w(t_j)}{\bar{\sigma}_{t_j}}(t_j-t_{j-1})&=C_1\cdot e^{-\frac{(\log \bar{\sigma}_{t_j}-P_{\rm mean})^2}{2P_{\rm std}^2}}\cdot\frac{\bar{\sigma}^2_{t_j}+\sigma_{\rm data}^2}{\bar{\sigma}^2_{t_j}\sigma_{\rm data}^2}\cdot\bar{\sigma}_{t_j}\\
    w(t_j)&=C_1\cdot\frac{\bar{\sigma}_{t_j}}{t_j-t_{j-1}} \cdot e^{-\frac{(\log \bar{\sigma}_{t_j}-P_{\rm mean})^2}{2P_{\rm std}^2}}\cdot\frac{\bar{\sigma}^2_{t_j}+\sigma_{\rm data}^2}{\bar{\sigma}^2_{t_j}\sigma_{\rm data}^2}\cdot\bar{\sigma}_{t_j}
\end{align*}

\textbf{EDM.} Consider $\bar{\sigma}_{t}=t$ and $t_j=\left( \bar{\sigma}_{\max}^{1/\rho}-(\bar{\sigma}_{\max}^{1/\rho}-\bar{\sigma}_{\min}^{1/\rho})\frac{N-j}{N} \right)^{\rho}$ for $j=0,\cdots,N$. Then
\begin{align*}
    {w(t_j)}&=C_1\cdot \frac{t_j}{t_j-t_{j-1}}\cdot e^{-\frac{(\log {t_k}-P_{\rm mean})^2}{2P_{\rm std}^2}}\cdot\frac{{t_j}^2+\sigma_{\rm data}^2}{{t_j}\sigma_{\rm data}^2}\\
    &=C_1\cdot \frac{1}{\left( \bar{\sigma}_{\max}^{1/\rho}-(\bar{\sigma}_{\max}^{1/\rho}-\bar{\sigma}_{\min}^{1/\rho})\frac{N-j}{N} \right)^{\rho}-\left( \bar{\sigma}_{\max}^{1/\rho}-(\bar{\sigma}_{\max}^{1/\rho}-\bar{\sigma}_{\min}^{1/\rho})\frac{N-j+1}{N} \right)^{\rho}}\\
    &\qquad\cdot e^{-\frac{\left(\log {\left( \bar{\sigma}_{\max}^{1/\rho}-(\bar{\sigma}_{\max}^{1/\rho}-\bar{\sigma}_{\min}^{1/\rho})\frac{N-j}{N} \right)^{\rho}}-P_{\rm mean}\right)^2}{2P_{\rm std}^2}}\cdot\frac{{\left( \bar{\sigma}_{\max}^{1/\rho}-(\bar{\sigma}_{\max}^{1/\rho}-\bar{\sigma}_{\min}^{1/\rho})\frac{N-j}{N} \right)^{2\rho}}+\sigma_{\rm data}^2}{\sigma_{\rm data}^2}
\end{align*}
Then the maximum of $\frac{\sigma_{t_j}^2}{w(t_j)}= \frac{{t_j}}{w(t_j)}$ appears at $j=N$
\begin{align*}
    \max_j\frac{\sigma_{t_j}^2}{w(t_j)}= \max_j\frac{{t_j}}{w(t_j)}=\frac{\bar{\sigma}_{\max}\sigma _{\text{data}}^2 e^{\frac{\left(P_{\text{mean}}-\log  \bar{\sigma }_{\max }\right){}^2}{2 P_{\text{std}}^2}}}{C_1(\bar{\sigma }_{\max }^2+\sigma _{\text{data}}^2)}\cdot\left( \bar{\sigma }_{\max }-\left(\bar{\sigma }_{\max }^{1/\rho }-\frac{\bar{\sigma }_{\max }^{1/\rho }-\bar{\sigma }_{\min }^{1/\rho }}{N}\right)^{\rho }\right)
\end{align*}

\textbf{\citet{song2020score}.} 
Consider $\bar{\sigma}_{t}=\sqrt{t}$ and $t_j=\bar{\sigma}_{\max}^2\left(\frac{\bar{\sigma}_{\min}^2}{\bar{\sigma}_{\max}^2}\right)^{\frac{N-j}{N}}$ for $j=0,\cdots,N$. Then
\begin{align*}
    w(t_j)&=C_1\cdot\frac{\sqrt{t_j}}{t_j-t_{j-1}}  \cdot e^{-\frac{(\log \sqrt{t_j}-P_{\rm mean})^2}{2P_{\rm std}^2}}\cdot\frac{{t_j}+\sigma_{\rm data}^2}{\sqrt{t_j}\sigma_{\rm data}^2}\\
    &=C_1\cdot\frac{1}{\bar{\sigma}_{\max}^2\left(\frac{\bar{\sigma}_{\min}^2}{\bar{\sigma}_{\max}^2}\right)^{\frac{N-j}{N}}-\bar{\sigma}_{\max}^2\left(\frac{\bar{\sigma}_{\min}^2}{\bar{\sigma}_{\max}^2}\right)^{\frac{N-j+1}{N}}} \cdot e^{-\frac{\left(\log \bar{\sigma}_{\max}\left(\frac{\bar{\sigma}_{\min}}{\bar{\sigma}_{\max}}\right)^{\frac{N-j}{N}}-P_{\rm mean}\right)^2}{2P_{\rm std}^2}}\cdot\frac{\bar{\sigma}_{\max}^2\left(\frac{\bar{\sigma}_{\min}^2}{\bar{\sigma}_{\max}^2}\right)^{\frac{N-j}{N}}+\sigma_{\rm data}^2}{\sigma_{\rm data}^2}
\end{align*}
Then
\begin{align*}
   \max_j \frac{\sigma_{t_j}^2}{w(t_j)}=\max_j\frac{1}{2w(t_j)}=\frac{\bar{\sigma}_{\max}\sigma _{\text{data}}^2 e^{\frac{\left(P_{\text{mean}}-\log  \bar{\sigma }_{\max }\right){}^2}{2 P_{\text{std}}^2}}}{C_1(\bar{\sigma }_{\max }^2+\sigma _{\text{data}}^2)}\cdot \frac{1}{2}\left(\bar{\sigma }_{\max }-\bar{\sigma }_{\max } \left(\frac{\bar{\sigma }_{\min }^2}{\bar{\sigma }_{\max }^2}\right){}^{1/N}\right)
\end{align*}

\section{Proofs for Section~\ref{subsec:time variance schedule}}

\subsection{Proof when $E_I+E_D$ dominates.}\label{append:EI+ED dominates}
Under the EDM choice of variance, $\bar{\sigma}_t=t$  for all $t\in [0,T]$, and study the optimal time schedule when $E_D+E_I$ dominates. First, it follows from Theorem~\ref{thm:VESDE_generation} that
\begin{align*}
    E_I+E_D \lesssim  & \frac{\mathrm{m}_2^2}{T^2}+d\sum_{j=0}^{N-1} \frac{\gamma_j^2}{(T-\backt_j)^2} \\
    &+ (\mathrm{m}_2^2+d)\big(\sum_{T-\backt_j\ge 1}  \frac{\gamma_j^2}{(T-\backt_j)^4}+\frac{\gamma_j^3}{(T-\backt_j)^5} + \sum_{T-\backt_j< 1}  \frac{\gamma_j^2}{(T-\backt_{j})^2}+\frac{\gamma_j^3}{(T-\backt_j)^3} \big) 
\end{align*}
Based on the above time schedule dependent error bound, we quantify the errors under polynomial time schedule and exponential time schedule.

\textbf{Polynomial time schedule.} we consider $T-\backt_j=(\delta^{1/a}+(N-j)h)^a$ with $h=\tfrac{T^{1/a}-\delta^{1/a}}{N}$ and $a>1$, $\gamma_j=a(\delta^{1/a}+(N-j-\vartheta)h)^{a-1}h$ for some $\vartheta\in (0,1)$. We have $\gamma_j/h\sim a(T-\backt_j)^{\frac{a-1}{a}}$ and 
\begin{align*}
    E_I+E_D \lesssim \frac{\mathrm{m}_2^2}{T^2} +\frac{d a^2 T^{\frac{1}{a}} }{\delta^{\frac{1}{a}}N}+(\mathrm{m}_2^2+d)\big( \frac{a^2T^{\frac{1}{a}}}{\delta^{\frac{1}{a}} N}+\frac{a^3T^{\frac{2}{a}}}{\delta^{\frac{2}{a}}N^2}\big)
\end{align*}
Therefore, to obtain $E_I+E_D\lesssim \varepsilon$, it suffices to require $T=\Theta(\tfrac{\mathrm{m}_2}{\varepsilon^{1/2}})$ and the iteration complexity 
\begin{align*}  N=\Omega\big(a^2\big(\frac{\mathrm{m}_2}{\delta \varepsilon^{\frac{1}{2}}}\big)^{\frac{1}{a}}\frac{\mathrm{m}_2^2+d}{\varepsilon}\big)
\end{align*}
For fixed $\mathrm{m}_2,\delta$ and $\varepsilon$, optimal value of $a$ that minimizes the iteration complexity $N$ is $a=\frac{1}{2}\ln(\tfrac{\mathrm{m}_2}{\delta\varepsilon^{1/2}})$. Once we let $\delta=\bar{\sigma}_{\min}$, $T=\bar{\sigma}_{\max}=\Theta(\tfrac{\mathrm{m}_2}{\varepsilon^{1/2}})$ and $a=\rho$, the iteration complexity is 
\[
N=\Omega\big( \frac{\mathrm{m}_2^2 \vee d}{d}\rho^2\big(\frac{\bar{\sigma}_{\max}}{\bar{\sigma}_{\min}}\big)^{1/\rho}\bar{\sigma}_{\max}^2\big),
\]
and it is easy to see that our theoretical result supports what's empirically observed in EDM that there is an optimal value of $\rho$ that minimizes the FID. 

\textbf{Exponential time schedule.} we consider $\gamma_j=\kappa (T-\backt_j) $ with $\kappa=\tfrac{\ln(T/\delta)}{N}$, we have
\begin{align*}
    E_I+E_D \lesssim \frac{\mathrm{m}_2^2}{T^2}+ \frac{d \ln(T/\delta)^2 }{N}+(\mathrm{m}_2^2+d) \big(\frac{\ln (T/\delta)^2}{N} +\frac{\ln (T/\delta)^3}{N^2}\big)
\end{align*}
Therefore, to obtain $E_I+E_D\lesssim \varepsilon$, it suffices to require $T=\Theta(\tfrac{\mathrm{m}_2}{\varepsilon^{\frac{1}{2}}})$ and the iteration complexity 
\begin{align*}
    N=\Omega\big(\frac{\mathrm{m}_2^2+d}{\varepsilon}\ln(\frac{\mathrm{m}_2}{\delta \varepsilon^{\frac{1}{2}}})^2\big)
\end{align*}
When $\mathrm{m}_2\le O(\sqrt{d})$, the exponential time schedule is asymptotic optimal, hence it is better than the polynomial time schedule when the initilization error and discretization error dominate. Once we let $\delta=\bar{\sigma}_{\min}$, $T=\bar{\sigma}_{\max}=\Theta(\tfrac{\mathrm{m}_2}{\varepsilon^{1/2}})$, the iteration complexity is 
\[
N=\Omega\big( \frac{\mathrm{m}_2^2 \vee d}{d}\ln \big(\frac{\bar{\sigma}_{\max}}{\bar{\sigma}_{\min}}\big)^{2}\bar{\sigma}_{\max}^2\big).
\]
Now we adopt the variance schedule in \citep{song2020score}, $\bar{\sigma}_t=\sqrt{t}$ for all $t\in [0,T]$, it follows from Theorem~\ref{thm:VESDE_generation} that
\begin{align*}
    E_I+E_D \lesssim \frac{\mathrm{m}_2^2}{T} + d\sum_{j=0}^{N-1} \frac{\gamma_j^2}{(T-\backt_j)^2}+(\mathrm{m}_2^2+d) \big(\sum_{T-\backt_j\ge 1}  \frac{\gamma_j^2}{(T-\backt_j)^3}+ \sum_{T-\backt_j< 1}  \frac{\gamma_j^2}{(T-\backt_{j})^2}\big) 
\end{align*}
\textbf{Polynomial time schedule.} we consider $T-\backt_j=(\delta^{1/a}+(N-j)h)^a$ with $h=\tfrac{T^{1/a}-\delta^{1/a}}{N}$ and $a>1$, $\gamma_j=a(\delta^{1/a}+(N-j-\vartheta)h)^{a-1}h$ for some $\vartheta\in (0,1)$. We have $\gamma_j/h\sim a(T-\backt_j)^{\frac{a-1}{a}}$ and 
\begin{align*}
    E_I+E_D \lesssim \frac{\mathrm{m}_2^2}{T} +\frac{d a^2 T^{\frac{1}{a}} }{\delta^{\frac{1}{a}}N}+(\mathrm{m}_2^2+d) \frac{a^2T^{\frac{1}{a}}}{\delta^{\frac{1}{a}} N}
\end{align*}
Therefore, to obtain $E_I+E_D\lesssim \varepsilon$, it suffices to require $T=\Theta(\tfrac{\mathrm{m}_2^2}{\varepsilon})$ and the iteration complexity 
\begin{align*}  
N=\Omega\big(a^2\big(\frac{\mathrm{m}_2^2}{\delta \varepsilon}\big)^{\frac{1}{a}}\frac{\mathrm{m}_2^2+d}{\varepsilon}\big)
\end{align*}
 Once we let $\delta=\bar{\sigma}_{\min}^2$, $T=\bar{\sigma}_{\max}^2=\Theta(\tfrac{\mathrm{m}_2^2}{\varepsilon})$ and $a=\rho$, the iteration complexity is 
\[
N=\Omega\big( \frac{\mathrm{m}_2^2 \vee d}{d}\rho^2\big(\frac{\bar{\sigma}_{\max}}{\bar{\sigma}_{\min}}\big)^{2/\rho}\bar{\sigma}_{\max}^2\big).
\]
Compared to exponential time schedule with the EDM choice of variance schedule, this iteration complexity is worse up to a factor $\big(\frac{\bar{\sigma}_{\max}}{\bar{\sigma}_{\min}}\big)^{1/\rho}$.

\textbf{Exponential time schedule.} we consider $\gamma_j=\kappa (T-\backt_j) $ with $\kappa=\tfrac{\ln(T/\delta)}{N}$, we have
\begin{align*}
    E_I+E_D \lesssim \frac{\mathrm{m}_2^2}{T}+ \frac{d \ln(T/\delta)^2 }{N}+(\mathrm{m}_2^2+d)\frac{\ln (T/\delta)^2}{N} 
\end{align*}
Therefore, to obtain $E_I+E_D\lesssim \varepsilon$, it suffices to require $T=\Theta(\tfrac{\mathrm{m}_2^2}{\varepsilon})$ and the iteration complexity 
\begin{align*}
    N=\Omega\big(\frac{\mathrm{m}_2^2+d}{\varepsilon}\ln(\frac{\mathrm{m}_2^2}{\delta \varepsilon})^2\big)
\end{align*}
 Once we let $\delta=\bar{\sigma}_{\min}^2$, $T=\bar{\sigma}_{\max}^2=\Theta(\tfrac{\mathrm{m}_2^2}{\varepsilon})$ and $a=\rho$, the iteration complexity is 
\[
N=\Omega\big( \frac{\mathrm{m}_2^2 \vee d}{d}\ln\big(\frac{\bar{\sigma}_{\max}}{\bar{\sigma}_{\min}}\big)^{2}\bar{\sigma}_{\max}^2\big).
\]
Compared to exponential time schedule with the EDM choice of variance schedule, this iteration complexity has the same dependence on dimension parameters $\mathrm{m}_2,d$ and the minimal/maximal variance $\bar{\sigma}_{\min},\bar{\sigma}_{\max}$.

\textbf{Optimality of Exponential time schedule.} For simplicity, we assume $\mathrm{m}_2^2=\mathcal{O}(d)$. Then under both schedules in \citep{karras2022elucidating} and \citep{song2020score},  $E_I$s only dependent on $T$, and are independent of the time schedule. Both $E_D$s satisfy
\[
E_D \lesssim d\sum_{j=0}^{N-1} \frac{\gamma_j^2}{(T-\backt_j)^2} \lesssim \varepsilon
\]
Let $\tau_j= \ln \big(\frac{T-\backt_{j}}{T-\backt_{j+1}}\big)\in (0,\infty)$. Then $\frac{\gamma_j}{T-\backt_j}=1-e^{-\tau_j}$ and $\sum_{\delta<T-\backt_j<T} \tau_j = \ln (T/\delta)$ is fixed. Since $x\mapsto (1-e^{-x})^2$ is convex on the domain $x\in (0,\infty)$, according the Jensen's inequality, $\sum_{\delta<T-\backt_j<T} \frac{\gamma_{j}^2}{(T-\backt_{j})^2}$ reaches its minimum when $\tau_j$ are constant-valued for all $j$, which implies the exponential schedule is optimal to minimize $E_D$, hence optimal to minimize $E_D+E_I$.


\end{document}